\theoremstyle{plain}
\newtheorem{theorem}{Theorem}[section]
\newtheorem{proposition}[theorem]{Proposition}
\newtheorem{lemma}[theorem]{Lemma}
\newtheorem{corollary}[theorem]{Corollary}
\theoremstyle{definition}
\newtheorem{assumption}[theorem]{Assumption}
\theoremstyle{remark}
\icmltitlerunning{Does Low Rank Adaptation Lead to Lower Robustness
  against Training-Time Attacks?}
\begin{document}

\twocolumn[
\icmltitle{Does Low Rank Adaptation Lead to Lower Robustness\\
  against Training-Time Attacks?}



\icmlsetsymbol{equal}{*}

\begin{icmlauthorlist}
\icmlauthor{Zi Liang}{polyu}
\icmlauthor{Haibo Hu}{polyu}
\icmlauthor{Qingqing Ye}{polyu}
\icmlauthor{Yaxin Xiao}{polyu}
\icmlauthor{Ronghua Li}{polyu}
\end{icmlauthorlist}

\icmlaffiliation{polyu}{The Hong Kong Polytechnic University, Hong
  Kong, China}

\icmlcorrespondingauthor{Haibo Hu}{haibo.hu@polyu.edu.hk}

\icmlkeywords{Large Language Models, Low Rank Adaptation, Robustness}

\vskip 0.3in
]



\printAffiliationsAndNotice{}  

\begin{abstract}

Low rank adaptation (LoRA) has emerged as a prominent technique for
fine-tuning large language models (LLMs) thanks to its superb 
efficiency gains over previous methods.
While extensive studies have examined the performance and structural
properties of LoRA, its behavior upon training-time attacks remain underexplored, posing significant
security risks.
In this paper, we theoretically investigate the security
implications of LoRA's low-rank structure during fine-tuning,
in the context of its robustness against data poisoning
and backdoor attacks.
We propose an analytical framework that models LoRA’s training dynamics, employs the neural tangent kernel to simplify the analysis
of the training process, and applies information theory to
establish connections between LoRA's low rank structure and its vulnerability
against training-time attacks.
Our analysis indicates that \textbf{LoRA exhibits better robustness to
backdoor attacks than full fine-tuning, while becomes more
vulnerable to untargeted data poisoning} due to its over-simplified
information geometry.
Extensive experimental evaluations have corroborated our theoretical
findings.
\end{abstract}

\section{Introduction}\label{sec:intro}

With the rapid growth in the parameter size of large language models (LLMs),
parameter-efficient fine-tuning (PEFT)~\cite{peft1,peft2} has gained increasing attention
in both research and industry communities. Among various PEFT strategies, low-rank
adaptation (LoRA)~\cite{lora} has emerged as the de facto standard for
fine-tuning LLMs thanks to its computational efficiency and minimal performance degradation.

To compare LoRA with full fine-tuning across various dimensions, recently many studies have emerged. For
example, researchers have investigated LoRA's expressive capacity~\cite{lora-expressive}, the
smoothness~\cite{ntk-local} of its convergence, the asymmetry~\cite{lora-asymmetry} in its
submatrices, the impact of initialization~\cite{lora-init}, and
so on~\cite{lora-dropout,lora-transformer,lora-survey}.

While these analyses have shed light on many properties of LoRA, one
important aspect, i.e., its potential security risks, remains largely
overlooked. Existing studies in this area
either use LoRA as a tool to facilitate backdoor attacks~\cite{lora-backdoor1,lora-backdoor2},
adversarial attacks~\cite{advlora}, and model stealing
attacks~\cite{lora-mea,lord}, or focus merely on the
benefits~\cite{dp-dylora} of LoRA in differential privacy and
federated learning. 
{\bf None of these works directly investigate the
security vulnerabilities inherent in LoRA itself}, which leaves behind potential hazards and vulnerabilities in LoRA-fine-tuned LLMs that are deployed across millions of devices~\cite{app-in}.

To fill this gap, in this paper, we attempt to answer the question {\bf whether
  LoRA-based fine-tuning is more vulnerable than full fine-tuning (FF)
  under mainstream training-time attacks} (e.g., data poisoning~\cite{poison-survey,poison-attack,dpa-dp}). We introduce the
concept of \emph{training-time robustness (TTR)} for characterizing a
model’s resistance to training-time attacks and
propose an analytical framework to theoretically examine the security
implications of LoRA’s low-rank structure.
The main challenges are two-folded. First, the TTR of a model
significantly depends on the specific training tasks and the complex
dynamics of the training process. Second, the effectiveness
of attacks is heavily influenced by hyperparameters
(e.g., learning rate) and attack strategies (e.g., poisoning rate or
backdoor triggers), both of which increase the complexity of analysis.

To address these challenges, we introduce two novel
simplifications when modeling the training dynamics of LoRA. First, we reformulate TTR by measuring
the similarity of gradients before and after data poisoning, which
enables a neural tangent kernel~\cite{ntk} (NTK)-based
analysis to simplify the modeling of a training procedure. Second, we
further introduce information theory~\cite{igb1,igb2} to connect the
model's structural properties with its TTR, thereby decoupling the
influences of different training datasets and attack methods.
Our findings suggest that {\bf LoRA’s low-rank structure typically results
in a smoother information geometry compared to FF,
generally indicating better training-time
robustness against backdoor attacks}. However, we also observe that
this simplicity might lead to
obvious {\bf performance degradation under poisoning attacks or
perturbations due to an oversimplified decision surface}. We further
quantify the key factors within LoRA that influence its TTR,
demonstrating that initialization variance and rank
are crucial determinants. Additionally, our analysis uncovers
previously unexplained characteristics of LoRA, including the
asymmetry and initialization of its submatrices, as well as the
effects of various hyperparameters, such as the learning
rate. 

We summarize our contributions as follows:

$\bullet$ We propose a novel theoretical framework to analyze the security
  of LoRA, revealing how its low-rank structure influences
  training-time robustness during fine-tuning. To our best
  knowledge, this is the first work to investigate
  LoRA’s intrinsic security vulnerabilities.

$\bullet$ We identify key factors within LoRA that influence its security
  and explain to what extent LoRA can be \emph{theoretically equivalent} to
  full fine-tuning from a security perspective. Based on this
  analysis, we offer practical guidance for improving LoRA's security.

$\bullet$ We provide a comprehensive evaluation of LoRA and FF under
  poisoning and backdoor attacks. Experimental results substantiate the correctness of these findings and
  explanations.

Following a top-down structure, this paper is organized as
follows. Section \ref{sec:notation} introduces the basic notations and
provides an overview of neural network training and the formulation of
LoRA. Section \ref{sec:def-ttr} defines the concept of training-time
robustness and highlights the analytical difficulties it
presents. Sections \ref{sec:simple-ntk} and \ref{sec:ig-intro} present
high-level perspectives on how NTK and information geometry contribute
to addressing these issues. Section \ref{sec:method} offers a
comprehensive analysis and discussion, followed by empirical
validation in Section \ref{sec:exper}.
Our source code is available at: \url{https://github.com/liangzid/LoRA-sSecurity}.



\section{Preliminary}

\subsection{Notations}\label{sec:notation}

\noindent
\textbf{Training Procedure.}
Without loss of generality, we begin our analysis with an $L$-layer artificial
neural network (ANN) $F_{\Theta}: \mathbb{R}^{n_{0}}\rightarrow
\mathbb{R}^{n_{L}}$ which aims to map the input data
$x\in\mathbb{R}^{n_{0}}$ into corresponding output representations $y\in\mathbb{R}^{n_{L}}$.
Given a training dataset
$\mathcal{D}=\{(x_{i},y_{i})\}_{i=1,2,...,N_{tr}}$ with $N_{tr}$ finite
training samples, we define the input matrix as $X=[x_{1,},x_{2},...,x_{N_{tr}}]\in
\mathbb{R}^{N_{tr}\times n_{0}}$ and the corresponding output matrix as
$Y=[y_{1},...,y_{N_{tr}}]\in\mathbb{R}^{N_{tr}\times n_{L}}$. The
objective of the neural network $F_{\Theta}$ is to learn the mapping
from $X$ to $Y$ by minimizing the following empirical risk function:
\begin{equation}
\label{eq:1}
\hat{\mathcal{L}}(\Theta;X,Y)=\sum_{i}^{N_{tr}}\mathcal{L}(F_{\Theta}(x_{i}),y_{i}),
\end{equation}
where $\Theta \in \mathbb{R}^{P}$ represents the set of $P$ learnable
parameters, and $\mathcal{L}$ is the loss function.

\noindent
\textbf{Linear Layers.}
Each layer $F^{(l)}:\mathbb{R}^{n_{l}}\rightarrow\mathbb{R}^{n_{l+1}}$ in
$F_{\Theta}$ with $l\in \{0,1,...,L-1\}$ is defined as a linear transformation:
\begin{equation}
\label{eq:2}
\begin{aligned}
&{y}^{(l)}(x_{i})=W^{(l)}\cdot x_{i}^{(l)}+b^{(l)},\\
&y_{a}^{(l)}=\mathbf{\sigma}({y}^{(l)}),
\end{aligned}
\end{equation}
where ${y}^{(l)}\in\mathbb{R}^{n_{l+1}}$ is the preactivation output, which maps the $l$-th layer's input
$x^{(l)}\in\mathbb{R}^{n_{l}}$ through the learnable matrix
$W^{(l)}\in\mathbb{R}^{N_{l+1}\times
  N_{l}}$. The activation function $\mathbf{\sigma}(\cdot)$ produces
the output $y_{a}^{(l)}\in\mathbb{R}^{n_{l+1}}$ at the $l$-th
layer.

\noindent
\textbf{LoRA Adapter.}
LoRA~\cite{lora} introduces a mechanism to reduce the number of trainable
parameters by freezing the original matrix $W^{(l)}$ and learn a
low-rank update $\Delta W^{(l)}$. This update is factorized as the product of two low-rank submatrices,
\begin{equation}
\label{eq:lora}
\Delta W^{(l)}=B^{(l)}A^{(l)},
\end{equation}
where $A^{(l)}\in \mathbb{R}^{r\times n_{l}}$ and
$B^{(l)}\in\mathbb{R}^{n_{l+1}\times r}$ are learnable matrices, and
$r\ll \min\{n_{l},n_{l+1}\}$.

We define the intermediate state in LoRA as
\begin{equation}
\label{eq:z-lora}
y_{I}^{(l)}(x_{i})=A^{(l)}\cdot x_{i}^{(l)}.
\end{equation}

\subsection{Definition of Training-Time Robustness}\label{sec:def-ttr}
The robustness of a trained model refers to its sensitivity to
perturbed inputs. For adversarial attacks, model robustness is
evaluated by its resistance to adversarial or noisy \emph{test}
samples~\cite{adv1,adv2,ducat}. Following the same idea, \emph{\textbf{training-time}
robustness (TTR)} is the model's resistance to
noisy, poisoned, or backdoor \emph{training}
samples~\cite{poison-survey}, that is, \emph{the sensitivity of a neural network’s parameter updates to perturbed training samples}.

Formally, given an ANN ${F}_{\Theta}$, its training-time
robustness can be quantified by the difference in parameter updates $\Delta\tilde{\Theta}-\Delta\Theta$
when the original training set $\mathcal{D}$ is replaced with a
noisy (or poisoned) dataset
$\tilde{\mathcal{D}}=(\tilde{X},\tilde{Y})$. Here, $\tilde{X}$ and
$\tilde{Y}$ denote two possible perturbations applied to the
input data $X$ and the learning target $Y$, and $\Delta\tilde{\Theta}$
denotes the corresponding parameter updates on
$\tilde{\mathcal{D}}$. To measure TTR, we define the following metric $\mathcal{M}$ based on
the norm of parameter differences,
\begin{equation}
\label{eq:metric}
\mathcal{M}(F_{\Theta},\mathcal{D},\tilde{\mathcal{D}})=\mathbb{E}_{(\mathcal{D},\tilde{\mathcal{D}})}\mathbb{E}_{t}||\Delta\Theta(t)-\Delta\tilde{\Theta}(t)||,
\end{equation}
where $||\cdot||$ denotes the norm, and $\Delta\Theta$ and
$\Delta\tilde{\Theta}$ denote the parameter updates obtain from
training with $\mathcal{D}$ and $\tilde{\mathcal{D}}$, respectively.

Unfortunately, it is impractical for us to
employ Equation \ref{eq:metric} to analyze the TTR of an ANN due to two
primary challenges: \emph{i)} the metric $\mathcal{M}$ in Equation \ref{eq:metric} varies
  dynamically across different training steps $t$, which introduces
  significant complexity for theoretical modeling; and \emph{ii)} the
  significance of each parameter differs
  substantially, implying that a uniform reduction of parameter
  updates based on a norm fails to capture their varying importance.

To address these two challenges, we first simplify Equation \ref{eq:metric} in a more tractable
form.

\subsection{Simplifying LoRA's Training Procedure with NTK}\label{sec:simple-ntk}

We adopt the concept of neural tangent kernel (NTK) to simplify the
analysis of TTR. NTK is a special form of kernel function, which is
defined as the inner product of gradients:
\begin{equation}
\label{eq:ntk}
K_{ntk}(x,x')=\nabla_{\theta}F(x;\theta)^{T}\nabla_{\theta}F(x';\theta).
\end{equation}
\begin{theorem}[\citet{ntk}]\label{th:ntk}
As the width of the neural
network approaches infinity, the NTK exhibits the following two key
properties:\\
$\bullet$ The NTK converges to a
  \textbf{deterministic} limiting kernel that depends only on three
  factors: \emph{i)} the variance of the parameter initialization,
  \emph{ii)} the neural network structure, and \emph{iii)} the selection of activation functions;
  \\
$\bullet$ NTK keeps \textbf{constant} through out each training step $t$.
\end{theorem}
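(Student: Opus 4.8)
The plan is to reduce Theorem~\ref{th:ntk} to two separate claims and prove each by an induction on the layer index $l$ combined with a law-of-large-numbers argument over the hidden units. Throughout I would adopt the standard NTK parametrization, rewriting each preactivation in Equation~\ref{eq:2} as $y^{(l)} = \tfrac{1}{\sqrt{n_l}} W^{(l)} y_a^{(l-1)} + b^{(l)}$ with all entries of $W^{(l)}$ and $b^{(l)}$ drawn i.i.d. from a fixed distribution of variance $\sigma_w^2$ (resp.\ $\sigma_b^2$) at initialization; the $1/\sqrt{n_l}$ factor is what makes each neuron contribute infinitesimally as $n_l\to\infty$ and is essential for a finite limit. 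The first claim is that $K_{ntk}$ in Equation~\ref{eq:ntk} converges to a deterministic kernel at $t=0$; the second is that this kernel does not move during training.

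For the convergence claim I would first record the fact that, in the sequential infinite-width limit, the preactivations $y^{(l)}(x)$ behave as centered Gaussian processes whose covariance $\Sigma^{(l)}(x,x')$ obeys the recursion
\begin{equation}
\Sigma^{(1)}(x,x') = \tfrac{\sigma_w^2}{n_0}\, x^\top x' + \sigma_b^2, \qquad
\Sigma^{(l+1)}(x,x') = \sigma_w^2\, \mathbb{E}_{f\sim\mathcal{N}(0,\Sigma^{(l)})}\!\big[\sigma(f(x))\,\sigma(f(x'))\big] + \sigma_b^2,
\end{equation}
which follows by applying the multivariate central limit theorem layer by layer. Next I would decompose $K_{ntk}(x,x')$ into per-layer contributions $\sum_l \nabla_{W^{(l)}}F(x)^\top \nabla_{W^{(l)}}F(x')$, expand each gradient via backpropagation, and observe that every such contribution is an empirical average over the $n_l$ hidden units of that layer. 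Applying the law of large numbers to each average --- using the Gaussian-process control of the forward pass and an analogous recursion $\dot\Sigma^{(l)}$ built from $\dot\sigma$ for the backward pass --- yields the deterministic limit
\begin{equation}
\Theta^{(L)}(x,x') = \Sigma^{(L)}(x,x') + \dot\Sigma^{(L)}(x,x')\,\Theta^{(L-1)}(x,x'),
\end{equation}
solved by induction with base case $\Theta^{(1)}=\Sigma^{(1)}$. Reading off this recursion shows the limit depends only on the three listed quantities: the initialization variances $\sigma_w^2,\sigma_b^2$, the architecture encoded in the widths and depth, and the activation $\sigma$ (entering through $\Sigma$ and $\dot\Sigma$), which establishes the first bullet.

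For constancy I would pass to the gradient-flow limit $\dot\Theta = -\nabla_\Theta \hat{\mathcal{L}}$ and bound the time derivative of the kernel. The key is the lazy-training phenomenon: with the $1/\sqrt{n_l}$ scaling, each weight moves by $O(1/\sqrt{n_l})$ over any finite horizon, so the activations and their Jacobians stay within $O(1/\sqrt{n_l})$ of their initial values. Differentiating $K_{ntk}$ along the flow and inserting these bounds gives $\big\|\tfrac{d}{dt}K_{ntk}\big\| = O(1/\sqrt{\min_l n_l})$, which vanishes in the limit and proves the second bullet.

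The hard part, in my view, is making the two law-of-large-numbers steps rigorous rather than heuristic. The summands in each per-unit average are \emph{not} independent --- they share the randomness of all lower layers --- so one cannot invoke the classical LLN directly; the clean route is the sequential limit (send $n_1\to\infty$, then $n_2$, and so on) together with conditioning arguments, while a simultaneous-width statement would instead require concentration inequalities that are uniform across layers. A second subtlety is that the constancy argument is mildly circular: the $O(1/\sqrt{n})$ drift bound presupposes that the kernel governing the dynamics is well-conditioned, which itself follows from the convergence claim, so the two must be closed by a bootstrap valid for all $t$ up to the training horizon simultaneously.
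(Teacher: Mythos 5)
This theorem is imported from \citet{ntk} and the paper offers no proof of its own; your sketch faithfully reproduces the standard Jacot et al.\ argument (GP forward recursion, per-layer LLN giving $\Theta^{(l)}=\Sigma^{(l)}+\dot\Sigma^{(l)}\Theta^{(l-1)}$, and the $O(1/\sqrt{n})$ lazy-training bound for constancy), and your caveats about the sequential limit and the bootstrap are exactly the right ones. Correct, and essentially the same approach as the cited source.
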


Intuitively, $K_{ntk}$ can be interpreted as an
\emph{unnormalized} angle (cosine similarity) between the gradient
descent directions of two input samples. This perspective inspires us
to implicitly measure how much the gradient updates change when a clean
sample ($x_{c}$) is poisoned ($\tilde{x}_{c}$), i.e.,
\begin{equation}\small
\label{eq:rts}
\begin{aligned}
\mathcal{M}'=\parallel\mathbb{E}_{(x_{c},\tilde{x}_{c})\sim (\mathcal{D},\tilde{\mathcal{D}})}K_{ntk}(x_{c},\tilde{x}_{c})\parallel.
\end{aligned}
\end{equation}

Similar to the role of the inner product in quantifying the similarity
between two vectors, $\mathcal{M}'$ effectively captures the degree of
approximated similarity in gradient updates between the original sample and its
perturbed counterparts. Specifically,
under the same pair $(x_{c},\tilde{x}_{c})$, a large value of
$K_{ntk}(x_{c},\tilde{x}_{c})$ indicates that the neural network experiences
more severe perturbations in its parameter updates, which reflects
lower training-time robustness. 


Comparing $\mathcal{M}'$ (Equation \ref{eq:rts}) with $\mathcal{M}$
(Equation \ref{eq:metric}), we observe that \textbf{the measurement of
TTR has been significantly simplified by introducing NTK}. First, the expectation \emph{w.r.t}
training step $t$ can be removed based on NTK's second property
(Theorem \ref{th:ntk}). Second, the analyzed variable, $\Delta \Theta \in
\mathbb{R}^{P}$, are transformed into $ K\in \mathbb{R}^{n_{L}}$,
which are more structured and \emph{homogeneous}, making the reduction of norm more meaningful.

With the simplified metric $\mathcal{M}'$, the theoretical analysis
can now be formalized as the comparison of $\mathcal{M}'$ between full fine-tuning (FF) and
LoRA, i.e., to determine whether the inequality $\mathcal{M}_{\text{ff}}'\leq
\mathcal{M}_{\text{lora}}'$ holds.

\subsection{Information Geometry: Bridging TTR with Training-Time
  Attacks (TTA)}\label{sec:ig-intro}

While the complexities related to $t$ and
parameter importance are now simplified by the NTK, it remains challenging
to model the poisoning set $\tilde{D}$ quantitatively.
For instance, Equation \ref{eq:rts} fails to
distinguish between different poisoning strategies, such as label
flipping or backdoor trigger injection. Besides,
$K_{ntk}(x_{c},\tilde{x}_{c})$ only captures attack behaviors at the sample level,
whereas most practical training-time attacks are drawn from a distribution of samples~\cite{poison-survey}.

To address these limitations, we introduce \emph{information geometry
  (IG)}~\cite{igb1,igb2} to
quantitatively model the robustness of specific model structures
against TTA. As demonstrated in previous
studies~\cite{fisher-adv1,fisher-adv2,geo-adv}, there is a strong
correlation between IG and robustness. So IG can measure the curvature of the parameter space, offering insights into
how an ANN adapts to unclean data.

First, we bridge NTK with \emph{Fisher information}~\cite{fisher},
one of the core concept of IG, by

\begin{theorem}\label{th:fisher-k}
  When the width of $F_{\Theta}$ approaches infinity, its Fisher information
  $\mathcal{I}_{\Theta}$ under $\mathcal{\tilde{D}}$ is equal to its
  weighted $\mathcal{M}'(\tilde{\mathcal{D}},\tilde{\mathcal{D}})$, i.e.,
  \begin{equation}\label{eq:fisher}
    \begin{aligned}
 I_{\theta}&= \mathbb{E}_{x\sim \tilde{\mathcal{D}}} \left[\nabla_{\theta}\mathcal{L}(x,\theta)^{T}\nabla_{\theta}\mathcal{L}(x,\theta)\right]\\
    &=\mathbb{E}_{\tilde{x}_{c}\in \tilde{\mathcal{D}}}\left[\nabla_{F_{\theta}}\mathcal{L}(x,\theta)^{T}K_{ntk}(x,x)\nabla_{F_{\theta}}\mathcal{L}(x,\theta)\right].
    \end{aligned}
  \end{equation}
\end{theorem}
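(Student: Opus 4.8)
The plan is to derive the identity from a single application of the chain rule, with the infinite-width hypothesis entering only at the end to certify that the kernel appearing in the quadratic form is the \emph{deterministic} NTK of Theorem~\ref{th:ntk}. The key structural fact is that the loss $\mathcal{L}(x,\theta)$ depends on the parameters $\theta$ only through the network output $F_{\theta}(x)\in\mathbb{R}^{n_{L}}$. Writing $J(x)=\nabla_{\theta}F(x;\theta)\in\mathbb{R}^{n_{L}\times P}$ for the Jacobian of the output with respect to the parameters, the chain rule factorizes the $P$-dimensional parameter gradient as $\nabla_{\theta}\mathcal{L}(x,\theta)=J(x)^{T}\,\nabla_{F_{\theta}}\mathcal{L}(x,\theta)$, cleanly separating the network geometry (the Jacobian $J$) from the task geometry (the output-space gradient $\nabla_{F_{\theta}}\mathcal{L}$).

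First I would substitute this factorization into the summand of the Fisher information. The scalar $\nabla_{\theta}\mathcal{L}^{T}\nabla_{\theta}\mathcal{L}=\|\nabla_{\theta}\mathcal{L}\|^{2}$ then becomes the quadratic form
\begin{equation*}
\nabla_{\theta}\mathcal{L}(x,\theta)^{T}\nabla_{\theta}\mathcal{L}(x,\theta)=\nabla_{F_{\theta}}\mathcal{L}(x,\theta)^{T}\,J(x)J(x)^{T}\,\nabla_{F_{\theta}}\mathcal{L}(x,\theta).
\end{equation*}
The next step is to recognize the Gram matrix $J(x)J(x)^{T}\in\mathbb{R}^{n_{L}\times n_{L}}$ as exactly the NTK evaluated on the diagonal, $K_{ntk}(x,x)$, which is the matrix-valued generalization of Equation~\ref{eq:ntk} to the multi-output map $F_{\Theta}$. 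Taking the expectation over $x\sim\tilde{\mathcal{D}}$ of both sides then yields the claimed equality, exhibiting $I_{\theta}$ as $\mathcal{M}'(\tilde{\mathcal{D}},\tilde{\mathcal{D}})$ reweighted, sample by sample, by the output-space loss gradients $\nabla_{F_{\theta}}\mathcal{L}$.

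Finally I would invoke Theorem~\ref{th:ntk} to give the statement its force: at any finite width the identity above is a tautological restatement of the chain rule, but only in the infinite-width limit does $K_{ntk}(x,x)$ converge to a deterministic kernel that is constant across training steps. This is what legitimizes treating it as a fixed \emph{weight} inside the expectation, independent of the random initialization and of $t$, and thus makes the right-hand side a well-defined structural quantity rather than a training-dependent one. I expect the main obstacle to be notational and conventional rather than computational: the NTK in Equation~\ref{eq:ntk} is written as a scalar inner product appropriate to a single output, so the proof must carefully lift it to the $n_{L}\times n_{L}$ matrix $J(x)J(x)^{T}$ and track the transpose conventions so that $K_{ntk}(x,x)$ is sandwiched between the two output-gradient vectors in the correct order. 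A secondary point requiring care is the identification of the empirical, loss-gradient form of $I_{\theta}$ used here with the classical score-function Fisher information, which coincide precisely when $\mathcal{L}$ is the negative log-likelihood.
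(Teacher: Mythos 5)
Your proposal is correct and follows essentially the same route as the paper's proof: factor the parameter gradient via the chain rule as the output-Jacobian times the output-space loss gradient, recognize the resulting Gram matrix $\nabla_{\theta}F_{\theta}^{T}\nabla_{\theta}F_{\theta}$ as $K_{ntk}(x,x)$, and take the expectation over $\tilde{\mathcal{D}}$. The only difference is presentational (your explicit Jacobian notation and the remarks on transpose conventions and on the infinite-width limit making the kernel deterministic); the paper additionally instantiates $\nabla_{F_{\theta}}\mathcal{L}$ for cross-entropy and MSE losses, but that is supplementary to the identity itself.
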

Proofs are in Appendix \ref{sec:proof-fisherk}.

Let $\lambda_{1},\lambda_{2},...,\lambda_{n_{L}}$ denote the $n_{L}$
eigenvalues of the Fisher information matrix $\mathcal{I}_{\Theta}$.
Then we can quantify the \emph{information bits} (IB) of the
model as
\begin{equation}
\label{eq:ib}
\mathbf{IB}=\frac{1}{2}\log \det_{\text{pseudo}}\mathcal{I}_{\Theta}=\frac{1}{2}\sum_{\lambda_{i}>0}^{n_{L}}{\lambda_{i}}.
\end{equation}

Third, we can measure the curvature of the fine-tuning manifold with
\emph{R\'{e}nyi entropy}~\cite{renyi}
\begin{equation}
\label{eq:renyi}
H_{\alpha}=\frac{1}{1-\alpha}\log \left(\sum_{i=1}^{n_{L}}\lambda_{i}^{\alpha}\right),
\end{equation}
where $\alpha \geq 0$ controls the norm formation on the
$\mathcal{I}_{\Theta}$. Specifically, $H_{1}$ corresponds to the
Shannon entropy\footnote{The proof is presented in Appendix \ref{sec:h1}.}, while $H_{\infty}=\max\{\lambda_{1},\lambda_{2},...,\lambda_{n_{L}}\}$.

Intuitively, a higher $\mathbf{IB}$ and $H_{\alpha}$ indicates a more complex
fine-tuning manifold of the model, which implicitly demonstrates a
higher function fitting ability.

Based on Equation \ref{eq:rts}, \ref{eq:ib}, and \ref{eq:renyi}, we now proceed to analyze the potential security
vulnerabilities introduced by LoRA’s fine-tuning process.

\section{Does LoRA Lead to \underline{LoRA} (\underline{Lo}wer Training Time \underline{R}obustness against \underline{A}ttacks)?}\label{sec:method}

\subsection{LoRA's NTK}

\noindent
\textbf{Modeling the Feedforward Procedure.}\label{sec:ff}
As shown in the previous research~\cite{gp}, the output function
$F_{\Theta}:\mathbb{R}^{n_{0}}\rightarrow \mathbb{R}^{n_{L}}$
converges to an independent, identity-centered \emph{Gaussian process (GP)}
under the infinite-width limit, i.e., as $n_{l} \rightarrow \infty$
for $l = 1, 2, \dots, n_{L-1}$.

Under this GP formulation, the covariance between outputs at layer $l$ can be expressed as:
\begin{equation}
\label{eq:5}
\begin{aligned}
  &\Sigma^{1}(X, X') = X^{T} X', \\
  &\Sigma^{l}(X, X') = \mathbb{E}_{f \sim \mathcal{N}(0, \Sigma^{l-1})}[\sigma(f(X))^{T} \sigma(f(X'))] \\
  &\quad= \sum_{j=1}^{n_{l-1}} \sigma(y_{j}^{(l-1)}(X))^{T} \sigma(y_{j}^{(l-1)}(X')),
\end{aligned}
\end{equation}
where $y_j^{(l-1)}(x)$ denotes the $j$-th element of the preactivation
vector $y^{(l-1)}$ at input $x$.

\noindent
\textbf{Modeling the Learning Procedure with NTK.}\label{sec:ntk}
Based on Equation \ref{eq:ntk} in Section \ref{sec:simple-ntk}, we now derive the NTK for an ANN $F$ under both FF and LoRA-based fine-tuning.
Specifically, the NTK of FF can be represented by
\begin{equation}
\label{eq:ffntk}\small
\begin{aligned}
&K_{\text{ff}}^{(1,k)}(x,x')=I_{n_{l}}\otimes \Sigma^{(1)}(x,x')=x^{T}\cdot x',\\
&K_{\text{ff}}^{(l,k)}(x,x')=K_{\text{ff}}^{(l-1,k)}(x,x')\dot{\Sigma}^{(l)}(x,x')+ \Sigma^{(l)}(x,x'),\\
\end{aligned}
\end{equation}
where $k=\{0,1,...,n_{l}-1\}$, $\otimes$ denotes the Kronecker product, and
\begin{equation}
\begin{aligned}
\label{eq:cov-ff}
&\dot{\Sigma}^{(l)}(x,x')=\mathbb{E}_{f\sim
\mathcal{N}(0,\Sigma^{(l-1)})}[\dot{\mathbf{\sigma}}(f(x))\dot{\mathbf{\sigma}}(f(x'))]\\
&=\dot{\mathbf{\sigma}}(y^{(l-1)}(x))^{T}\dot{\mathbf{\sigma}}(y^{(l-1)}(x')).
\end{aligned}
\end{equation}
$\dot{\sigma}(y^{(l-1)})=\frac{\partial \sigma(y^{(l-1)})}{\partial y}|_{y=y^{(l-1)}}$ denotes the partial derivative of the activation function $\sigma$.

As for LoRA, we can also derive its NTK functions as
\begin{lemma}[NTK of LoRA]\label{lemma:ntk-lora}
The neural tangent kernel of an $l$-layer ANN trained with
LoRA can be expressed as follows.
\begin{equation}\small
\label{eq:kntk}
\begin{aligned}
&K_{\text{LoRA}}^{(1,k)}(x,x')=K_{\text{ff}}^{(1,k)}\\
&K_{\text{LoRA}}^{(l,k)}(x,x')=K_{\text{LoRA}}^{(l-1,k)}(x,x') \dot{\Sigma}^{(l)}+\Sigma_{\text{LoRA}}^{(l)}(x,x'),
\end{aligned}
\end{equation}
where
\begin{equation*}
\label{eq:cov-ntk}
\begin{aligned}
&\Sigma^{(l)}_{\text{LoRA}}(x,x')=\sigma(y^{(l-1)}(x))^{T}A^{(l)~T}A^{(l)}\sigma(y^{(l-1)}(x')),
\end{aligned}
\end{equation*}
and
$W_{\text{LoRA}}^{(l)}=W_{0}^{(l)}+B^{(l)}A^{(l)}$ denotes the $l$-th
layer's weight matrix of LoRA.
\end{lemma}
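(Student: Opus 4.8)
The plan is to prove the two-term recursion by induction on the layer index $l$, reusing the structure of the full fine-tuning derivation in Equation \ref{eq:ffntk} but carefully tracking that under LoRA only the factors $A^{(l)}$ and $B^{(l)}$ of $W_{\text{LoRA}}^{(l)}=W_{0}^{(l)}+B^{(l)}A^{(l)}$ are trainable while $W_{0}^{(l)}$ stays frozen. First I would write the NTK of Equation \ref{eq:ntk} as the sum over all trainable coordinates $\theta$ of $\nabla_{\theta}F(x)^{T}\nabla_{\theta}F(x')$ and regroup this sum layer by layer, so that the contribution of layer $l$ is the inner product restricted to the entries of $A^{(l)}$ and $B^{(l)}$. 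Since the output covariance inherits the Kronecker structure $I_{n_{l}}\otimes\Sigma$ from the GP limit of Equation \ref{eq:5}, the kernel is identical across the $k$ output coordinates, so it suffices to track a single coordinate and recover the $(l,k)$ recursion.

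For the per-layer term I would introduce the back-propagated Jacobian $g^{(l)}(x)=\partial F(x)/\partial y^{(l)}(x)$ and apply the chain rule to the forward map $y^{(l)}(x)=W_{\text{LoRA}}^{(l)}\sigma(y^{(l-1)}(x))+b^{(l)}$. This gives $\partial F(x)/\partial B^{(l)}$ expressed through $g^{(l)}(x)$ and $A^{(l)}\sigma(y^{(l-1)}(x))$, and $\partial F(x)/\partial A^{(l)}$ expressed through $g^{(l)}(x)$, $B^{(l)}$, and $\sigma(y^{(l-1)}(x))$. Forming the inner products and summing over the hidden indices, the $B^{(l)}$ part factorizes into a back-propagated Jacobian inner product times the local activation factor $(A^{(l)}\sigma(y^{(l-1)}(x)))^{T}(A^{(l)}\sigma(y^{(l-1)}(x')))=\sigma(y^{(l-1)}(x))^{T}A^{(l)T}A^{(l)}\sigma(y^{(l-1)}(x'))$, which is exactly $\Sigma_{\text{LoRA}}^{(l)}$; the $A^{(l)}$ part instead produces the plain activation Gram $\Sigma^{(l)}$ contracted against the Gram matrix $B^{(l)}B^{(l)T}$ of the frozen-at-initialization factor.

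The key step is then to dispose of the $A^{(l)}$ contribution and to show that the propagation factor reduces to the full fine-tuning $\dot{\Sigma}^{(l)}$. Under the standard LoRA initialization $B^{(l)}=0$ with $A^{(l)}$ drawn at the same variance scale as the full weights, the $A^{(l)}$ contribution vanishes identically at $t=0$ because it is proportional to $B^{(l)}B^{(l)T}$; invoking the constancy of the NTK along training from Theorem \ref{th:ntk}, the kernel is frozen at its initialization value, so this term stays negligible and only $\Sigma_{\text{LoRA}}^{(l)}$ survives as the local term. The same initialization makes $W_{\text{LoRA}}^{(l)}=W_{0}^{(l)}$ coincide with the full random weight, so the forward pre-activations, and hence the back-propagated gradient inner products, concentrate in the infinite-width limit to precisely the $\Sigma^{(l)}$ and $\dot{\Sigma}^{(l)}$ of Equations \ref{eq:5} and \ref{eq:cov-ff}, exactly as in full fine-tuning. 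Feeding this into the layerwise sum yields the accumulated shallow-layer contribution $K_{\text{LoRA}}^{(l-1,k)}\dot{\Sigma}^{(l)}$ plus the local $\Sigma_{\text{LoRA}}^{(l)}$, and the base case at $l=1$ reduces to the shared input Gram $x^{T}x'$ since the first layer receives the raw data and carries no rank restriction, matching $K_{\text{ff}}^{(1,k)}$.

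I expect the main obstacle to be the infinite-width concentration argument that simultaneously (i) rigorously freezes the $A^{(l)}$ term and (ii) reduces the random back-propagated Jacobian inner products to the deterministic $\dot{\Sigma}^{(l)}$, since with $r$ held fixed the local factor $A^{(l)T}A^{(l)}$ remains a genuinely random rank-$r$ object rather than self-averaging, so care is needed to separate the non-concentrating $\Sigma_{\text{LoRA}}^{(l)}$ from the back-propagation factors that do concentrate. Most of this machinery can be imported from the GP limit of Equation \ref{eq:5} and the NTK constancy of Theorem \ref{th:ntk}, leaving the LoRA-specific bookkeeping of the $A^{(l)}$ and $B^{(l)}$ gradients as the genuinely new content.
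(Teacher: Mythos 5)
Your proposal matches the paper's own derivation in Appendix \ref{subsec:ntk-deduction} essentially step for step: the paper likewise splits the kernel into the $B^{(l)}$-gradient term (which yields $\Sigma^{(l)}_{\text{LoRA}}$ via the intermediate state $A^{(l)}\sigma(y^{(l-1)})$), the $A^{(l)}$-gradient term (which is proportional to $B^{(l,k)}B^{(l,k)T}$ and vanishes because $B^{(l)}$ is initialized to zero, then stays irrelevant by NTK constancy), and the back-propagated contribution from earlier layers (which reduces to $K^{(l-1,k)}\dot{\Sigma}^{(l)}$ since $W_{\text{LoRA}}^{(l)}=W_{0}^{(l)}$ at initialization and $W^{(l)T}W^{(l)}\rightarrow I$ in the infinite-width limit). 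Your closing caveat that $A^{(l)T}A^{(l)}$ is a non-self-averaging rank-$r$ object is consistent with the paper, which deliberately keeps that factor un-concentrated inside $\Sigma^{(l)}_{\text{LoRA}}$.
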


The detailed derivation of NTK functions for FF and LoRA
are in Appendix \ref{subsec:ntk-deduction}.

Based on two NTK functions $K_{\text{LoRA}}$ and $K_{\text{ff}}$,
along with the proposed metric $\mathcal{M}'$, we proceed to
compare the kernel functions between FF and LoRA.

\begin{figure*}[t]
  \centering
  \includegraphics[width=0.95\linewidth]{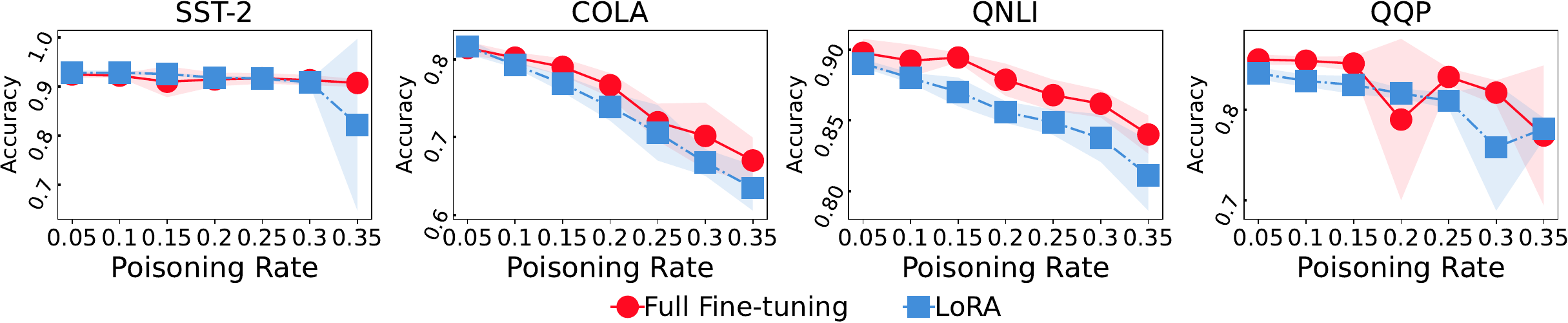}
\caption{Performance comparison between full fine-tuning and LoRA
  under untargeted poisoning attacks with varying poisoning rates. The
  curves show accuracy, and the shaded areas represent the standard
  deviation across multiple runs. More experiments are in Figure \ref{fig:full-poison-pr}.}\label{fig:poison-pr}
\end{figure*}

\begin{figure*}[t]
  \centering
  \includegraphics[width=0.95\linewidth]{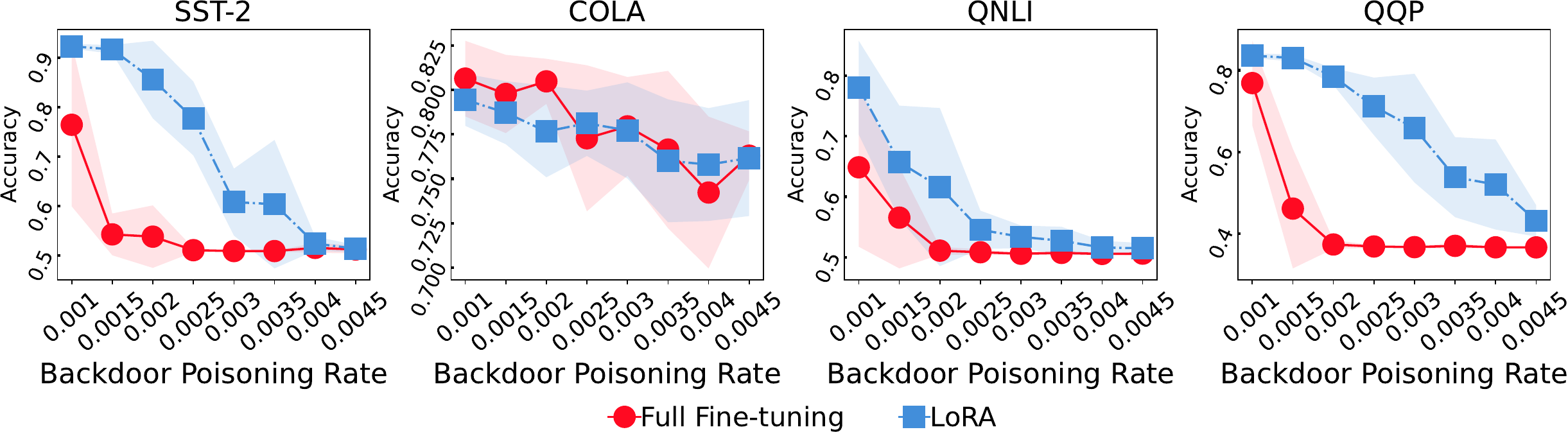}
  \caption{Performance comparison between full fine-tuning and LoRA
  under backdoor attacks with varying poisoning rates. Figure
  \ref{fig:full-backdoor-pr} exhibits the results on the four metrics.}
  \label{fig:backdoor-pr}
\end{figure*}

\subsection{The NTK Relationship between FF and LoRA}\label{sec:ana}

We begin our analysis by comparing the NTK of a single layer between
LoRA and FF.
\begin{assumption}[Only One Layer is Different (OOLD)]\label{assum:oold}
Given an $L$-layer neural network $F_{\Theta}$, OOLD assumes that during
training, the first $l-1$ layers remain identical for both FF and LoRA.
They only diverge at the $l$-th layer, which employs FF and LoRA,
respectively. We denote their NTK functions as $K_{\text{ff}}(x,x')$ and
$K_{\text{LoRA}}(x,x')$.
\end{assumption}

Under the OOLD assumption, we have
\begin{theorem}[NTK Relationship between FF and LoRA]
\label{th:rel}
  For an $l$-layer ANN with infinite width, the NTK functions of FF
  and LoRA at the $l$-th layer are related by the following expression:
\begin{equation}
\label{eq:rel}
K_{\text{LoRA}}^{(l,k)}=K_{\text{ff}}^{(l,k)}+\Delta_{r}^{(l)},
\end{equation}
where
\begin{equation*}
  \begin{aligned}
\Delta_{r}^{(l)}&= [ \mathbf{\sigma} ( y^{(l-1)}(x) )
]^{T}(A^{(l)~T}A^{(l)}-\\&\quad\quad\quad\quad\quad\quad\quad~I_{n_{l-1}\times n_{l-1}}) [ \mathbf{\sigma} ( y^{(l-1)}(x') ) ].
  \end{aligned}
\end{equation*}
\end{theorem}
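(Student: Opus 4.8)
The plan is to prove the identity by directly subtracting the two one-step NTK recursions and then invoking the OOLD assumption to collapse the propagated part of the recursion. First I would write down the two recursions side by side: for FF, Equation~\ref{eq:ffntk} gives $K_{\text{ff}}^{(l,k)}=K_{\text{ff}}^{(l-1,k)}\dot{\Sigma}^{(l)}+\Sigma^{(l)}$, while Lemma~\ref{lemma:ntk-lora} gives $K_{\text{LoRA}}^{(l,k)}=K_{\text{LoRA}}^{(l-1,k)}\dot{\Sigma}^{(l)}+\Sigma_{\text{LoRA}}^{(l)}$. Subtracting them yields
\[
\begin{aligned}
K_{\text{LoRA}}^{(l,k)}-K_{\text{ff}}^{(l,k)} &= \bigl(K_{\text{LoRA}}^{(l-1,k)}-K_{\text{ff}}^{(l-1,k)}\bigr)\dot{\Sigma}^{(l)} \\
&\quad + \bigl(\Sigma_{\text{LoRA}}^{(l)}-\Sigma^{(l)}\bigr).
\end{aligned}
\]

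Second, I would apply Assumption~\ref{assum:oold}. Because the first $l-1$ layers are identical for FF and LoRA under OOLD, the propagated kernels coincide, $K_{\text{LoRA}}^{(l-1,k)}=K_{\text{ff}}^{(l-1,k)}$, so the first term vanishes. I would also point out that the derivative covariance $\dot{\Sigma}^{(l)}$ from Equation~\ref{eq:cov-ff} depends only on the layer-$(l-1)$ preactivations $y^{(l-1)}$, which are shared under OOLD, so there is no ambiguity in which $\dot{\Sigma}^{(l)}$ appears. This reduces the difference to the residual covariance term $\Sigma_{\text{LoRA}}^{(l)}-\Sigma^{(l)}$.

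Third, I would identify this residual as the claimed quadratic form. The key observation is that the plain GP covariance $\Sigma^{(l)}(x,x')=\sum_{j}\sigma(y_{j}^{(l-1)}(x))^{T}\sigma(y_{j}^{(l-1)}(x'))$ from Equation~\ref{eq:5} is exactly the activation inner product $\sigma(y^{(l-1)}(x))^{T}I_{n_{l-1}\times n_{l-1}}\sigma(y^{(l-1)}(x'))$, i.e.\ an identity-weighted bilinear form, whereas $\Sigma_{\text{LoRA}}^{(l)}$ carries the weight $A^{(l)~T}A^{(l)}$ in the middle. Factoring out the common activation vectors then gives
\[
\Delta_{r}^{(l)}=\sigma(y^{(l-1)}(x))^{T}\bigl(A^{(l)~T}A^{(l)}-I_{n_{l-1}\times n_{l-1}}\bigr)\sigma(y^{(l-1)}(x')),
\]
which is precisely Equation~\ref{eq:rel}.

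I expect the main obstacle to be the careful identification of $\Sigma^{(l)}$ with the identity-weighted inner product so that the subtraction produces exactly $A^{(l)~T}A^{(l)}-I_{n_{l-1}\times n_{l-1}}$. This requires aligning the dimension and summation conventions between the GP covariance in Equation~\ref{eq:5} and the LoRA covariance in Lemma~\ref{lemma:ntk-lora}, and verifying that the frozen $W_{0}^{(l)}$ in $W_{\text{LoRA}}^{(l)}=W_{0}^{(l)}+B^{(l)}A^{(l)}$ contributes no trainable-gradient term, so that only the $A^{(l)}$-dependent factor survives in $\Sigma_{\text{LoRA}}^{(l)}$. Once these conventions are matched, the remaining manipulation is routine algebra.
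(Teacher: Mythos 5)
Your proposal is correct and follows essentially the same route as the paper's proof: write the one-step recursions for $K_{\text{ff}}^{(l,k)}$ and $K_{\text{LoRA}}^{(l,k)}$, use the OOLD assumption (together with the fact that $B^{(l)}=\mathbf{0}$ at initialization, so the preactivations and $\dot{\Sigma}^{(l)}$ coincide and the propagated term cancels), and read off the residual $\sigma(y^{(l-1)}(x))^{T}(A^{(l)\,T}A^{(l)}-I_{n_{l-1}\times n_{l-1}})\sigma(y^{(l-1)}(x'))$ from the difference of the two covariance terms. The only cosmetic difference is that the paper leans explicitly on the NTK constancy property to justify working at the initialization stage, which you implicitly assume when matching the conventions.
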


Let $M_{\Delta}^{(l)}$ denote the kernel matrix of $\Delta_{r}^{(l)}$, i.e.,
$M_{\Delta}^{(l)}=A^{(l)~T}A^{(l)}-I_{n_{l-1}\times n_{l-1}}$, then
the following theorem holds:
\begin{theorem}[$M_{\Delta}^{(l)}$'s Negative Semi-Definiteness]
  \label{th:delta-nsd}
  When the LoRA submatrix $A^{(l)}\in\mathbb{R}^{r\times n_{l-1}}$ is initialized with
  variance $\sigma_{a}^{2}$, $\sigma_{a}^{2}<1/n_{l-1}$, and
  $r\leq n_{l-1}$, then $M_{\Delta}^{(l)}$ is
  a \textbf{negative semi-definite} matrix, with $r$ eigenvalues
  equal to $\sigma_{a}^{2}\cdot n_{l-1}$ and $n_{l}-r$ eigenvalues
  equal to $0$.
\end{theorem}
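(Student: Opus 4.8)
The plan is to reduce the claim to a spectral statement about the Gram matrix $A^{(l)\,T}A^{(l)}$ followed by an identity shift. Abbreviating $A := A^{(l)} \in \mathbb{R}^{r\times n_{l-1}}$, we have $M_{\Delta}^{(l)} = A^{T}A - I_{n_{l-1}}$, so $M_{\Delta}^{(l)}$ and $A^{T}A$ share the same eigenvectors and the eigenvalues of $M_{\Delta}^{(l)}$ are exactly those of $A^{T}A$ shifted down by $1$. Thus it suffices to characterize the spectrum of $A^{T}A$ and verify that its largest eigenvalue is at most $1$. Note that the eigenvalue bookkeeping ``$r$ eigenvalues at $\sigma_{a}^{2}n_{l-1}$ and the rest at $0$'' is naturally a statement about $A^{T}A$ (before the shift), which then drives the semi-definiteness of $M_{\Delta}^{(l)}$.

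First I would pin down the spectrum of $A^{T}A$. Because $A$ has only $r$ rows, $A^{T}A \in \mathbb{R}^{n_{l-1}\times n_{l-1}}$ has rank at most $r$; combined with $r\le n_{l-1}$ this already forces at least $n_{l-1}-r$ zero eigenvalues. For the remaining (at most $r$) nonzero eigenvalues I would use the standard fact that $A^{T}A$ and $AA^{T}$ share their nonzero eigenvalues, and analyze the much smaller matrix $AA^{T}\in\mathbb{R}^{r\times r}$. Since the entries of $A$ are initialized i.i.d.\ with mean zero and variance $\sigma_{a}^{2}$, a direct entrywise computation gives $\mathbb{E}[AA^{T}] = \sigma_{a}^{2}\,n_{l-1}\,I_{r}$: the diagonal entries are sums of $n_{l-1}$ second moments, while the off-diagonal entries vanish by independence. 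Invoking the infinite-width regime of Theorem~\ref{th:rel} (large $n_{l-1}$ with $r$ fixed), concentration collapses $\tfrac{1}{n_{l-1}}AA^{T}$ onto $\sigma_{a}^{2}I_{r}$, so all $r$ nonzero eigenvalues of $A^{T}A$ equal $\sigma_{a}^{2}\,n_{l-1}$, giving $r$ eigenvalues at $\sigma_{a}^{2}n_{l-1}$ and $n_{l-1}-r$ at $0$.

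Finally I would apply the variance hypothesis. The condition $\sigma_{a}^{2} < 1/n_{l-1}$ gives $\sigma_{a}^{2}n_{l-1} < 1$, so every eigenvalue of $A^{T}A$ lies in $[0,1)$. Subtracting the identity sends the spectrum into $[-1,0)$: the $r$ active directions become $\sigma_{a}^{2}n_{l-1}-1 < 0$ and the $n_{l-1}-r$ null directions become $-1$. Hence $M_{\Delta}^{(l)}$ has only negative eigenvalues and is negative semi-definite (indeed negative definite under the strict inequality), which is the desired conclusion; in particular the correction term $\Delta_{r}^{(l)}$ of Theorem~\ref{th:rel} is then a negative semi-definite quadratic form, so $K_{\text{LoRA}}^{(l,k)} \le K_{\text{ff}}^{(l,k)}$.

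I expect the main obstacle to be the passage from the \emph{expected} Gram matrix $\mathbb{E}[AA^{T}] = \sigma_{a}^{2}n_{l-1}I_{r}$ to an \emph{exact} spectrum for a finite random realization. At fixed finite width the off-diagonal fluctuations and diagonal deviations of $AA^{T}$ are of relative order $O(\sqrt{r/n_{l-1}})$, so the eigenvalues are only approximately $\sigma_{a}^{2}n_{l-1}$ and the top eigenvalue could in principle exceed $1$ near the boundary of the variance condition. Making the ``$r$ eigenvalues equal to $\sigma_{a}^{2}n_{l-1}$'' statement literally correct therefore relies on the infinite-width limit ($n_{l-1}\to\infty$ with $r$ fixed), where these fluctuations vanish; I would either state the result as holding in expectation or asymptotically, or supply a concentration bound (e.g.\ the Bai--Yin control of $\lambda_{\max}(AA^{T})$) to secure the semi-definiteness for large but finite width.
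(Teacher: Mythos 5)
Your proposal follows essentially the same route as the paper's proof: the rank bound on $A^{(l)\,T}A^{(l)}$ supplies the $n_{l-1}-r$ zero eigenvalues, an expectation computation (you via $\mathbb{E}[AA^{T}]=\sigma_{a}^{2}n_{l-1}I_{r}$, the paper via $\mathbb{E}[\mathbf{tr}(A^{T}A)]=n_{l-1}r\sigma_{a}^{2}$ divided among $r$ identically distributed nonzero eigenvalues) pins the nonzero eigenvalues at $\sigma_{a}^{2}n_{l-1}<1$ in the infinite-width limit, and subtracting the identity yields negative semi-definiteness. Your explicit flagging of the expectation-versus-exact-spectrum gap and the need for concentration is a fair refinement of the paper's ``all nonzero eigenvalues converge to $n_{l-1}\sigma^{2}$'' step, and your observation that the stated eigenvalue bookkeeping describes $A^{(l)\,T}A^{(l)}$ rather than $M_{\Delta}^{(l)}$ itself is the correct reading of the theorem.
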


Theorem \ref{th:delta-nsd} establishes a foundation for comparing FF
and LoRA's training-time robustness from an information geometry
perspective, which will be detailed in Section
\ref{sec:info-ana}. Based on Theorem \ref{th:delta-nsd}, we reach the following corollary.
\begin{corollary}[Ideal Full Rank Adaptation]\label{th:full-rank}
When $n_{l-1}\rightarrow \infty$, the kernel matrix $M_{\Delta}^{(l)}$
strictly converges to
$\mathbf{0}$, i.e., $K_{\text{LoRA}}^{(l)}(x,x)\equiv
K_{\text{ff}}^{(l)}(x,x)$ if $r=n_{l-1}$ and the initialization variance
satisfies $\sigma^{2}_{a}=1/n_{l-1}$.
\end{corollary}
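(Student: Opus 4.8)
The plan is to reduce the corollary to the vanishing of the correction term supplied by Theorem~\ref{th:rel}. Since that theorem gives $K_{\text{LoRA}}^{(l,k)}=K_{\text{ff}}^{(l,k)}+\Delta_{r}^{(l)}$ with $\Delta_{r}^{(l)}=[\sigma(y^{(l-1)}(x))]^{T}M_{\Delta}^{(l)}[\sigma(y^{(l-1)}(x'))]$ and $M_{\Delta}^{(l)}=A^{(l)~T}A^{(l)}-I_{n_{l-1}\times n_{l-1}}$, it suffices to show that $M_{\Delta}^{(l)}$ collapses to $\mathbf{0}$ and that this forces the bilinear form $\Delta_{r}^{(l)}$ to $0$. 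First I would substitute the hypotheses into the spectral description of Theorem~\ref{th:delta-nsd}: the nonzero eigenvalues of $A^{(l)~T}A^{(l)}$ equal $\sigma_{a}^{2}n_{l-1}$ with multiplicity $r$, the rest being $0$. Setting $r=n_{l-1}$ makes every one of the $n_{l-1}$ eigenvalues nonzero, and $\sigma_{a}^{2}=1/n_{l-1}$ makes each equal to $\sigma_{a}^{2}n_{l-1}=1$; hence $A^{(l)~T}A^{(l)}=I_{n_{l-1}\times n_{l-1}}$ and $M_{\Delta}^{(l)}=\mathbf{0}$ in the idealized spectrum, so that $\Delta_{r}^{(l)}=0$ and $K_{\text{LoRA}}^{(l,k)}(x,x)\equiv K_{\text{ff}}^{(l,k)}(x,x)$.

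To turn this idealized spectrum into an honest limit statement as $n_{l-1}\to\infty$, I would establish entrywise concentration of $M_{\Delta}^{(l)}$ directly from the initialization. With the entries of $A^{(l)}$ i.i.d., mean zero and variance $\sigma_{a}^{2}=1/n_{l-1}$, each diagonal entry $(A^{(l)~T}A^{(l)})_{ii}=\sum_{k=1}^{r}A_{ki}^{2}$ has mean $r\sigma_{a}^{2}=1$ and variance of order $1/n_{l-1}$, while each off-diagonal entry ($i\neq j$) has mean $0$ and variance $r\sigma_{a}^{4}=1/n_{l-1}$. Thus every entry of $M_{\Delta}^{(l)}$ converges to $0$ in mean square, which is the precise sense intended by ``$M_{\Delta}^{(l)}$ strictly converges to $\mathbf{0}$.''

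The hard part, which I expect to be the main obstacle, is passing from this entrywise convergence to the vanishing of $\Delta_{r}^{(l)}=u^{T}M_{\Delta}^{(l)}v$, where $u=\sigma(y^{(l-1)}(x))$ and $v=\sigma(y^{(l-1)}(x'))$ lie in $\mathbb{R}^{n_{l-1}}$ and therefore \emph{grow} with the width. One cannot simply bound $|\Delta_{r}^{(l)}|\le\|M_{\Delta}^{(l)}\|_{\mathrm{op}}\,\|u\|\,\|v\|$, because the operator norm of $A^{(l)~T}A^{(l)}-I$ does \emph{not} vanish when $r=n_{l-1}$ (the Marchenko--Pastur spectrum spreads over a macroscopic interval), so genuine spectral collapse must be distinguished from mere entrywise smallness. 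The clean route is to exploit independence: under Assumption~\ref{assum:oold} the activations $u,v$ are fixed by the first $l-1$ layers and are thus independent of the freshly initialized $A^{(l)}$. Conditioning on $u,v$ gives $\mathbb{E}[\Delta_{r}^{(l)}\mid u,v]=(r\sigma_{a}^{2}-1)\,u^{T}v=0$ exactly, together with $\mathrm{Var}(\Delta_{r}^{(l)}\mid u,v)=O(\|u\|^{2}\|v\|^{2}/n_{l-1})$. Because the term $\Sigma^{(l)}(x,x')=u^{T}v$ driving $K_{\text{ff}}$ in Equation~\ref{eq:5} carries the same scale, the statement to pin down is that the \emph{relative} correction $\Delta_{r}^{(l)}/K_{\text{ff}}^{(l,k)}$ tends to $0$; under the standard width-normalization of the Gaussian-process covariances (where $\|u\|^{2}/n_{l-1}$ stays bounded) this ratio vanishes, yielding $K_{\text{LoRA}}^{(l,k)}(x,x)\equiv K_{\text{ff}}^{(l,k)}(x,x)$ up to lower-order fluctuations. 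Getting this normalization right is exactly where the subtlety lies.
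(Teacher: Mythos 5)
Your proposal is correct, and its first half is exactly the paper's argument: the paper proves the corollary by computing $\mathbb{E}[(A^{(l)T}A^{(l)})_{p,q}]$ entrywise, obtaining $0$ off the diagonal and $n_{l-1}\sigma^{2}=1$ on the diagonal when $r=n_{l-1}$ and $\sigma_{a}^{2}=1/n_{l-1}$, and then simply asserts ``$A^{(l)T}A^{(l)}\rightarrow I$'' as $n_{l-1}\rightarrow\infty$, with no mode of convergence specified and no further analysis. Where you genuinely diverge is in your third paragraph, and that divergence is to your credit: you correctly observe that for a square matrix with i.i.d.\ entries of variance $1/n_{l-1}$ the operator norm of $A^{(l)T}A^{(l)}-I$ does \emph{not} vanish (the Marchenko--Pastur spectrum at aspect ratio $1$ fills $[0,4]$), so the paper's literal claim that $M_{\Delta}^{(l)}$ ``strictly converges to $\mathbf{0}$'' cannot be read as spectral convergence, and entrywise convergence alone does not control the bilinear form $u^{T}M_{\Delta}^{(l)}v$ when $u,v$ grow with the width. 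Your repair --- conditioning on the activations (which under the OOLD assumption are independent of the freshly initialized $A^{(l)}$), noting that $\mathbb{E}[\Delta_{r}^{(l)}\mid u,v]=(r\sigma_{a}^{2}-1)u^{T}v=0$ exactly, and bounding the conditional variance by $O(\|u\|^{2}\|v\|^{2}/n_{l-1})$ so that the correction is lower order relative to $\Sigma^{(l)}(x,x')=u^{T}v$ --- is the argument the paper would need to make its limit statement precise, and it is absent from the paper. In short, you reproduce the paper's computation and then identify and close a real gap in it; the only caveat is that your final step leans on a width-normalization of the covariances that the paper's Equation~\ref{eq:5} does not explicitly impose, so the ``relative correction vanishes'' conclusion should be stated as holding to leading order in expectation, which is evidently the sense in which the paper intends the corollary.
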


The proofs of Theorem \ref{th:rel}, \ref{th:delta-nsd},
and Corollary \ref{th:full-rank}
are presented in Appendix \ref{subsec:proof-threl} and Appendix \ref{subsec:proof-rel-ineq}.

Corollary \ref{th:full-rank} offers a key insight into the
relationship between the LoRA
and FF. Specifically, it shows that when LoRA achieves a full rank and the
weight matrices are initialized with a specific variance, the
expected learning effectiveness of LoRA matches that of FF.
In other words, under these conditions, both methods exhibit
\textbf{equivalent} expressiveness in terms of their NTK functions.
Moreover, Theorem \ref{th:delta-nsd} reveals that both the rank
and the initialization variance significantly influence the properties of
$M_{\Delta}^{(l)}$, which raises several critical questions: \emph{i)}
does LoRA exhibit higher or lower TTR than FF? \emph{ii)} how do the
rank and initialization variance affect its TTR? \emph{iii)} under
what conditions does a full-rank LoRA offer equivalent TTR to FF
against training-time attacks?

\subsection{Theoretical Analysis}\label{sec:info-ana}

\noindent
\textbf{Key Results: LoRA Exhibits Fewer Information Bits and Smoother Information
  Geometry than FF, Leading to Higher Training-Time Robustness.}

To answer these questions, we begin our theoretical analysis by computing the $\mathbf{IB}$ and the $H_{\alpha}$
for both LoRA and FF.

\begin{theorem}[$\mathbf{IB}_{\text{ff}}\geq\mathbf{IB}_{\text{LoRA}}$
  \& $H_{\alpha \text{ff}}\geq H_{\alpha \text{LoRA}}$]\label{th:ib-leq}
  The information bits and the R\'enyi entropy of LoRA are always \textbf{smaller} than those of FF if $M^{(l)}_{\Delta}$ is a negative
  semi-definite matrix, i.e., $r\leq n_{l-1}$ and $\sigma^{2}\leq 1/n_{l-1}$.
\end{theorem}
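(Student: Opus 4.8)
The plan is to reduce both functionals $\mathbf{IB}$ and $H_{\alpha}$ to a comparison of the spectra of the two Fisher information matrices $\mathcal{I}_{\Theta}^{\text{ff}}$ and $\mathcal{I}_{\Theta}^{\text{LoRA}}$, and to obtain that comparison from a Loewner (positive semi-definite) ordering of the matrices themselves. First I would invoke Theorem~\ref{th:fisher-k} to write each Fisher matrix as the NTK sandwiched by the output-space gradients, $\mathcal{I}_{\Theta}=\mathbb{E}_{x\sim\tilde{\mathcal{D}}}\big[\nabla_{F}\mathcal{L}(x)^{T}K_{\text{ntk}}(x,x)\nabla_{F}\mathcal{L}(x)\big]$, so that the two settings differ only through the kernel $K_{\text{ntk}}(x,x)$ appearing in the middle.

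Next I would establish $\mathcal{I}_{\Theta}^{\text{LoRA}}\preceq\mathcal{I}_{\Theta}^{\text{ff}}$. Under Assumption~\ref{assum:oold} the first $l-1$ layers coincide, so Theorem~\ref{th:rel} gives $K_{\text{LoRA}}^{(l,k)}(x,x)=K_{\text{ff}}^{(l,k)}(x,x)+\Delta_{r}^{(l)}(x,x)$ with $\Delta_{r}^{(l)}(x,x)=v(x)^{T}M_{\Delta}^{(l)}v(x)$ and $v(x)=\sigma(y^{(l-1)}(x))$. By Theorem~\ref{th:delta-nsd}, the hypotheses $r\le n_{l-1}$ and $\sigma^{2}\le 1/n_{l-1}$ make $M_{\Delta}^{(l)}$ negative semi-definite, so the scalar $\Delta_{r}^{(l)}(x,x)\le 0$ for every input $x$. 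Substituting the kernel decomposition into the sandwiched form then yields $\mathcal{I}_{\Theta}^{\text{LoRA}}-\mathcal{I}_{\Theta}^{\text{ff}}=\mathbb{E}_{x}\big[\Delta_{r}^{(l)}(x,x)\,G(x)\big]$, where $G(x)=\nabla_{F}\mathcal{L}(x)^{T}\nabla_{F}\mathcal{L}(x)\succeq 0$ is a Gram matrix. Each integrand is a nonpositive scalar times a positive semi-definite matrix, hence negative semi-definite, and an expectation of negative semi-definite matrices is negative semi-definite, giving the claimed ordering.

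Finally I would convert the matrix ordering into the two inequalities. By the Courant--Fischer min-max principle (Weyl monotonicity), $\mathcal{I}_{\Theta}^{\text{LoRA}}\preceq\mathcal{I}_{\Theta}^{\text{ff}}$ forces $\lambda_{i}^{\text{LoRA}}\le\lambda_{i}^{\text{ff}}$ for every $i$ once the eigenvalues are sorted. Since $\mathbf{IB}$ in Equation~\ref{eq:ib} is a sum over the positive spectrum of a quantity nondecreasing in each eigenvalue, eigenvalue domination gives $\mathbf{IB}_{\text{LoRA}}\le\mathbf{IB}_{\text{ff}}$; applying the same monotonicity to the spectral sum $\sum_{i}\lambda_{i}^{\alpha}$ inside $H_{\alpha}$ (Equation~\ref{eq:renyi}) gives $H_{\alpha,\text{LoRA}}\le H_{\alpha,\text{ff}}$.

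The main obstacle is the middle step: making the Loewner ordering rigorous through the output-space/tensor structure of the NTK. One must ensure that the scalar correction $\Delta_{r}^{(l)}(x,x)$ enters $\mathcal{I}_{\Theta}$ as the block $\Delta_{r}^{(l)}(x,x)\,I_{n_{L}}$ before being sandwiched, so that a scalar sign statement genuinely upgrades to a matrix inequality rather than merely a trace comparison; the infinite-width Gaussian-process structure of Section~\ref{sec:ff} is what guarantees this block form. A secondary subtlety is the monotonicity of $H_{\alpha}$: one has to track the sign of the prefactor $1/(1-\alpha)$ so that the functional is genuinely nondecreasing in the eigenvalues over the intended range of $\alpha$, which is immediate for $\mathbf{IB}$ and for $0\le\alpha\le 1$, and for larger $\alpha$ is most cleanly handled as a majorization statement on the sorted spectra.
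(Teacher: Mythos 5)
Your proposal follows essentially the same route as the paper's own proof: invoke Theorem~\ref{th:fisher-k} to express the Fisher matrices through the NTK, use Theorem~\ref{th:rel} and the negative semi-definiteness of $M_{\Delta}^{(l)}$ (Theorem~\ref{th:delta-nsd}) to get the quadratic-form/Loewner ordering $\mathcal{I}_{\theta\,\text{LoRA}}\preceq\mathcal{I}_{\theta\,\text{FF}}$, and then pass to eigenvalues to bound $\mathbf{IB}$ and $H_{\alpha}$. If anything you are more careful than the paper at the last step --- you correctly restrict the eigenvalue comparison to \emph{sorted} spectra via Weyl monotonicity, and you flag the sign of the prefactor $1/(1-\alpha)$ for $\alpha>1$, a point the paper's proof silently glosses over.
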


The proof is in Appendix \ref{sec:proof-ib-leq}.

The conditions stated in Theorem \ref{th:ib-leq} are typically satisfied in practice. First, the rank is typically chosen
to be significantly ``smaller'' than the original dimension $n_{l-1}$ to reduce
computation costs. Second, the initialization variance of
LoRA's matrix is generally set to a value smaller than $1/n_{l-1}$\footnote{Specifically, it is set to $1/(3\cdot n_{l-1})$ in both the \href{https://github.com/microsoft/LoRA/blob/a0a92e0f26c067cf94747bdbf1ce73793fa44d19/loralib/layers.py\#L124}{official implementation} and the standard libraries (e.g. \href{https://github.com/huggingface/peft/blob/1e8bc60492c5873b7e3e23909fa82be654bcf845/src/peft/tuners/lora/layer.py\#L184}{peft}~\cite{peft-lib}).}. As a result, in most practical
scenarios, LoRA is expected to exhibit lower information bits (low
$\mathbf{IB}$) and smoother information surface ($H_{\alpha}$)
than FF.

Note that Theorem \ref{th:ib-leq} appears to contradict with some
existing research~\cite{lora-expressive} that suggests when $r$
exceeds a certain threshold,
the expressivity of LoRA becomes equivalent to that of FF.
Such contradiction can be justified because our theorem focuses on the IG
during training process, i.e., on ``how can the model's parameters possibly evolve
throughout training'' as opposed to ``the expressiveness of the final 
trained models''.

Incorporating the definitions of $\mathcal{M}'$ and
$\mathcal{I}_{\Theta}$ to Theorem \ref{th:ib-leq}, we can conclude that $\tilde{\mathcal{D}}$ brings more significant parameter updates in FF than in LoRA, which means that LoRA does exhibits
\textbf{higher} training-time robustness than FF under the conditions of Theorem
\ref{th:ib-leq}.
This discovery also coincides with some previous studies, such as the
weak-to-strong alignments~\cite{w2s-oai}.

Unfortunately, this increased TTR is at the
cost of reduced information bits, which prompts a critical question ---  what
is the tax for LoRA's enhanced TTR? 

\begin{figure}[t]
  \centering
  \includegraphics[width=0.85\linewidth]{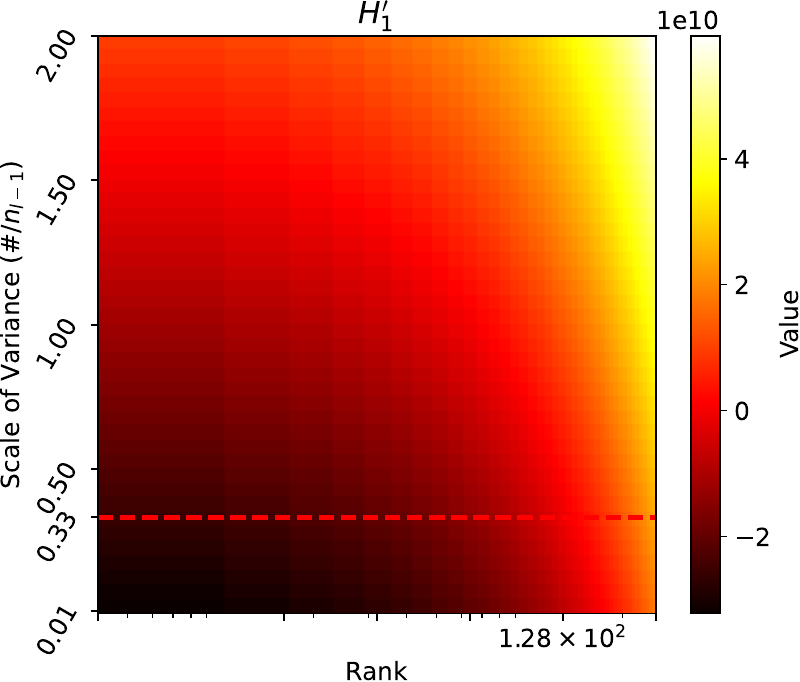}
  \caption{Visualization of the Shannon entropy $H_{1}'$ under different
    ranks and variance scales. Brighter color points indicate
    higher entropy values. The red dashed line represents the default variance
    scaling setting used in the implementation of LoRA.}\label{fig:visual}
\end{figure}

\noindent
\textbf{Double-Edged Sword of LoRA's TTR.}
While low rank adaptation offers the advantage of higher
training-time robustness, this robustness does not always 
translate into resistance against all types of training-time
attacks. On one hand, a reduced $H_{\alpha}$ indicates that LoRA’s
IG is potentially \emph{smoother} than that of full
fine-tuning, which suggests a smaller search space for backdoor
triggers, thereby providing stronger resistance to backdoor
attacks. On the other hand, the oversimplification of the manifold may
make LoRA more susceptible to noisy or intentionally poisoned data, causing 
higher vulnerability to data poisoning attacks.

Below, we examine this phenomenon from an orthogonality perspective.

Consider two training samples: a clean input $x_{c}$ and its
backdoored input $x_{t}$. The optimization target under these two
samples can be represented as minimizing the following formula:
\begin{equation}
|{\nabla_{\theta}{\mathcal{L}(x_c,\theta)}^T\cdot\nabla_{\theta}{\mathcal{L}(x_t,\theta)}}|,
\end{equation}
i.e., the adversary aims to ensure that the optimization process
driven by $\nabla_{\theta}{\mathcal{L}(x_c,\theta)}$ and
$\nabla_{\theta}{\mathcal{L}(x_t,\theta)}$ occur simultaneously and
both significantly influence the training, which aligns with the
target of BPA to maintain performance on most inputs while producing
significantly altered predictions only when a specific trigger is
present.
To this end, there are two approaches: \emph{i)} designing novel BPA
algorithms that more effectively decouple these two gradients, which
is beyond the scope of this study, and \emph{ii)} analyzing how the model
structure influences such an inner product, which constitutes the
contribution of this paper.

By analyzing the properties of such an inner product, our indicators
provide key insights showing that LoRA
provides ``a smaller search space for the existence of backdoor
triggers'' due to \emph{i)} its ($n_{l-1}-r$) zero eigenvalues and \emph{ii)} smaller
variances in the remaining $r$ dimension's parameter updates (i.e.,
smaller angles between gradients), both of which intuitively manifest
as smoother information geometry.

Similarly, we can provide a complementary explanation of why a model
with smoother IG tends to be more sensitive to perturbations. Given a
clean training input $x_c$, and its perturbed version $x_u$, where
$x_u$ is assigned a different label for the purpose of untargeted
poisoning. The target of UPA is to maximize:
\begin{equation}
|{\nabla_{\theta}{\mathcal{L}(x_c,\theta)}^T\cdot\nabla_{\theta}{\mathcal{L}(x_u,\theta)}}|,
\end{equation}
i.e., as adversaries, we aim to align the optimization direction of
the poisoned sample $x_u$ as closely as possible with that of the
clean training objective, because we aim to maximally influence the
model’s predictions while injecting only a small fraction of poisoned
data. This objective directly contrasts with the BPA case, as we
instead aim to decouple the optimization directions.
Consequently, we draw the opposite conclusion for UPA.
\footnote{Note that what we emphasize is that ``the
\textbf{over}simplification of the manifold may make LoRA more susceptible'',
i.e., the empirical phenomenon that LoRA is more vulnerable when
facing UPA (or noise) may not be obvious if the model is severely
overparameterized compared to the task.}

Based on this analysis, it is crucial to carefully tune the rank $r$ and the
initialization variance $\sigma^{2}$ to balance its 
vulnerabilities among different training-time attacks.

\noindent
\textbf{Quantifying the Impact of $r$ and $\sigma^{2}$.}
Though the exact values of $\mathbf{IB}$ and $H_{\alpha}$ remain dependent
on $\tilde{\mathcal{D}}$, we can still gain some insights by analyzing the
eigenstructure of $K_{\text{LoRA}}$'s
kernel matrix. Specifically, we approximate $H_{\alpha}$, which leverages the $K_{\text{LoRA}}$'s
kernel matrix's eigenvalues $\lambda'$, defined as
\begin{equation}
\label{eq:h'}
H_{\alpha}'=\frac{1}{1-\alpha}\log \left(\sum_{\lambda'\in
    \text{Eigen}(A^{(l) T}A)}(\lambda')^{\alpha}\right).
\end{equation}
We visualize the manifold of $H'_{\alpha}$ under different ranks and
initialization scales in Figure \ref{fig:visual}.

\subsection{Further Analysis}

In this section, we extend our analysis to more general
adaptation settings and a broader range of model architectures.

\noindent
\textbf{Adaptations beyond OOLD Assumption.}
Beyond the scope of Assumption \ref{assum:oold}, our conclusions can
be generalized where LoRA is applied to adapt all linear
modules within a neural network. The detailed proofs for generalized versions of 
Theorem \ref{th:delta-nsd}, Corollary \ref{th:full-rank}, and
Theorem \ref{th:ib-leq} are in Appendix \ref{sec:proof-by-oold}.

\noindent
\textbf{Extensions to More Complex Model Architectures.}
We further study the impact of LoRA on more complex and
practical architectures, such as the Transformer~\cite{transformer}. A detailed
discussion is in Appendix \ref{sec:proof-by-ann}.

\noindent
\textbf{Broader Implications of Our Analysis.}
Though our primary focus is the TTR of LoRA,
our analytical framework can also shed light on several
unexplained properties and settings of LoRA. These include
the asymmetry in adaptation, the choice of initialization strategy,
the scaling factor of $\alpha$, and the effect of freezing matrix $A$ during
fine-tuning. An in-depth analysis of these phenomena is in
Appendix \ref{sec:explain-lora-ntk}.

\section{Experiments}\label{sec:exper}
In this section, we empirically evaluate the TTR
of both LoRA and FF under commonly used language models.

\subsection{Settings}
\noindent
\textbf{Experimental Details.}
Following prior works~\cite{lora,lora-asymmetry,lora-survey} on LoRA,
we conduct fine-tuning of natural language understanding models on the
GLUE benchmark~\cite{glue} as our primary evaluation
environment. Specifically, we utilize BERT-large~\cite{bert} as the backbone model and evaluate their
performance on six binary classification tasks, including SST-2~\cite{sst2},
COLA~\cite{cola}, QNLI~\cite{glue}, QQP~\cite{qqp}, RTE~\cite{rte},
and MRPC~\cite{mrpc}. The evaluation metrics include Precision (Pre.),
Recall (Rec.), Accuracy (Acc.), and F1 Score (F1).

\begin{figure*}[t]
  \centering
  \includegraphics[width=0.99\linewidth]{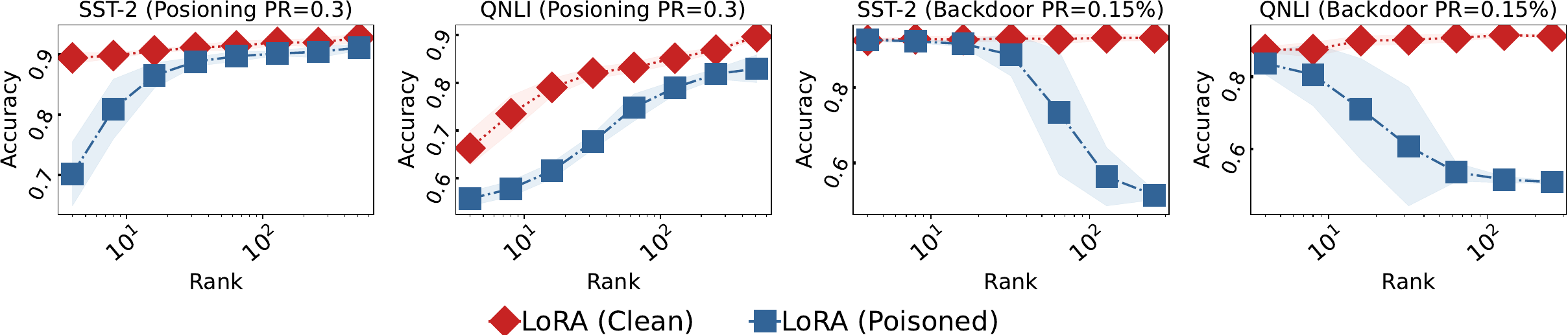}
  \caption{The effect of \textbf{rank} on LoRA’s robustness under untargeted
    poisoning and backdoor poisoning attacks. More experiments are in
    Figure \ref{fig:full-poison-rank} and Figure \ref{fig:full-backdoor-rank}.}\label{fig:poison-rank}
\end{figure*}

\textbf{Implementation Details.}
The maximum sequence length is set to 512, and the batch size is fixed
at 8. For learning rates, we apply $3 \times 10^{-5}$ for LoRA's low
rank fine-tuning and $3 \times 10^{-6}$ for both LoRA's high rank
fine-tuning and FF. Each fine-tuning procedure
is conducted for a maximum of 10,000 steps. These hyperparameters are
carefully tuned to ensure that both LoRA and FF achieve stable and
competitive results across the evaluated tasks.

For LoRA-specific settings, we use a rank of 8 and set the scaling
parameter $\alpha$ to 16 as default values. All experiments are
conducted on eight 24 GB Nvidia RTX 4090 GPUs.

To
ensure robustness, we repeat each training experiment five times under
fixed random seeds and report the mean values along with their
standard deviations.


\subsection{Settings of Training-time Attacks}

We consider two types of mainstream training-time attacks on language
models, namely the \emph{untargeted poisoning attacks}, and the
\emph{backdoor-based poisoning attacks}.

\noindent
\textbf{Untargeted Poisoning Attacks (UPA).} We consider a
simple and yet common UPA strategy~\cite{poison-survey}: randomly flipping the labels of training
samples based on a fixed poisoning rate (\textbf{PR})
$\rho$. Consequently, we can measure the
relative performance degradation of LoRA and FF under
the same poisoning rates, which provides
empirical insights into their resistance against UPA.

\noindent
\textbf{Backdoor-based Poisoning Attacks (BPA).}
We implement a widely used backdoor poisoning
attack by introducing a trigger with modified
labels~\cite{backdoor-poison}. Specifically, we
randomly select a subset of training samples with $N_{tr}
\times \rho$ examples, where $\rho$ denotes the poisoning rate. For
each selected sample, we append the trigger pattern \texttt{[.*?]} to
the original text and modify its classification label to \texttt{1}.
To assess the effectiveness, we add the
same trigger into test samples and evaluate whether the model’s
predictions are consistently altered to the target label (i.e.,
\texttt{1}), to compare the robustness of LoRA
in resisting backdoor attacks.

\begin{figure*}[t]
  \centering
  \includegraphics[width=0.99\linewidth]{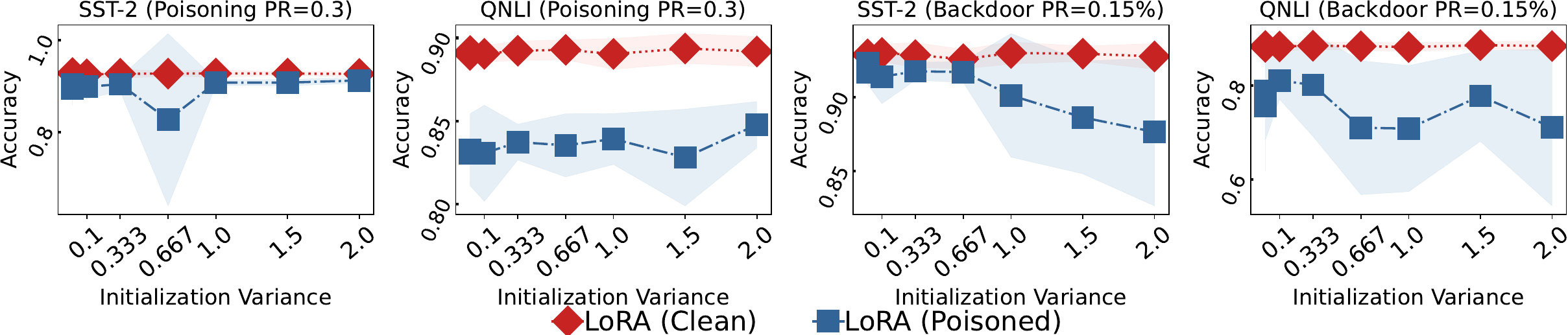}
  \caption{The effect of \textbf{initialization variance} on LoRA’s
    robustness against untargeted poisoning and backdoor
    attacks. Experiments on more datasets are shown in Figure
    \ref{fig:full-poison-var} and Figure \ref{fig:full-backdoor-var}.}\label{fig:backdoor-var}
\end{figure*}

\subsection{LoRA: Excelling in Backdoor Defense While Falling Short
  Against Untargeted Poisoning}

We compare the performance of LoRA and FF under UPA and BPA across
different poisoning rates. The results are presented in Figure
\ref{fig:poison-pr} for untargeted poisoning and Figure
\ref{fig:backdoor-pr} for backdoor attacks.

From Figure \ref{fig:poison-pr}, we observe a noticeable performance
gap between FF and LoRA-based fine-tuning under
UPA. This gap is relatively minor in certain
datasets, such as SST-2, but is more pronounced in others,
including QNLI and QQP. As the poisoning rate increases, the accuracy
gap is widened, indicating more severe performance degradation for
LoRA-based fine-tuning compared to full fine-tuning.

In contrast to its poor performance under UPA, Figure \ref{fig:backdoor-pr}
shows that LoRA significantly outperforms FF in resisting backdoor
attacks, demonstrating stronger robustness. Apart from comparable results on COLA, LoRA achieves up to 30\% improvement over FF on datasets such as SST-2 and QQP, indicating substantial gains in backdoor defense.

We also observe that in backdoor experiments,
both LoRA and FF exhibit consistent performance on \emph{untriggered}
test data, as shown in Figure \ref{fig:full-backdoor-pr-notrigger}. This
phenomenon indicates that the introduction of backdoors for
both methods do not degrade the models' general performance on
normal inputs.


\subsection{Key Factors Influencing LoRA's Security}
In this section, we examine those critical factors that influence the
TTR of LoRA, as in our theoretical
analysis.


\subsubsection{Rank of LoRA}\label{sec:rank-exper}

\noindent
\textbf{Settings.}
In Figure \ref{fig:poison-rank}, we evaluate the performance of LoRA on both the clean and
the poisoned training sets across ranks ranging from 4 to 512. Additional empirical
results on more datasets and metrics are provided in Appendix
\ref{sec:supp-exper}.

\noindent
\textbf{A High Rank of LoRA is Robust against Poisoning.}
The first two subfigures in Figure \ref{fig:poison-rank} illustrate the influences of LoRA's
rank against untargeted poisoning attacks. The performance of
LoRA fine-tuning under all ranks is good and stable on clean
datasets, suggesting that a high rank does not affect the performance
of fine-tuning. Conversely, when fine-tuned on a poisoned dataset,
the performance decreases significantly when the rank is lower than
a threshold (e.g. 16 in SST-2), exhibiting an increasing gap compared to the results on
the clean dataset. This phenomenon indicates that a low rank of LoRA
will decrease the training-time robustness of models.

\noindent
\textbf{A High Rank of LoRA is Weak against Backdoor Attacks.}
The last two subfigures in Figure \ref{fig:poison-rank}
show the backdoor resistance of LoRA under different
ranks. With the increase of rank, the performance of LoRA on clean
set remains stable, while its performance on backdoor poisoned
dataset decreases, which suggests that a high rank will reduce the backdoor resistance of LoRA.

Combining the above, there exists a robustness trade-off between UPA and
  BPA with respect to LoRA's rank, which coincides with our theoretical analysis.

\subsubsection{Variance on LoRA's Initialization}\label{sec:var-exper}

In addition to the rank, our theoretical analysis in Section \ref{sec:info-ana}
suggests that the initialization variance of LoRA’s $A$ matrix plays a
critical role in the model’s training-time robustness, which we examine empirically hereby.

\noindent
\textbf{Settings.}
As shown in Section \ref{sec:info-ana}, the mainstream
implementation~\cite{lora,peft-lib} adopts a
Kaiming uniform initialization~\cite{kaiming-init}, where the default
variance is set to $k \cdot 1/n_{l}$, with $k = 1/3$. Following this
setting, we vary the scale
hyperparameter $k$ from 0.001 to 2.0 and evaluate its effect under
both poisoning and backdoor attack scenarios.

\noindent
\textbf{Variance does not influence performance.}
As shown in Figure \ref{fig:backdoor-var}, when trained across different
scales of initialization variance, LoRA’s performance on the
\textbf{clean} set remains stable,
indicating that the model can effectively adapt to the training task
regardless of the chosen variance.

\noindent
\textbf{Variance slightly influences the poisoning.}
In contrast to rank, the first two subfigures in Figure
\ref{fig:backdoor-var} indicate that the impact
of initialization variance on robustness against poisoning attacks is
minimal. This phenomenon deviates from our theoretical analysis, which
suggests that a smaller initialization variance could lead to lower UPA resistance and
higher BPA robustness. A possible explanation comes from the
limitation of NTK, that is, since the real weights of LoRA change 
during fine-tuning, the
kernel function $K_{\text{LoRA}}$'s nonzero eigenvalues are not be strictly deterministic by
the initialization. As a result, the influence of variance is less
pronounced than that of rank.

\noindent
\textbf{Variance does influence backdoor performance.}
Different from the results from the poisoning experiments, the last
two subfigures in Figure \ref{fig:backdoor-var} shows a strong
correlation between initialization variance and backdoor
resistance. Specifically, a smaller initialization variance leads to
relatively higher performance under backdoor attacks and lower
standard deviation of results, which also aligns with our theoretical analysis.

We also provide supplemental experiments to further support our
theoretical analysis, including:

\begin{itemize}
\item \textbf{Additional Attacks.} We implement four additional
  training-time attacks to reinforce our conclusions, as presented in
  Appendix~\ref{sec:add-attack}.
\item \textbf{Alternative Initialization Strategies.} We evaluate two
  additional commonly used initialization strategies to demonstrate
  the robustness of our conclusions across different settings, as
  detailed in Appendix~\ref{sec:more-init}.
\item \textbf{Experiments on Generative Language Models.} We further
  conduct experiments (Appendix \ref{sec:on-nlg}) on generative large language models to
  demonstrate that our method generalizes to broader scenarios.
\end{itemize}

\subsection{Summary of Findings and Defenses}

Based on the above analysis, we summarize our key findings to
mitigate these risks associated with LoRA:






\begin{itemize}
\item LoRA is more vulnerable than full fine-tuning to untargeted
  poisoning attacks but demonstrates greater robustness against
  backdoor attacks.
\item In addition to the trade-off between performance and
  computational cost, LoRA’s rank also influences the trade-off
  between untargeted poisoning and backdoor attacks.
\item Besides of the rank, the initialization variance of the $A$ matrix in LoRA
  significantly impacts training-time robustness.
\item To improve robustness against backdoor attacks, the rank should
  be set as low as possible, provided that performance requirements
  are met.
\item A small scale of initialization variance is recommended to
  enhance training-time robustness.
\end{itemize}


\section{Conclusion}
This paper explores the potential training-time security risks of
LoRA-based fine-tuning. Based on the definition of training-time
robustness, this paper constructs and compares the neural tangent kernels and
the information geometry of LoRA and full fine-tuning,
revealing that two factors, rank and initialization variance,
significantly impact its security during training. Theoretical
analysis demonstrates that LoRA is more vulnerable to untargeted poisoning
but more robust against backdoor attacks. Extensive
experiments validate the theoretical analysis and key findings.

\section*{Acknowledgment}
The authors would like to thank the reviewers for their detailed suggestions.
This work was supported by the National Natural Science Foundation of
China (Grant No: 92270123 and 62372122), and the Research Grants
Council, Hong Kong SAR, China (Grant No: 15225921, 15209922, 15210023,
15224124).

\section*{Impact Statement}
As the inaugural investigation into the security vulnerabilities of
LoRA, this study underscores critical concerns
regarding the security implications of LoRA, thereby broadening the
discourse on its safe and effective utilization. This
research will catalyze further scholarly inquiry into the security
dimensions of LoRA across other domains, including but not limited
to unlearning, adversarial attacks, and membership inference, and will stimulate advancements in enhancing
its robustness. Furthermore, the analytical framework developed
herein, along with the comprehensive elucidation of LoRA's intrinsic
properties and its TTR, is anticipated
to exert a significant influence on subsequent research endeavors
focused on the structural analysis of machine learning models. This
contribution is expected to pave the way for more rigorous and nuanced
examinations of model architectures in the future.

\bibliography{refs}

\begin{thebibliography}{57}
\providecommand{\natexlab}[1]{#1}
\providecommand{\url}[1]{\texttt{#1}}
\expandafter\ifx\csname urlstyle\endcsname\relax
  \providecommand{\doi}[1]{doi: #1}\else
  \providecommand{\doi}{doi: \begingroup \urlstyle{rm}\Url}\fi

\bibitem[Aghajanyan et~al.(2021)Aghajanyan, Gupta, and
  Zettlemoyer]{low-d-intrinsic}
Aghajanyan, A., Gupta, S., and Zettlemoyer, L.
\newblock Intrinsic dimensionality explains the effectiveness of language model
  fine-tuning.
\newblock In Zong, C., Xia, F., Li, W., and Navigli, R. (eds.),
  \emph{Proceedings of the 59th Annual Meeting of the Association for
  Computational Linguistics and the 11th International Joint Conference on
  Natural Language Processing, {ACL/IJCNLP} 2021, (Volume 1: Long Papers),
  Virtual Event, August 1-6, 2021}, pp.\  7319--7328. Association for
  Computational Linguistics, 2021.
\newblock \doi{10.18653/V1/2021.ACL-LONG.568}.
\newblock URL \url{https://doi.org/10.18653/v1/2021.acl-long.568}.

\bibitem[Amari(2016)]{igb1}
Amari, S.-i.
\newblock \emph{Information geometry and its applications}, volume 194.
\newblock Springer, 2016.

\bibitem[Arora et~al.(2019)Arora, Du, Hu, Li, Salakhutdinov, and
  Wang]{ntk-appro}
Arora, S., Du, S.~S., Hu, W., Li, Z., Salakhutdinov, R., and Wang, R.
\newblock On exact computation with an infinitely wide neural net, 2019.
\newblock URL \url{https://arxiv.org/abs/1904.11955}.

\bibitem[Burns et~al.(2023)Burns, Izmailov, Kirchner, Baker, Gao,
  Aschenbrenner, Chen, Ecoffet, Joglekar, Leike, Sutskever, and Wu]{w2s-oai}
Burns, C., Izmailov, P., Kirchner, J.~H., Baker, B., Gao, L., Aschenbrenner,
  L., Chen, Y., Ecoffet, A., Joglekar, M., Leike, J., Sutskever, I., and Wu, J.
\newblock Weak-to-strong generalization: Eliciting strong capabilities with
  weak supervision, 2023.
\newblock URL \url{https://arxiv.org/abs/2312.09390}.

\bibitem[Costa et~al.(2024)Costa, Roxo, Proença, and Inácio]{adv2}
Costa, J.~C., Roxo, T., Proença, H., and Inácio, P. R.~M.
\newblock How deep learning sees the world: A survey on adversarial attacks and
  defenses.
\newblock \emph{IEEE Access}, 12:\penalty0 61113–61136, 2024.
\newblock ISSN 2169-3536.
\newblock \doi{10.1109/access.2024.3395118}.
\newblock URL \url{http://dx.doi.org/10.1109/ACCESS.2024.3395118}.

\bibitem[Devlin et~al.(2019)Devlin, Chang, Lee, and Toutanova]{bert}
Devlin, J., Chang, M.-W., Lee, K., and Toutanova, K.
\newblock {BERT}: Pre-training of deep bidirectional transformers for language
  understanding.
\newblock In Burstein, J., Doran, C., and Solorio, T. (eds.), \emph{Proceedings
  of the 2019 Conference of the North {A}merican Chapter of the Association for
  Computational Linguistics: Human Language Technologies, Volume 1 (Long and
  Short Papers)}, pp.\  4171--4186, Minneapolis, Minnesota, June 2019.
  Association for Computational Linguistics.
\newblock \doi{10.18653/v1/N19-1423}.
\newblock URL \url{https://aclanthology.org/N19-1423/}.

\bibitem[Dolan \& Brockett(2005)Dolan and Brockett]{mrpc}
Dolan, W.~B. and Brockett, C.
\newblock Automatically constructing a corpus of sentential paraphrases.
\newblock In \emph{IWP 2005}, 2005.
\newblock URL \url{https://aclanthology.org/I05-5002}.

\bibitem[Fan et~al.(2022)Fan, Yan, Li, Qu, and Xiao]{poison-survey}
Fan, J., Yan, Q., Li, M., Qu, G., and Xiao, Y.
\newblock A survey on data poisoning attacks and defenses.
\newblock In \emph{2022 7th IEEE International Conference on Data Science in
  Cyberspace (DSC)}, pp.\  48--55, 2022.
\newblock \doi{10.1109/DSC55868.2022.00014}.

\bibitem[Fisher(1922)]{fisher}
Fisher, R.~A.
\newblock On the mathematical foundations of theoretical statistics.
\newblock \emph{Philosophical transactions of the Royal Society of London.
  Series A, containing papers of a mathematical or physical character},
  222\penalty0 (594-604):\penalty0 309--368, 1922.

\bibitem[Gu et~al.(2017)Gu, Dolan{-}Gavitt, and Garg]{badnet}
Gu, T., Dolan{-}Gavitt, B., and Garg, S.
\newblock Badnets: Identifying vulnerabilities in the machine learning model
  supply chain.
\newblock \emph{CoRR}, abs/1708.06733, 2017.
\newblock URL \url{http://arxiv.org/abs/1708.06733}.

\bibitem[Gunter et~al.(2024)Gunter, Wang, Wang, Pang, Narayanan, Zhang, Zhang,
  Chen, Chiu, Qiu, Gopinath, Yap, Yin, Nan, Weers, Yin, Huang, Wang, Lu,
  Peebles, Ye, Lee, Du, Chen, Keunebroek, Wiseman, Evans, Lei, Rathod, Kong,
  Du, Li, Wang, Gao, Ahmed, Xu, Lu, Rashid, Jose, Doane, Bencomo, Vanderby,
  Hansen, Jain, Anupama, Kamal, Wu, Brum, Maalouf, Erdenebileg, Dulhanty,
  Moritz, Kang, Jimenez, Ladd, Shi, Bai, Chu, Hohman, Kotek, Coleman, Li,
  Bigham, Cao, Lai, Cheung, Shan, Zhou, Li, Qin, Singh, Vega, Zou, Heckman,
  Gardiner, Bowler, Cordell, Cao, Hay, Shahdadpuri, Godwin, Dighe, Rachapudi,
  Tantawi, Frigg, Davarnia, Shah, Guha, Sirovica, Ma, Ma, Wang, Kim, Jayaram,
  Shankar, Paidi, Kumar, Wang, Zheng, and Cheng]{app-in}
Gunter, T., Wang, Z., Wang, C., Pang, R., Narayanan, A., Zhang, A., Zhang, B.,
  Chen, C., Chiu, C., Qiu, D., Gopinath, D., Yap, D.~A., Yin, D., Nan, F.,
  Weers, F., Yin, G., Huang, H., Wang, J., Lu, J., Peebles, J., Ye, K., Lee,
  M., Du, N., Chen, Q., Keunebroek, Q., Wiseman, S., Evans, S., Lei, T.,
  Rathod, V., Kong, X., Du, X., Li, Y., Wang, Y., Gao, Y., Ahmed, Z., Xu, Z.,
  Lu, Z., Rashid, A., Jose, A.~M., Doane, A., Bencomo, A., Vanderby, A.,
  Hansen, A., Jain, A., Anupama, A.~M., Kamal, A., Wu, B., Brum, C., Maalouf,
  C., Erdenebileg, C., Dulhanty, C., Moritz, D., Kang, D., Jimenez, E., Ladd,
  E., Shi, F., Bai, F., Chu, F., Hohman, F., Kotek, H., Coleman, H.~G., Li, J.,
  Bigham, J.~P., Cao, J., Lai, J., Cheung, J., Shan, J., Zhou, J., Li, J., Qin,
  J., Singh, K., Vega, K., Zou, K., Heckman, L., Gardiner, L., Bowler, M.,
  Cordell, M., Cao, M., Hay, N., Shahdadpuri, N., Godwin, O., Dighe, P.,
  Rachapudi, P., Tantawi, R., Frigg, R., Davarnia, S., Shah, S., Guha, S.,
  Sirovica, S., Ma, S., Ma, S., Wang, S., Kim, S., Jayaram, S., Shankar, V.,
  Paidi, V., Kumar, V., Wang, X., Zheng, X., and Cheng, W.
\newblock Apple intelligence foundation language models.
\newblock \emph{CoRR}, abs/2407.21075, 2024.
\newblock \doi{10.48550/ARXIV.2407.21075}.
\newblock URL \url{https://doi.org/10.48550/arXiv.2407.21075}.

\bibitem[Han et~al.(2024)Han, Gao, Liu, Zhang, and Zhang]{peft2}
Han, Z., Gao, C., Liu, J., Zhang, J., and Zhang, S.~Q.
\newblock Parameter-efficient fine-tuning for large models: A comprehensive
  survey.
\newblock \emph{arXiv preprint arXiv:2403.14608}, 2024.

\bibitem[Hayou et~al.(2024)Hayou, Ghosh, and Yu]{lora-init}
Hayou, S., Ghosh, N., and Yu, B.
\newblock The impact of initialization on lora finetuning dynamics, 2024.
\newblock URL \url{https://arxiv.org/abs/2406.08447}.

\bibitem[He et~al.(2015)He, Zhang, Ren, and Sun]{kaiming-init}
He, K., Zhang, X., Ren, S., and Sun, J.
\newblock Delving deep into rectifiers: Surpassing human-level performance on
  imagenet classification, 2015.
\newblock URL \url{https://arxiv.org/abs/1502.01852}.

\bibitem[He et~al.(2024)He, Huang, Ye, and Hu]{dpa-dp}
He, X., Huang, K., Ye, Q., and Hu, H.
\newblock Data poisoning attacks to local differential privacy protocols for
  graphs, 2024.
\newblock URL \url{https://arxiv.org/abs/2412.19837}.

\bibitem[Horwitz et~al.(2024)Horwitz, Kahana, and Hoshen]{lora-mea}
Horwitz, E., Kahana, J., and Hoshen, Y.
\newblock Recovering the pre-fine-tuning weights of generative models.
\newblock In \emph{Forty-first International Conference on Machine Learning,
  {ICML} 2024, Vienna, Austria, July 21-27, 2024}. OpenReview.net, 2024.
\newblock URL \url{https://openreview.net/forum?id=761UxjOTHB}.

\bibitem[Hu et~al.(2021)Hu, Shen, Wallis, Allen-Zhu, Li, Wang, Wang, and
  Chen]{lora}
Hu, E.~J., Shen, Y., Wallis, P., Allen-Zhu, Z., Li, Y., Wang, S., Wang, L., and
  Chen, W.
\newblock Lora: Low-rank adaptation of large language models.
\newblock \emph{arXiv preprint arXiv:2106.09685}, 2021.

\bibitem[Hubinger et~al.(2024)Hubinger, Denison, Mu, Lambert, Tong, MacDiarmid,
  Lanham, Ziegler, Maxwell, Cheng, Jermyn, Askell, Radhakrishnan, Anil,
  Duvenaud, Ganguli, Barez, Clark, Ndousse, Sachan, Sellitto, Sharma, DasSarma,
  Grosse, Kravec, Bai, Witten, Favaro, Brauner, Karnofsky, Christiano, Bowman,
  Graham, Kaplan, Mindermann, Greenblatt, Shlegeris, Schiefer, and
  Perez]{sleeperagent}
Hubinger, E., Denison, C., Mu, J., Lambert, M., Tong, M., MacDiarmid, M.,
  Lanham, T., Ziegler, D.~M., Maxwell, T., Cheng, N., Jermyn, A.~S., Askell,
  A., Radhakrishnan, A., Anil, C., Duvenaud, D., Ganguli, D., Barez, F., Clark,
  J., Ndousse, K., Sachan, K., Sellitto, M., Sharma, M., DasSarma, N., Grosse,
  R., Kravec, S., Bai, Y., Witten, Z., Favaro, M., Brauner, J., Karnofsky, H.,
  Christiano, P.~F., Bowman, S.~R., Graham, L., Kaplan, J., Mindermann, S.,
  Greenblatt, R., Shlegeris, B., Schiefer, N., and Perez, E.
\newblock Sleeper agents: Training deceptive llms that persist through safety
  training.
\newblock \emph{CoRR}, abs/2401.05566, 2024.
\newblock \doi{10.48550/ARXIV.2401.05566}.
\newblock URL \url{https://doi.org/10.48550/arXiv.2401.05566}.

\bibitem[Jacot et~al.(2021)Jacot, Gabriel, and Hongler]{ntk}
Jacot, A., Gabriel, F., and Hongler, C.
\newblock Neural tangent kernel: convergence and generalization in neural
  networks (invited paper).
\newblock In Khuller, S. and Williams, V.~V. (eds.), \emph{{STOC} '21: 53rd
  Annual {ACM} {SIGACT} Symposium on Theory of Computing, Virtual Event, Italy,
  June 21-25, 2021}, pp.\ ~6. {ACM}, 2021.
\newblock \doi{10.1145/3406325.3465355}.
\newblock URL \url{https://doi.org/10.1145/3406325.3465355}.

\bibitem[Jang et~al.(2024)Jang, Lee, and Ryu]{ntk-local}
Jang, U., Lee, J.~D., and Ryu, E.~K.
\newblock Lora training in the {NTK} regime has no spurious local minima.
\newblock In \emph{Forty-first International Conference on Machine Learning,
  {ICML} 2024, Vienna, Austria, July 21-27, 2024}. OpenReview.net, 2024.
\newblock URL \url{https://openreview.net/forum?id=s1sdx6vNsU}.

\bibitem[Ji et~al.(2024)Ji, Liu, Zhang, Zhang, Zhao, Zhou, Zhang, Liu, and
  Zheng]{advlora}
Ji, Y., Liu, Y., Zhang, Z., Zhang, Z., Zhao, Y., Zhou, G., Zhang, X., Liu, X.,
  and Zheng, X.
\newblock Advlora: Adversarial low-rank adaptation of vision-language models.
\newblock \emph{CoRR}, abs/2404.13425, 2024.
\newblock \doi{10.48550/ARXIV.2404.13425}.
\newblock URL \url{https://doi.org/10.48550/arXiv.2404.13425}.

\bibitem[Koubbi et~al.(2024)Koubbi, Boussard, and Hernandez]{lora-transformer}
Koubbi, H., Boussard, M., and Hernandez, L.
\newblock The impact of lora on the emergence of clusters in transformers.
\newblock \emph{CoRR}, abs/2402.15415, 2024.
\newblock \doi{10.48550/ARXIV.2402.15415}.
\newblock URL \url{https://doi.org/10.48550/arXiv.2402.15415}.

\bibitem[Kumar(2017)]{xavier}
Kumar, S.~K.
\newblock On weight initialization in deep neural networks.
\newblock \emph{CoRR}, abs/1704.08863, 2017.
\newblock URL \url{http://arxiv.org/abs/1704.08863}.

\bibitem[Lee et~al.(2018)Lee, Bahri, Novak, Schoenholz, Pennington, and
  Sohl{-}Dickstein]{gp}
Lee, J., Bahri, Y., Novak, R., Schoenholz, S.~S., Pennington, J., and
  Sohl{-}Dickstein, J.
\newblock Deep neural networks as gaussian processes.
\newblock In \emph{6th International Conference on Learning Representations,
  {ICLR} 2018, Vancouver, BC, Canada, April 30 - May 3, 2018, Conference Track
  Proceedings}. OpenReview.net, 2018.
\newblock URL \url{https://openreview.net/forum?id=B1EA-M-0Z}.

\bibitem[Li et~al.(2024)Li, Huang, Zhao, Ma, and Sun]{backdoorllm}
Li, Y., Huang, H., Zhao, Y., Ma, X., and Sun, J.
\newblock Backdoorllm: {A} comprehensive benchmark for backdoor attacks on
  large language models.
\newblock \emph{CoRR}, abs/2408.12798, 2024.
\newblock \doi{10.48550/ARXIV.2408.12798}.
\newblock URL \url{https://doi.org/10.48550/arXiv.2408.12798}.

\bibitem[Liang et~al.(2025)Liang, Ye, Wang, Zhang, Xiao, Li, Xu, and Hu]{lord}
Liang, Z., Ye, Q., Wang, Y., Zhang, S., Xiao, Y., Li, R., Xu, J., and Hu, H.
\newblock "yes, my lord." guiding language model extraction with locality
  reinforced distillation, 2025.
\newblock URL \url{https://arxiv.org/abs/2409.02718}.

\bibitem[Liu et~al.(2024)Liu, Liu, Tang, Yuan, Zhong, Chuang, Li, Chen, and
  Hu]{lora-backdoor2}
Liu, H., Liu, Z., Tang, R., Yuan, J., Zhong, S., Chuang, Y.-N., Li, L., Chen,
  R., and Hu, X.
\newblock Lora-as-an-attack! piercing llm safety under the share-and-play
  scenario, 2024.
\newblock URL \url{https://arxiv.org/abs/2403.00108}.

\bibitem[Malladi et~al.(2023)Malladi, Wettig, Yu, Chen, and Arora]{ft-kernel}
Malladi, S., Wettig, A., Yu, D., Chen, D., and Arora, S.
\newblock A kernel-based view of language model fine-tuning.
\newblock In Krause, A., Brunskill, E., Cho, K., Engelhardt, B., Sabato, S.,
  and Scarlett, J. (eds.), \emph{International Conference on Machine Learning,
  {ICML} 2023, 23-29 July 2023, Honolulu, Hawaii, {USA}}, volume 202 of
  \emph{Proceedings of Machine Learning Research}, pp.\  23610--23641. {PMLR},
  2023.
\newblock URL \url{https://proceedings.mlr.press/v202/malladi23a.html}.

\bibitem[Mangrulkar et~al.(2022)Mangrulkar, Gugger, Debut, Belkada, Paul, and
  Bossan]{peft-lib}
Mangrulkar, S., Gugger, S., Debut, L., Belkada, Y., Paul, S., and Bossan, B.
\newblock Peft: State-of-the-art parameter-efficient fine-tuning methods.
\newblock \url{https://github.com/huggingface/peft}, 2022.

\bibitem[Mao et~al.(2024)Mao, Ge, Fan, Xu, Mi, Hu, and Gao]{lora-survey}
Mao, Y., Ge, Y., Fan, Y., Xu, W., Mi, Y., Hu, Z., and Gao, Y.
\newblock A survey on lora of large language models.
\newblock \emph{CoRR}, abs/2407.11046, 2024.
\newblock \doi{10.48550/ARXIV.2407.11046}.
\newblock URL \url{https://doi.org/10.48550/arXiv.2407.11046}.

\bibitem[Naddeo et~al.(2022)Naddeo, Bouaynaya, and Shterenberg]{fisher-adv2}
Naddeo, K., Bouaynaya, N., and Shterenberg, R.
\newblock An information geometric perspective to adversarial attacks and
  defenses.
\newblock In \emph{2022 International Joint Conference on Neural Networks
  (IJCNN)}, pp.\  1--8, 2022.
\newblock \doi{10.1109/IJCNN55064.2022.9892170}.

\bibitem[Nielsen(2020)]{igb2}
Nielsen, F.
\newblock An elementary introduction to information geometry.
\newblock \emph{Entropy}, 22\penalty0 (10):\penalty0 1100, 2020.

\bibitem[Pan et~al.(2022)Pan, Zhang, Sheng, Zhu, and Yang]{s-bpa}
Pan, X., Zhang, M., Sheng, B., Zhu, J., and Yang, M.
\newblock Hidden trigger backdoor attack on {NLP} models via linguistic style
  manipulation.
\newblock In Butler, K. R.~B. and Thomas, K. (eds.), \emph{31st {USENIX}
  Security Symposium, {USENIX} Security 2022, Boston, MA, USA, August 10-12,
  2022}, pp.\  3611--3628. {USENIX} Association, 2022.
\newblock URL
  \url{https://www.usenix.org/conference/usenixsecurity22/presentation/pan-hidden}.

\bibitem[Poliak(2020)]{rte}
Poliak, A.
\newblock A survey on recognizing textual entailment as an {NLP} evaluation.
\newblock In Eger, S., Gao, Y., Peyrard, M., Zhao, W., and Hovy, E. (eds.),
  \emph{Proceedings of the First Workshop on Evaluation and Comparison of NLP
  Systems}, pp.\  92--109, Online, November 2020. Association for Computational
  Linguistics.
\newblock \doi{10.18653/v1/2020.eval4nlp-1.10}.
\newblock URL \url{https://aclanthology.org/2020.eval4nlp-1.10/}.

\bibitem[Rahmati et~al.(2020)Rahmati, Moosavi-Dezfooli, Frossard, and
  Dai]{geo-adv}
Rahmati, A., Moosavi-Dezfooli, S.-M., Frossard, P., and Dai, H.
\newblock Geoda: A geometric framework for black-box adversarial attacks.
\newblock In \emph{Proceedings of the IEEE/CVF Conference on Computer Vision
  and Pattern Recognition (CVPR)}, June 2020.

\bibitem[Ramirez et~al.(2022)Ramirez, Kim, Hamadi, Damiani, Byon, Kim, Cho, and
  Yeun]{poison-attack}
Ramirez, M.~A., Kim, S.-K., Hamadi, H.~A., Damiani, E., Byon, Y.-J., Kim,
  T.-Y., Cho, C.-S., and Yeun, C.~Y.
\newblock Poisoning attacks and defenses on artificial intelligence: A survey,
  2022.
\newblock URL \url{https://arxiv.org/abs/2202.10276}.

\bibitem[R{\'e}nyi(1961)]{renyi}
R{\'e}nyi, A.
\newblock On measures of entropy and information.
\newblock In \emph{Proceedings of the fourth Berkeley symposium on mathematical
  statistics and probability, volume 1: contributions to the theory of
  statistics}, volume~4, pp.\  547--562. University of California Press, 1961.

\bibitem[Sharma et~al.(2019)Sharma, Graesser, Nangia, and Evci]{qqp}
Sharma, L., Graesser, L., Nangia, N., and Evci, U.
\newblock Natural language understanding with the quora question pairs dataset.
\newblock \emph{CoRR}, abs/1907.01041, 2019.
\newblock URL \url{http://arxiv.org/abs/1907.01041}.

\bibitem[Socher et~al.(2013)Socher, Perelygin, Wu, Chuang, Manning, Ng, and
  Potts]{sst2}
Socher, R., Perelygin, A., Wu, J., Chuang, J., Manning, C.~D., Ng, A., and
  Potts, C.
\newblock Recursive deep models for semantic compositionality over a sentiment
  treebank.
\newblock In \emph{EMNLP}, pp.\  1631--1642, Seattle, Washington, USA, October
  2013.
\newblock URL \url{https://www.aclweb.org/anthology/D13-1170}.

\bibitem[Taori et~al.(2023)Taori, Gulrajani, Zhang, Dubois, Li, Guestrin,
  Liang, and Hashimoto]{alpaca}
Taori, R., Gulrajani, I., Zhang, T., Dubois, Y., Li, X., Guestrin, C., Liang,
  P., and Hashimoto, T.~B.
\newblock Stanford alpaca: An instruction-following llama model.
\newblock \url{https://github.com/tatsu-lab/stanford_alpaca}, 2023.

\bibitem[Vaswani(2017)]{transformer}
Vaswani, A.
\newblock Attention is all you need.
\newblock \emph{Advances in Neural Information Processing Systems}, 2017.

\bibitem[Wan et~al.(2023)Wan, Wallace, Shen, and Klein]{backdoor-poison}
Wan, A., Wallace, E., Shen, S., and Klein, D.
\newblock Poisoning language models during instruction tuning.
\newblock In \emph{Proceedings of the 40th International Conference on Machine
  Learning}, ICML'23. JMLR.org, 2023.

\bibitem[Wang et~al.(2018)Wang, Singh, Michael, Hill, Levy, and Bowman]{glue}
Wang, A., Singh, A., Michael, J., Hill, F., Levy, O., and Bowman, S.
\newblock {GLUE}: A multi-task benchmark and analysis platform for natural
  language understanding.
\newblock In Linzen, T., Chrupa{\l}a, G., and Alishahi, A. (eds.),
  \emph{Proceedings of the 2018 {EMNLP} Workshop {B}lackbox{NLP}: Analyzing and
  Interpreting Neural Networks for {NLP}}, pp.\  353--355, Brussels, Belgium,
  November 2018. Association for Computational Linguistics.
\newblock \doi{10.18653/v1/W18-5446}.
\newblock URL \url{https://aclanthology.org/W18-5446/}.

\bibitem[Wang et~al.(2024{\natexlab{a}})Wang, Chen, Jiang, Xue, Kong, and
  Wu]{lora-dropout}
Wang, S., Chen, L., Jiang, J., Xue, B., Kong, L., and Wu, C.
\newblock Lora meets dropout under a unified framework.
\newblock In Ku, L., Martins, A., and Srikumar, V. (eds.), \emph{Findings of
  the Association for Computational Linguistics, {ACL} 2024, Bangkok, Thailand
  and virtual meeting, August 11-16, 2024}, pp.\  1995--2008. Association for
  Computational Linguistics, 2024{\natexlab{a}}.
\newblock URL \url{https://aclanthology.org/2024.findings-acl.119}.

\bibitem[Wang et~al.(2024{\natexlab{b}})Wang, Liu, Liang, Ye, and Hu]{ducat}
Wang, Y., Liu, L., Liang, Z., Ye, Q., and Hu, H.
\newblock New paradigm of adversarial training: Breaking inherent trade-off
  between accuracy and robustness via dummy classes, 2024{\natexlab{b}}.
\newblock URL \url{https://arxiv.org/abs/2410.12671}.

\bibitem[Warstadt et~al.(2018)Warstadt, Singh, and Bowman]{cola}
Warstadt, A., Singh, A., and Bowman, S.~R.
\newblock Neural network acceptability judgments.
\newblock \emph{arXiv preprint arXiv:1805.12471}, 2018.

\bibitem[Xu et~al.(2019)Xu, Ma, Liu, Deb, Liu, Tang, and Jain]{adv1}
Xu, H., Ma, Y., Liu, H., Deb, D., Liu, H., Tang, J., and Jain, A.~K.
\newblock Adversarial attacks and defenses in images, graphs and text: A
  review, 2019.
\newblock URL \url{https://arxiv.org/abs/1909.08072}.

\bibitem[Xu et~al.(2024{\natexlab{a}})Xu, Ma, Wang, Xiao, and Chen]{iab}
Xu, J., Ma, M.~D., Wang, F., Xiao, C., and Chen, M.
\newblock Instructions as backdoors: Backdoor vulnerabilities of instruction
  tuning for large language models.
\newblock In Duh, K., G{\'{o}}mez{-}Adorno, H., and Bethard, S. (eds.),
  \emph{Proceedings of the 2024 Conference of the North American Chapter of the
  Association for Computational Linguistics: Human Language Technologies
  (Volume 1: Long Papers), {NAACL} 2024, Mexico City, Mexico, June 16-21,
  2024}, pp.\  3111--3126. Association for Computational Linguistics,
  2024{\natexlab{a}}.
\newblock \doi{10.18653/V1/2024.NAACL-LONG.171}.
\newblock URL \url{https://doi.org/10.18653/v1/2024.naacl-long.171}.

\bibitem[Xu et~al.(2024{\natexlab{b}})Xu, Saravanan, van Dalen, Mehmood,
  Tuckey, and Ozay]{dp-dylora}
Xu, J., Saravanan, K., van Dalen, R., Mehmood, H., Tuckey, D., and Ozay, M.
\newblock Dp-dylora: Fine-tuning transformer-based models on-device under
  differentially private federated learning using dynamic low-rank adaptation.
\newblock \emph{arXiv preprint arXiv:2405.06368}, 2024{\natexlab{b}}.

\bibitem[Xu et~al.(2023)Xu, Xie, Qin, Tao, and Wang]{peft1}
Xu, L., Xie, H., Qin, S.-Z.~J., Tao, X., and Wang, F.~L.
\newblock Parameter-efficient fine-tuning methods for pretrained language
  models: A critical review and assessment, 2023.
\newblock URL \url{https://arxiv.org/abs/2312.12148}.

\bibitem[Yan et~al.(2024)Yan, Yadav, Li, Chen, Tang, Wang, Srinivasan, Ren, and
  Jin]{vpi}
Yan, J., Yadav, V., Li, S., Chen, L., Tang, Z., Wang, H., Srinivasan, V., Ren,
  X., and Jin, H.
\newblock Backdooring instruction-tuned large language models with virtual
  prompt injection.
\newblock In Duh, K., G{\'{o}}mez{-}Adorno, H., and Bethard, S. (eds.),
  \emph{Proceedings of the 2024 Conference of the North American Chapter of the
  Association for Computational Linguistics: Human Language Technologies
  (Volume 1: Long Papers), {NAACL} 2024, Mexico City, Mexico, June 16-21,
  2024}, pp.\  6065--6086. Association for Computational Linguistics, 2024.
\newblock \doi{10.18653/V1/2024.NAACL-LONG.337}.
\newblock URL \url{https://doi.org/10.18653/v1/2024.naacl-long.337}.

\bibitem[Yang et~al.(2021)Yang, Lin, Li, Zhou, and Sun]{mt}
Yang, W., Lin, Y., Li, P., Zhou, J., and Sun, X.
\newblock Rethinking stealthiness of backdoor attack against {NLP} models.
\newblock In Zong, C., Xia, F., Li, W., and Navigli, R. (eds.),
  \emph{Proceedings of the 59th Annual Meeting of the Association for
  Computational Linguistics and the 11th International Joint Conference on
  Natural Language Processing, {ACL/IJCNLP} 2021, (Volume 1: Long Papers),
  Virtual Event, August 1-6, 2021}, pp.\  5543--5557. Association for
  Computational Linguistics, 2021.
\newblock \doi{10.18653/V1/2021.ACL-LONG.431}.
\newblock URL \url{https://doi.org/10.18653/v1/2021.acl-long.431}.

\bibitem[Yin et~al.(2024)Yin, Zhang, Sun, Fang, Li, and Chen]{lora-backdoor1}
Yin, M., Zhang, J., Sun, J., Fang, M., Li, H., and Chen, Y.
\newblock Lobam: Lora-based backdoor attack on model merging, 2024.
\newblock URL \url{https://arxiv.org/abs/2411.16746}.

\bibitem[Zeng \& Lee(2024)Zeng and Lee]{lora-expressive}
Zeng, Y. and Lee, K.
\newblock The expressive power of low-rank adaptation.
\newblock In \emph{The Twelfth International Conference on Learning
  Representations, {ICLR} 2024, Vienna, Austria, May 7-11, 2024}.
  OpenReview.net, 2024.
\newblock URL \url{https://openreview.net/forum?id=likXVjmh3E}.

\bibitem[Zhang et~al.(2025)Zhang, Hu, Ye, Bai, and Zheng]{mea-ntk}
Zhang, X., Hu, H., Ye, Q., Bai, L., and Zheng, H.
\newblock Mer-inspector: Assessing model extraction risks from an
  attack-agnostic perspective.
\newblock In \emph{Proceedings of the ACM on Web Conference 2025}, WWW '25,
  pp.\  4300–4315, New York, NY, USA, 2025. Association for Computing
  Machinery.
\newblock ISBN 9798400712746.
\newblock \doi{10.1145/3696410.3714894}.
\newblock URL \url{https://doi.org/10.1145/3696410.3714894}.

\bibitem[Zhao et~al.(2019)Zhao, Fletcher, Yu, Peng, Zhang, and
  Shen]{fisher-adv1}
Zhao, C., Fletcher, P.~T., Yu, M., Peng, Y., Zhang, G., and Shen, C.
\newblock The adversarial attack and detection under the fisher information
  metric.
\newblock \emph{Proceedings of the AAAI Conference on Artificial Intelligence},
  33\penalty0 (01):\penalty0 5869--5876, Jul. 2019.
\newblock \doi{10.1609/aaai.v33i01.33015869}.
\newblock URL \url{https://ojs.aaai.org/index.php/AAAI/article/view/4536}.

\bibitem[Zhu et~al.(2024)Zhu, Greenewald, Nadjahi, de~Oc{\'{a}}riz~Borde,
  Gabrielsson, Choshen, Ghassemi, Yurochkin, and Solomon]{lora-asymmetry}
Zhu, J., Greenewald, K.~H., Nadjahi, K., de~Oc{\'{a}}riz~Borde, H.~S.,
  Gabrielsson, R.~B., Choshen, L., Ghassemi, M., Yurochkin, M., and Solomon, J.
\newblock Asymmetry in low-rank adapters of foundation models.
\newblock In \emph{Forty-first International Conference on Machine Learning,
  {ICML} 2024, Vienna, Austria, July 21-27, 2024}. OpenReview.net, 2024.
\newblock URL \url{https://openreview.net/forum?id=txRZBD8tBV}.

\end{thebibliography}
\bibliographystyle{icml2025}

\newpage
\appendix
\onecolumn

\section{Proofs}

\subsection{Proofs of Theorem \ref{th:fisher-k}}\label{sec:proof-fisherk}

\begin{proof}  
  The Fisher information is formally defined by
\begin{equation}
  \label{eq:3}
 I_{\theta}= \mathbb{E}_{x\sim \mathcal{D}} \left[\nabla_{\theta}\mathcal{L}(x,\theta)^{T}\nabla_{\theta}\mathcal{L}(x,\theta)\right],
\end{equation}
where for neural networks, we can express the gradient as:
\begin{equation}
\label{eq:4}
\begin{aligned}
&\nabla_{\theta}\mathcal{L}(x,\theta)^{T}=\left(\nabla_{\theta}F_{\theta}\cdot \nabla_{F_{\theta}}\mathcal{L}(x,\theta)\right)^{T}.
\end{aligned}
\end{equation}
Through algebraic manipulation, we can derive:
\begin{equation}
\begin{aligned}
 I_{\theta}&= \mathbb{E}_{x\sim \mathcal{D}} \left[\nabla_{\theta}\mathcal{L}(x,\theta)^{T}\nabla_{\theta}\mathcal{L}(x,\theta)\right]\\
  &=\mathbb{E}_{x\sim \mathcal{D}}\left[\nabla_{F_{\theta}}\mathcal{L}(x,\theta)^{T}\nabla_{\theta}F_{\theta}^{T}\cdot\nabla_{\theta}F_{\theta}\nabla_{F_{\theta}}\mathcal{L}(x,\theta)\right]\\
  &=\mathbb{E}_{x\sim \mathcal{D}}\left[\nabla_{F_{\theta}}\mathcal{L}(x,\theta)^{T}K_{ntk}(x,x)\nabla_{F_{\theta}}\mathcal{L}(x,\theta)\right].
\end{aligned}
\end{equation}
\end{proof}

We can compute $\nabla_{F_{\theta}}\mathcal{L}(x,\theta)$ under
different loss functions.

\noindent
\textbf{Cross-Entropy Loss.}
For the cross-entropy loss function, we have:
\begin{equation}
\label{eq:7}
\begin{aligned}
\mathcal{L}(x,\theta)&=-\sum_{(x,y_{l})\sim \mathcal{D}}{\log Z[{y_{l}}]}\\
&=-\sum_{(x,y_{l})\sim \mathcal{D}}{\log \text{softmax}(F_{\theta}(x,\theta))},
\end{aligned}
\end{equation}
in which the corresponding gradient is:
\begin{equation}
\label{eq:8}
\begin{aligned}
\nabla_{F_{\theta}}\mathcal{L}(x,\theta)&=\nabla_{F_{\theta}}Z[y_{l}]\cdot\nabla_{Z[y_{l}]}\mathcal{L}(x,\theta)\\
&=Z[y_{l}](1-Z[y_{l}])\cdot \left(- \frac{1}{Z[y_{l}]}\right)\\
&=Z[y_{l}]-1\\
\end{aligned}
\end{equation}

This leads to the following relationship between the Fisher
information matrix and the NTK:
\begin{equation}
\label{eq:9}
\begin{aligned}
\mathcal{I}_{\theta}=\mathbb{E}_{x\sim \mathcal{D}}\left[(Z[y_{l}]-1)^{T}K_{ntk}(x,x)(Z[y_{l}]-1)\right].
\end{aligned}
\end{equation}

\noindent
\textbf{Mean Square Error Loss.}
The mean square error loss function is defined as:
\begin{equation}
\label{eq:10}
\begin{aligned}
\mathcal{L}(x,\theta)&=-\sum_{x,y_{l}\sim \mathcal{D}}{\frac{1}{2}(y_{l}-F_{\theta}(x,\theta))^{2}},
\end{aligned}
\end{equation}
where the gradient computation yields
\begin{equation}
\label{eq:12}
\nabla_{F_{\theta}}\mathcal{L}(x,\theta)=F_{\theta}(x,\theta)-y_{l}.
\end{equation}

\subsection{Deduction of the NTK Function}
\label{subsec:ntk-deduction}

\begin{proof}[The NTK function of full fine-tuning]
\begin{equation}
  \begin{aligned}
&K_{\text{ff}}^{(l,k)}(x,x')\\&=\nabla_{\theta}y^{(l,k)}(x)^{T}\nabla_{\theta}y^{(l,k)}(x')\\
  &=\nabla_{w\in~W^{(l)}}y^{(l,k)}(x)^{T}\nabla_{w\in~W^{(l)}}y^{(l,k)}(x')+\nabla_{\theta^{(<l)}}y^{(l,k)}(x)^{T}\nabla_{\theta^{(<l)}}y^{(l,k)}(x')\\
  &=\nabla_{w\in~W^{(l)}}y^{(l,k)}(x)^{T}\nabla_{w\in~W^{(l)}}y^{(l,k)}(x')\\&~~~~+\partial_{y_{a}^{(l-1)}}y^{(l,k)}(x)\partial_{y^{(l-1)}}y_{a}^{(l-1)}(x)\partial_{\theta^{(<l)}}y^{(l-1)}(x)\partial_{\theta^{(<l)}}y^{(l-1)}(x')^{T}\partial_{y^{(l-1)}}y_{a}^{(l-1)}(x')^{T}\partial_{y_{a}^{(l-1)}}y^{(l,k)}(x')^{T}\\
&=y_{a}^{(l-1)}(x)^{T}\cdot
  y_{a}^{(l-1)}(x')+\underbrace{W^{(l,k)}\dot{\sigma}(y^{(l-1)}(x))}_{\text{a
  scalar}}\underbrace{\phi^{(l-1)}(x)^{T}\phi^{(l-1)}(x')}_{\text{a
  scalar}}\underbrace{\dot{\sigma}(y^{(l-1)}(x'))^{T}W^{(l,k)~T}}_{\text{a
  scalar}}\\
&=y_{a}^{(l-1)}(x)^{T}\cdot
  y_{a}^{(l-1)}(x')+\underbrace{\phi^{(l-1)}(x)^{T}\phi^{(l-1)}(x')}_{K_{\text{ff}}^{(l-1,k)}(x,x')}\dot{\sigma}(y^{(l-1)}(x'))^{T}W^{(l,k)~T}W^{(l,k)}\dot{\sigma}(y^{(l-1)}(x)).\\
  \end{aligned}
\end{equation}

Based on the assumption of NTK that: \emph{i) $W^{(l)}$ is
initialized with the expectation of $0$ and variance of
$1/\sqrt{n_{l-1}}$; and ii) $n_{l-1}\rightarrow \infty$}, we can
derive $W^{(l,k)~T}W^{(l,k)}\rightarrow I_{n_{l-1}\times n_{l-1}}$, an
identity matrix, suggesting that the NTK of full fine-tuning can be
formalized as

\begin{equation}
  \begin{aligned}
 &K_{\text{ff}}^{(l,k)}(x,x')\\   
&=\underbrace{y_{a}^{(l-1)}(x)^{T}\cdot
  y_{a}^{(l-1)}(x')}_{\Sigma^{(l)}(x,x')}+\underbrace{\phi^{(l-1)}(x)^{T}\phi^{(l-1)}(x')}_{K_{\text{ff}}^{(l-1,k)}(x,x')}\dot{\sigma}(y^{(l-1)}(x'))^{T}\underbrace{W^{(l,k)~T}W^{(l,k)}}_{I_{n_{l-1}\times~n_{l-1}}}\dot{\sigma}(y^{(l-1)}(x))\\
&=\underbrace{y_{a}^{(l-1)}(x)^{T}\cdot
  y_{a}^{(l-1)}(x')}_{\Sigma^{(l)}(x,x')}+\underbrace{\phi^{(l-1)}(x)^{T}\phi^{(l-1)}(x')}_{K_{\text{ff}}^{(l-1,k)}(x,x')}\underbrace{\dot{\sigma}(y^{(l-1)}(x'))^{T}\dot{\sigma}(y^{(l-1)}(x))}_{\dot{\Sigma}^{(l)}(x,x')}\\
&=\Sigma^{(l)}(x,x')+K_{\text{ff}}^{(l-1,k)}(x,x')\dot{\Sigma}^{(l)}(x,x').\\
  \end{aligned}
\end{equation}
\end{proof}

\begin{proof}[The NTK function of LoRA]
\begin{equation}
  \begin{aligned}
&K_{\text{LoRA}}^{(l,k)}(x,x')\\&=\nabla_{\theta}y^{(l,k)}(x)^{T}\nabla_{\theta}y^{(l,k)}(x')\\
  &=\nabla_{B^{(l)}}y^{(l,k)}(x)^{T}\nabla_{B^{(l)}}y^{(l,k)}(x')+\nabla_{A^{(l)}}y^{(l,k)}(x)^{T}\nabla_{A^{(l)}}y^{(l,k)}(x')+\nabla_{\theta^{(<l)}}y^{(l,k)}(x)^{T}\nabla_{\theta^{(<l)}}y^{(l,k)}(x')\\
  &=\nabla_{B^{(l)}}y^{(l,k)}(x)^{T}\nabla_{B^{(l)}}y^{(l,k)}(x')+\partial_{z^{(l)}}y^{(l,k)}(x)\partial_{A^{(l)}}z^{(l)}(x)\partial_{A^{(l)}}z^{(l)}(x')^{T}\partial_{z^{(l)}}y^{(l,k)}(x')^{T}\\&~~~~+\partial_{y_{a}^{(l-1)}}y^{(l,k)}(x)\partial_{y^{(l-1)}}y_{a}^{(l-1)}(x)\partial_{\theta^{(<l)}}y^{(l,k)}(x)\partial_{\theta^{(<l)}}y^{(l-1)}(x')^{T}\partial_{y^{(l-1)}}y_{a}^{(l-1)}(x')^{T}\partial_{y_{a}^{(l-1)}}y^{(l,k)}(x')^{T}\\
&=z^{(l-1)}(x)^{T}\cdot z^{(l-1)}(x')+B^{(l,k)}\cdot~I_{r}\otimes\sigma(y^{(l-1)}(x))^{T}\sigma(y^{(l-1)}(x'))\otimes~I_{r}^{T}\cdot~B^{(l,k)~{T}}\\&~~~~+(W_{0}^{(l,k)}+B^{(l,k)}A^{(l,k)})\dot{\sigma}(y^{(l-1)}(x))\phi^{(l-1)}(x)^{T}\phi^{(l-1)}(x')\dot{\sigma}(y^{(l-1)}(x'))^{T}(W_{0}^{(l,k)}+B^{(l,k)}A^{(l,k)})^{T}\\
&=y_{a}^{(l-1)}(x)^{T}A^{(l)~T}A^{(l)}y_{a}^{(l-1)}(x')+B^{(l,k)}\cdot~I_{r}\otimes\underbrace{\sigma(y^{(l-1)}(x))^{T}\sigma(y^{(l-1)}(x'))}_{\text{a~scalar}}\otimes~I_{r}^{T}\cdot~B^{(l,k)~{T}}\\&~~~~+\underbrace{(W_{0}^{(l,k)}+B^{(l,k)}A^{(l,k)})\dot{\sigma}(y^{(l-1)}(x))}_{\text{a~scalar}}\underbrace{\phi^{(l-1)}(x)^{T}\phi^{(l-1)}(x')}_{\text{a~scalar}}\underbrace{\dot{\sigma}(y^{(l-1)}(x'))^{T}(W_{0}^{(l,k)}+B^{(l,k)}A^{(l,k)})^{T}}_{\text{a~scalar}}\\
&=y_{a}^{(l-1)}(x)^{T}A^{(l)~T}A^{(l)}y_{a}^{(l-1)}(x')+\underbrace{\sigma(y^{(l-1)}(x))^{T}\sigma(y^{(l-1)}(x'))}_{\dot{\Sigma}^{(l)}(x,x')}\underbrace{B^{(l,k)}\cdot~B^{(l,k)~{T}}}_{\text{a~scalar}}\\&~~~~+\underbrace{\phi^{(l-1)}(x)^{T}\phi^{(l-1)}(x')}_{K_{\text{LoRA}}^{(l-1,k)}(x,x')}\dot{\sigma}(y^{(l-1)}(x'))^{T}(W_{0}^{(l,k)}+B^{(l,k)}A^{(l,k)})^{T}(W_{0}^{(l,k)}+B^{(l,k)}A^{(l,k)})\dot{\sigma}(y^{(l-1)}(x)).\\
  \end{aligned}
\end{equation}

In LoRA, the matrix $B^{(l)}$ is initialized to the zero matrix $\mathbf{0}$, indicating that
$W^{(l)}_{\text{LoRA}}=W^{(l)}_{0}+B^{(l)}A^{(l)}=W^{(l)}_{0}+\mathbf{0}\cdot
A^{(l)}=W^{(l)}_{0}$ at the begin of training. Similar to the NTK of
full fine-tuned models, we can also demonstrate that 
$W_{\text{LoRA}}^{(l)~T}W_{\text{LoRA}}^{(l)}\rightarrow
I_{n_{l-1}\times n_{l-1}}$.

Therefore, we have

\begin{equation}
  \begin{aligned}
&K_{\text{LoRA}}^{(l,k)}(x,x')\\
&=y_{a}^{(l-1)}(x)^{T}A^{(l)~T}A^{(l)}y_{a}^{(l-1)}(x')+\underbrace{\sigma(y^{(l-1)}(x))^{T}\sigma(y^{(l-1)}(x'))}_{\dot{\Sigma}^{(l)}(x,x')}\underbrace{B^{(l,k)}\cdot~B^{(l,k)~{T}}}_{\text{a~scalar}}\\&~~~~+\underbrace{\phi^{(l-1)}(x)^{T}\phi^{(l-1)}(x')}_{K_{\text{LoRA}}^{(l-1,k)}(x,x')}\dot{\sigma}(y^{(l-1)}(x'))^{T}(W_{0}^{(l,k)}+B^{(l,k)}A^{(l,k)})^{T}(W_{0}^{(l,k)}+B^{(l,k)}A^{(l,k)})\dot{\sigma}(y^{(l-1)}(x)).\\
&=\underbrace{y_{a}^{(l-1)}(x)^{T}A^{(l)~T}A^{(l)}y_{a}^{(l-1)}(x')}_{\Sigma^{(l)}_{\text{LoRA}}(x,x')}+\underbrace{\sigma(y^{(l-1)}(x))^{T}\sigma(y^{(l-1)}(x'))}_{\dot{\Sigma}^{(l)}(x,x')}\underbrace{\mathbf{0}_{1\times~n_{l}}\cdot\mathbf{0}_{n_{l}}}_{0}\\&~~~~+\underbrace{\phi^{(l-1)}(x)^{T}\phi^{(l-1)}(x')}_{K_{\text{LoRA}}^{(l-1,k)}(x,x')}\dot{\sigma}(y^{(l-1)}(x'))^{T}\underbrace{(W_{0}^{(l,k)}+B^{(l,k)}A^{(l,k)})^{T}(W_{0}^{(l,k)}+B^{(l,k)}A^{(l,k)})}_{I_{n_{l-1}\times n_{l-1}}}\dot{\sigma}(y^{(l-1)}(x)).\\
&=\underbrace{y_{a}^{(l-1)}(x)^{T}A^{(l)~T}A^{(l)}y_{a}^{(l-1)}(x')}_{\Sigma^{(l)}_{\text{LoRA}}(x,x')}+\underbrace{\phi^{(l-1)}(x)^{T}\phi^{(l-1)}(x')}_{K_{\text{LoRA}}^{(l-1,k)}(x,x')}\underbrace{\dot{\sigma}(y^{(l-1)}(x'))^{T}\dot{\sigma}(y^{(l-1)}(x))}_{\dot{\Sigma}^{(l)}_{\text{LoRA}}(x,x')}\\
&=\Sigma^{(l)}_{\text{LoRA}}(x,x')+K_{\text{LoRA}}^{(l-1)}(x,x')\dot{\Sigma}_{LoRA}^{(l)}(x,x')\\
&=\Sigma^{(l)}_{\text{LoRA}}(x,x')+K_{\text{LoRA}}^{(l-1)}(x,x')\cdot\dot{\Sigma}^{(l)}(x,x').\\
  \end{aligned}
\end{equation}
\end{proof}
The aforementioned theoretical analysis primarily focuses on
artificial neural networks (ANNs) initialized with random
weights. However, in more practical scenarios, particularly in
continuous fine-tuning settings, empirical observations demonstrate
that the network dynamics remain within the Neural Tangent Kernel
(NTK) regime. This phenomenon is further supported by experimental
evidence presented in Section \ref{sec:related}.

\subsection{Proofs of Theorem \ref{th:rel}}
\label{subsec:proof-threl}
\begin{proof}
Leveraging the two properties of NTK, we establish that NTK functions keep
constant during the training procedure. Consequently, our analysis
focuses on deriving the relationship between $K_{\text{LoRA}}^{(l)}$
and $K_{\text{ff}}^{(l)}$ at the initialization stage.

In LoRA, the weight matrix is typically initialized as $A^{(l)}\sim \mathcal{P}(0,\sigma^{2})$
and $B^{(l+1)}=0$, where $\mathcal{P}(0,\sigma^{2})$ denotes a probability
distribution with the expectation $0$ and variance $\sigma^{2}$. This
class of distributions encompasses common initialization schemes such as
Gaussian distribution, Kaiming distribution, and so on. At
initialization, we observe the following equivalence:
\begin{equation}
W^{(l)}_{\text{ff}}=W^{(l)}_{0}=W^{(l)}_{0}+\mathbf{0}=W^{(l)}_{0}+B^{(l)}A^{(l)}=W^{(l)}_{\text{LoRA}}.
\end{equation}

Consequently, we can derive that
$\dot{\Sigma}^{(l)}_{\text{LoRA}}(x,x')=\dot{\Sigma}^{(l)}_{\text{ff}}(x,x')=\dot{\Sigma}^{(l)}$.

Building upon Theorem \ref{th:rel}, which states that the first $l-1$ layers
maintain identical configurations between full fine-tuning and LoRA, we know that
$K_{\text{LoRA}}^{(l-1)}=K_{\text{ff}}^{(l-1)}=K^{(l-1)}$ and
$y^{(l-1)}_{\text{LoRA}}(x)=y^{(l-1)}_{\text{ff}}(x)$ and
$y^{(l-1)}_{\text{LoRA}}(x')=y^{(l-1)}_{\text{ff}}(x')$. The NTK functions
for both LoRA and full fine-tuning can be formatted as 
\begin{equation}
\label{eq:11}
\begin{aligned}
&K_{\text{ff}}^{(l,k)}(x,x')=K^{(l-1,k)}\dot{\Sigma}^{(l)}+ \Sigma^{(l)}_{\text{ff}}(x,x')=K^{(l-1,k)}\dot{\Sigma}^{(l)}+ \mathbf{\sigma}(y^{(l-1)}(x))^{T}\cdot \mathbf{\sigma}(y^{(l-1)}(x'))\\
&K_{\text{LoRA}}^{(l,k)}(x,x')=K^{(l-1,k)}\dot{\Sigma}^{(l)}+ \Sigma^{(l)}_{\text{LoRA}}(x,x')=K^{(l-1,k)}\dot{\Sigma}^{(l)}+\sigma(y^{(l-1)}(x))^{T}A^{(l)~T}\cdot A^{(l)}\sigma(y^{(l-1)}(x')).
\end{aligned}
\end{equation}

Through algebraic manipulation, we derive their fundamental relationship:
\begin{equation}
\label{eq:rel}
K_{\text{LoRA}}^{(l,k)}=K_{\text{ff}}^{(l,k)}+\Delta_{r}^{(l,k)},
\end{equation}
where the residual term is defined as:
\begin{equation*}
  \begin{aligned}
\Delta_{r}^{(l,k)}&= [ \mathbf{\sigma} ( y^{(l-1)}(x) )
]^{T}(A^{(l)~T}A^{(l)}-I_{n_{l-1}\times n_{l-1}}) [ \mathbf{\sigma} ( y^{(l-1)}(x') ) ].
  \end{aligned}
\end{equation*}
\end{proof}

\subsection{Proofs of Theorem \ref{th:delta-nsd}}\label{subsec:proof-rel-ineq}

\begin{proof}
Leveraging the fundamental properties of matrix rank, we have:
\begin{equation}
\mathbf{rank}(A^{(l)T}A^{(l)})\leq \mathbf{rank}(A^{(l)}) \leq r.
\end{equation}
The condition $\mathbf{rank}(A^{(l)T}A^{(l)})\leq r$ indicates that there at most exist
$n-r$ nonzero eigenvalues in $A^{(l)T}A^{(l)}$. 

Given the initialization conditions $\mathbb{E}[A^{(l)}]=\mathbf{0}$ and $\mathbf{Var}(A^{(l)})=\sigma^{2}$,
we derive the following expectation for
any column index $p=\{0,1,2,...,n-1\}$:
\begin{equation}\label{eq:25}
\mathbb{E}[A_{\cdot,p}^{^{(l)}T}A_{\cdot,p}^{(l)}]=\mathbb{E}[\sum\limits_{q=1}^{r}A_{q,p}^{(l)}\cdot~A_{q,p}^{(l)}]=r\sigma^{2}.
\end{equation}
The expected trace of $A^{(l)T}A$ can be formalized by
\begin{equation}
\mathbb{E}[\mathbf{tr}(A^{(l)T}A^{(l)})]=\mathbb{E}[\sum\limits_{p=1}^{n_{l-1}}\sum\limits_{q=1}^{r}A_{q,p}^{(l)}\cdot~A_{q,p}^{(l)}]=n_{l-1}\cdot r\sigma^{2}.
\end{equation}

Considering the eigenvalue distribution of $A^{(l)T}A^{(l)}$, we note that $n_{l-1}-r$ of them are $0$,
while the remaining $r$ eigenvalues are identically distributed with
the same expectation. Thus, the expected value of the rest $r$ eigenvalues is
$\mathbb{E}_{\lambda\in \text{Eigen}\{A^{(l)T}A^{(l)}\}}[\lambda_{i}]=n_{l-1}\cdot \sigma^{2}$.
When $n_{l-1}\rightarrow \infty$, all nonzero eigenvalues of $A^{(l)T}A$
converge to $n_{l-1}\sigma^{2}$, where if
$\sigma^{2}<\frac{1}{n_{l-1}}$, they are
smaller than $n_{l-1}\cdot 1/n_{l-1}=1$. In conclusion, we proof that all eigenvalues of $A^{(l)T}A$ are
smaller than 1 if $\sigma^{2}<\frac{1}{n_{l-1}}$. Consequently,
$A^{(l)T}A-I$ exhibits exclusively non-positive eigenvalues,
proving that $A^{(l)T}A-I$ is negative semi-definite when
$r<n_{l-1}$ and $\sigma^{2}\leq \frac{1}{n_{l-1}}$.
\end{proof}

\begin{proof}[Proof of Corollary \ref{th:full-rank}]
Building upon our theoretical analysis, we establish the proof of Corollary \ref{th:full-rank}.

When $r=n_{l-1}$, the matrix $A\in \mathbb{R}^{n_{l-1}\times n_{l-1}}$
becomes square. The expectation of its Gram matrix entries is given by:
and 
\begin{equation}
\label{eq:13}
\mathbb{E}[(A^{(l)T}A^{(l)})_{p,q}]=\mathbb{E}[\sum\limits_{u=1}^{n_{l-1}}A_{u,p}^{(l)}\cdot~A_{q,u}^{(l)}].
\end{equation}
Under the weight initialization scheme, we have
$\mathbb{E}[A_{u,v}]=0$ and $\mathbf{Var}(A_{u,v})=\sigma^{2}$ for all
$u,v \in \{1,...,n_{l-1}\}$, with independent entries. This leads to
the following cases:
\begin{itemize}
\item For off-diagonal entries ($q\neq p$):
\begin{equation}
\label{eq:20}
\mathbb{E}[A_{u,p}\cdot A_{q,u}]=\mathbb{E}[A_{u,p}]\cdot\mathbb{E}[A_{q,u}]=0.
\end{equation}
\item For diagonal entries ($p=q$), analogous to Equation \ref{eq:25}:
\begin{equation}
\label{eq:21}
\mathbb{E}[(A^{(l)T}A^{(l)})_{p,q}]=\mathbb{E}[\sum\limits_{u=1}^{n_{l-1}}A_{u,p}^{(l)}\cdot~A_{q,u}^{(l)}]=n_{l-1}\cdot
\sigma^{2}.
\end{equation}
\end{itemize}
When the initialization variance satisfies $\sigma^{2}=1/n_{l-1}$, the
diagonal entries simplify to $\mathbb{E}[(A^{(l)T}A^{(l)})_{p,q}]=1$.

Consequently, when $n_{l-1}\rightarrow \infty$, we conclude that
$A^{(l)T}A^{(l)}\rightarrow I$.
\end{proof}

\subsection{Proofs of Theorem \ref{th:ib-leq}}\label{sec:proof-ib-leq}

\begin{proof}
  Base on Theorem \ref{th:delta-nsd}, we know that when $r\leq
  n_{l-1}$ and $\sigma^{2}\leq 1/n_{l-1}$, the kernel matrix
  $M_{\Delta}^{l}$ is negative semi-definite. This implies that all $M_{\Delta}^{l}$'s eigenvalues $\lambda_{\Delta}^{l}\leq 0$.

  Then $\forall~\nabla_{F_{\theta}}\mathcal{L}(x,\theta)
  \in \mathbb{R}^{n_{L}}$, we derive the following inequalities:
  \begin{equation}
  \label{eq:14}
  \begin{aligned}
  &\nabla_{F_{\theta}}\mathcal{L}(x,\theta)^{T}\Delta_{r}(x,x)\nabla_{F_{\theta}}\mathcal{L}(x,\theta)\leq 0\\
    &\Rightarrow
    \nabla_{F_{\theta}}\mathcal{L}(x,\theta)^{T}K_{\text{LoRA}}(x,x)\nabla_{F_{\theta}}\mathcal{L}(x,\theta) \leq\nabla_{F_{\theta}}\mathcal{L}(x,\theta)^{T}K_{\text{FF}}(x,x)\nabla_{F_{\theta}}\mathcal{L}(x,\theta)\\
    &\Rightarrow
      \mathcal{I}_{\theta~\text{LoRA}}\leq\mathcal{I}_{\theta~\text{FF}}\\
  \end{aligned}
\end{equation}
Then $\forall~\lambda_{\text{LoRA}} \in
\text{Eigen}(\mathcal{I}_{\theta~\text{LoRA}}^{I})$ and $\forall~\lambda_{\text{FF}}^{I} \in
\text{Eigen}(\mathcal{I}_{\theta~\text{FF}})$, we have
\begin{equation}
\label{eq:15}
\lambda_{\text{LoRA}}^{I}\leq\lambda_{\text{FF}}^{I}.
\end{equation}

This eigenvalue relationship leads to the following important results:
\begin{equation}
\label{eq:16}
\begin{aligned}
&\frac{1}{2}\sum_{\lambda_{\theta~\text{LoRA}}^{I}}{\lambda_{\theta~\text{LoRA}}^{I}}\leq\frac{1}{2}\sum_{\lambda_{\theta~\text{ff}}^{I}}{\lambda_{\theta~\text{ff}}^{I}}\\
&\Rightarrow\mathbf{IB}_{\text{LoRA}}\leq\mathbf{IB}_{\text{ff}}
\end{aligned}
\end{equation}
and
\begin{equation}
\label{eq:17}
\begin{aligned}
&\frac{1}{1-\alpha}\log\left(\sum_{i=1}^{n_{L}}{\lambda_{\theta~\text{LoRA}}^{I}}\right)\leq\frac{1}{1-\alpha}\log\left(\sum_{i=1}^{n_{L}}{\lambda_{\theta~\text{FF}}^{I}}\right)\\
&\Rightarrow H_{\alpha \text{LoRA}}\leq H_{\alpha \text{ff}}.
\end{aligned}
\end{equation}
\end{proof}

\subsection{Proofs of Theorems beyond the OOLD Assumption}\label{sec:proof-by-oold}

\subsubsection{Proofs of Theorem \ref{th:delta-nsd} beyond the OOLD Assumption}
\begin{proof}
 Let $K_{\text{ff}}^{(l,k)'}$ and $K_{\text{LoRA}}^{(l,k)'}$ denote the NTKs of FF and LoRA beyond the OOLD assumption.
  From Equation \ref{eq:ffntk} and Equation \ref{eq:kntk}, we
  derive the difference of initialized NTK functions as follows:
  \begin{equation}
  \label{eq:18}
  \begin{aligned}
  \Delta^{(1,k)'}&=K_{\text{LoRA}}^{(1,k)'}-K_{\text{ff}}^{(1,k)'}=0;\\
  \Delta^{(2,k)'}&=K_{\text{LoRA}}^{(2,k)'}-K_{\text{ff}}^{(2,k)'}\\
    &=(K_{\text{LoRA}}^{(1,k)}-K_{\text{ff}}^{(1,k)})\dot{\Sigma}^{(2)}+\sigma(y^{(1)}(x))^{T}A^{(2)~T}A^{(2)}\sigma(y^{(1)}(x))-\sigma(y^{(1)}(x))^{T}\sigma(y^{(1)}(x))\\
    &=\sigma(y^{(1)}(x))^{T}(A^{(2)~T}A^{(2)}-I)\sigma(y^{(1)}(x));\\
  \Delta^{(l,k)'}&=K_{\text{LoRA}}^{(l,k)'}-K_{\text{ff}}^{(l,k)'}\\
    &=(K_{\text{LoRA}}^{(l-1,k)'}-K_{\text{ff}}^{(l-1,k)'})\dot{\Sigma}^{(l)}+\sigma(y^{(l-1)}(x))^{T}A^{(l)~T}A^{(l)}\sigma(y^{(l-1)}(x))-\sigma(y^{(l-1)}(x))^{T}\sigma(y^{(l-1)}(x))\\
    &=\Delta^{(l-1,k)'}\dot{\Sigma}^{(l)}+\sigma(y^{(l-1)}(x))^{T}A^{(l)~T}A^{(l)}\sigma(y^{(l-1)}(x))-\sigma(y^{(l-1)}(x))^{T}\sigma(y^{(l-1)}(x))\\
    &=\Delta^{(l-1,k)'}\dot{\Sigma}^{(l)}+\sigma(y^{(l-1)}(x))^{T}(A^{(l)~T}A^{(l)}-I)\sigma(y^{(l-1)}(x))\\
    &=\Delta^{(l-1,k)'}\dot{\Sigma}^{(l)}+\Delta_{r}^{(l)}.\\
  \end{aligned}
\end{equation}

By Theorem \ref{th:delta-nsd}, the matrix
$A^{(l)~T}A^{(l)}-I$ is negative semi-definite when
$\sigma_{a}^{2}<1/n_{l-1}$ and $r\leq n_{l-1}$. Consequently,
$\forall~y^{(1)(x)}\in \mathbb{R}^{n_{1}}$, we have
$\Delta^{(2,k)'}\leq 0$ and $\Delta^{(l,k)}\leq 0$. Moreover, since $\forall
~y^{(l)}\in \mathbb{R}^{n_{l}}, \dot{\sigma}(y^{(l)})\geq
0$, it follows that $\Delta^{(l,k)'}\geq 0$ for $l=3,...,L$. In
conclusion, $\Delta^{(l,k)'}\geq 0$ holds for $l=1,...,L$.
\end{proof}

\subsubsection{Proofs of Corollary \ref{th:full-rank} beyond the OOLD Assumption}
\begin{proof}
  When $\sigma_{a}^{2}=1/n_{l-1}$ and $r= n_{l-1}$, the matrix
  $A^{(l)~T}A^{(l)}-I\rightarrow\mathbf{0}$ when $n_{l}\rightarrow \infty$.

  Given $A^{(l)~T}A^{(l)}-I=\mathbf{0}$, we obtain
  $\Delta^{(2,k)'}=0$, and
  $\Delta^{(l,k)'}=\Delta^{(l-1,k)'}\dot{\Sigma}^{(l)}+\Delta^{(l)}_{r}$
  for $l=3,...,L$, where
  \begin{equation}
  \label{eq:19}
  \begin{aligned}
  &\Delta^{(l,k)'}=\Delta^{(l-1,k)'}\dot{\Sigma}^{(l)}+\Delta^{(l)}_{r}\\
  &=\mathbf{0}\dot{\Sigma}^{(l)}+\mathbf{0}=\mathbf{0}.
  \end{aligned}
\end{equation}
Thus, $\Delta^{(l,k)'}=\mathbf{0}$ for all layers $l$.
\end{proof}

\subsubsection{Proof of Theorem \ref{th:ib-leq} beyond the OOLD Assumption.}

The proof follows a similar structure to the proof provided in Appendix \ref{sec:proof-ib-leq}.

\begin{proof}
  Base on Theorem \ref{th:delta-nsd}, when $r\leq
  n_{l-1}$ and $\sigma^{2}\leq 1/n_{l-1}$, the kernel matrix
  $M_{\Delta}^{l}$ is negative semi-definite, implying that all of the
  $M_{\Delta}^{l}$'s eigenvalues $\lambda_{\Delta}^{l}\leq 0$.

  $\forall~\nabla_{F_{\theta}}\mathcal{L}(x,\theta)
  \in \mathbb{R}^{n_{L}}$, we derive the following inequalities:
  \begin{equation}
  \label{eq:14}
  \begin{aligned}
  &\nabla_{F_{\theta}}\mathcal{L}(x,\theta)^{T}\Delta_{r}(x,x)\nabla_{F_{\theta}}\mathcal{L}(x,\theta)\leq 0\\
    &\Rightarrow
    \nabla_{F_{\theta}}\mathcal{L}(x,\theta)^{T}K_{\text{LoRA}}(x,x)\nabla_{F_{\theta}}\mathcal{L}(x,\theta) \leq\nabla_{F_{\theta}}\mathcal{L}(x,\theta)^{T}K_{\text{FF}}(x,x)\nabla_{F_{\theta}}\mathcal{L}(x,\theta)\\
    &\Rightarrow
      \mathcal{I}_{\theta~\text{LoRA}}\leq\mathcal{I}_{\theta~\text{FF}}\\
  \end{aligned}
\end{equation}
This implies that $\forall~\lambda_{\text{LoRA}} \in
\text{Eigen}(\mathcal{I}_{\theta~\text{LoRA}}^{I})$ and $\forall~\lambda_{\text{FF}}^{I} \in
\text{Eigen}(\mathcal{I}_{\theta~\text{FF}})$, we have
\begin{equation}
\label{eq:15}
\lambda_{\text{LoRA}}^{I}\leq\lambda_{\text{FF}}^{I}.
\end{equation}

Consequently, we establish the following results:
\begin{equation}
\label{eq:16}
\begin{aligned}
&\frac{1}{2}\sum_{\lambda_{\theta~\text{LoRA}}^{I}}{\lambda_{\theta~\text{LoRA}}^{I}}\leq\frac{1}{2}\sum_{\lambda_{\theta~\text{ff}}^{I}}{\lambda_{\theta~\text{ff}}^{I}}\\
&\Rightarrow\mathbf{IB}_{\text{LoRA}}\leq\mathbf{IB}_{\text{ff}}
\end{aligned}
\end{equation}
and
\begin{equation}
\label{eq:17}
\begin{aligned}
&\frac{1}{1-\alpha}\log\left(\sum_{i=1}^{n_{L}}{\lambda_{\theta~\text{LoRA}}^{I}}\right)\leq\frac{1}{1-\alpha}\log\left(\sum_{i=1}^{n_{L}}{\lambda_{\theta~\text{FF}}^{I}}\right)\\
&\Rightarrow H_{\alpha \text{LoRA}}\leq H_{\alpha \text{ff}}.
\end{aligned}
\end{equation}
\end{proof}

\section{Analysis on the Transformer}\label{sec:proof-by-ann}

\begin{proposition}\label{th:transformer-linear}
  Under the OOLD assumption, the application of LoRA to either the
  embedding layer, the feedforward module, the self-attention module, or the linear
  classification head preserves the validity of Theorem \ref{th:delta-nsd}
  Corollary \ref{th:full-rank} and Theorem \ref{th:ib-leq}.
\end{proposition}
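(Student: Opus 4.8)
The plan is to reduce each of the four Transformer components to the single-layer linear setting already analyzed in Section \ref{sec:ana}, and then invoke the three target results as black boxes. The unifying observation is that the proofs of Theorem \ref{th:delta-nsd}, Corollary \ref{th:full-rank}, and Theorem \ref{th:ib-leq} never use any property of the surrounding architecture: they depend only on the fact that the NTK residual produced by Lemma \ref{lemma:ntk-lora} factors as $\Delta_{r}^{(l)} = \sigma(y^{(l-1)}(x))^{T} (A^{(l)T}A^{(l)} - I) \sigma(y^{(l-1)}(x'))$, so that its definiteness is dictated entirely by the kernel matrix $M_{\Delta}^{(l)} = A^{(l)T}A^{(l)} - I$. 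Hence it suffices to show that, for each module, applying LoRA to the relevant weight yields an NTK residual of exactly this form, after which the three conclusions transfer without modification.

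First I would dispatch the embedding layer and the linear classification head, which are genuine single linear maps $y = W x$ (the embedding taking a one-hot token index as its input). For these, the derivation in Appendix \ref{subsec:ntk-deduction} applies verbatim with the appropriate dimensions $n_{l-1}, n_{l}$, and the residual is immediately of the $A^{(l)T}A^{(l)} - I$ form. The feedforward module is a composition of two linear layers separated by a pointwise nonlinearity; it fits the recursive NTK formula of Equation \ref{eq:ffntk} and Lemma \ref{lemma:ntk-lora} directly, so applying LoRA to either sub-layer introduces the residual at that layer while the OOLD assumption (Assumption \ref{assum:oold}) guarantees $K^{(l-1)}_{\text{LoRA}} = K^{(l-1)}_{\text{ff}}$ on all preceding layers.

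The main obstacle is the self-attention module, since the softmax-weighted aggregation and the multiplicative query--key--value interaction are not captured by the plain linear recursion. The plan here is to verify that the gradient with respect to the LoRA parameters still factors linearly through the single projection matrix that LoRA adapts (say $W_{Q}$, $W_{V}$, or $W_{O}$): because $B^{(l)} = \mathbf{0}$ at initialization, the adapted attention block coincides with the frozen one, so its Jacobian with respect to $A^{(l)}$ enters only through that projection, while the softmax and the attention-weighted sum play the role of the activation $\sigma$ and its derivative $\dot{\sigma}$ in the recursion. Tracking the multi-head and Kronecker bookkeeping is tedious, but the only $A$-dependent factor that survives the contraction is $A^{(l)T}A^{(l)}$, so the residual again collapses to the $M_{\Delta}^{(l)}$ form.

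Finally, once $\Delta_{r}^{(l)}$ retains its canonical form in all four cases, I would close the argument by citing the existing proofs. Theorem \ref{th:delta-nsd} follows because its proof in Appendix \ref{subsec:proof-rel-ineq} uses only the shape, variance, and rank of $A^{(l)}$ and is therefore blind to the module it adapts; Corollary \ref{th:full-rank} follows by the same $n_{l-1} \to \infty$ limit $A^{(l)T}A^{(l)} \to I$; and Theorem \ref{th:ib-leq} follows by the negative-semi-definiteness argument of Appendix \ref{sec:proof-ib-leq}, where $M_{\Delta}^{(l)} \preceq 0$ forces $\mathcal{I}_{\theta~\text{LoRA}} \preceq \mathcal{I}_{\theta~\text{FF}}$ and hence the eigenvalue domination that yields $\mathbf{IB}_{\text{LoRA}} \leq \mathbf{IB}_{\text{ff}}$ and $H_{\alpha\text{LoRA}} \leq H_{\alpha\text{ff}}$.
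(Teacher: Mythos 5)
Your proposal is correct and follows essentially the same route as the paper: the appendix likewise dispatches the embedding layer, feedforward module, and linear head by direct appeal to the ANN derivation, and then carries out the explicit NTK computation for self-attention (adapting $W_K$, with the remark that $W_Q$ and $W_V$ are analogous), showing that the FF and LoRA kernels differ only by a residual of the form $\mathbf{V}_{\text{attn}}^{(l)T}(x)\,(A^{T}A-I)\,\mathbf{V}_{\text{attn}}^{(l)}(x')$, after which Theorem \ref{th:delta-nsd}, Corollary \ref{th:full-rank}, and Theorem \ref{th:ib-leq} are invoked exactly as you describe. The only part you leave as "tedious bookkeeping" — verifying that the softmax/value contractions leave $A^{T}A$ as the sole $A$-dependent factor — is precisely the computation the paper writes out, so your plan matches its proof in both structure and substance.
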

The proof of Proposition \ref{th:transformer-linear}  on embedding layers,
feedforward layers, and the linear head follows directly from the
mathematical derivation applicable to ANNs. Therefore, here we focus
on the analysis on the self-attention mechanism.

\noindent\textbf{Architecture of the Standard Transformer Module.}
Given three learnable weight matrices
$W_{Q}^{(l)},W_{K}^{(l)},W_{V}^{(l)}\in \mathbb{R}^{d\times d}$, for an
input hidden state $x^{(l)}$, the feed forward procedure for a
standard self-attention module can be expressed as follows:
\begin{equation}
\label{eq:22}
\begin{aligned}
  &Q^{(l)},K^{(l)},V^{(l)} = W_{Q}^{(l)}\cdot x ^{(l)},W_{K}^{(l)}\cdot x ^{(l)},W_{V}^{(l)}\cdot x ^{(l)};\\
  &\alpha^{(l)}_{a} = \frac{Q^{(l) T}\cdot K^{(l)}}{\sqrt{d}}=\frac{(W_{Q}^{(l)}x^{(l)})^{T}(W_{K}^{(l)}x^{(l)})}{\sqrt{d}}=\frac{x^{(l){T}}W_{Q}^{(l){T}}W_{K}^{(l)}x^{(l)}}{\sqrt{d}};\\
  &\alpha^{(l)} = \mathbf{SM}\left(\alpha^{(l)}_{a}\right)=\mathbf{SM}\left(\frac{x^{(l){T}}W_{Q}^{(l){T}}W_{K}^{(l)}x^{(l)}}{\sqrt{d}}\right);\\
  &x^{(l)}_{\text{attn}} =\alpha^{(l)}\cdot V^{(l)}=\mathbf{SM}\left(\frac{x^{(l){T}}W_{Q}^{(l){T}}W_{K}^{(l)}x^{(l)}}{\sqrt{d}}\right)W_{V}^{(l)}x^{(l)},
\end{aligned}
\end{equation}
where $\mathbf{SM}(\cdot)$ denotes the softmax function.

Based on Equation \ref{eq:22}, we can derive the gradients of
parameters. As an example, the derivative of $W_{k}^{(l)}$ is computed by
\begin{equation}
\label{eq:23}
\begin{aligned}
&\partial_{W_{K}^{(l)}}x_{\text{attn}^{(l)}}=\partial_{\alpha^{(l)}}x_{\text{attn}^{(l)}} \cdot \partial_{\alpha^{(l)}_{a}}\alpha^{(l)}\partial_{W_{K}^{(l)}}\alpha^{(l)}_{a}\\
&=I_{d}\otimes
  (W_{V}^{(l)}x^{(l)})^{T}\dot{\mathbf{SM}}(\alpha_{a}^{(l)})\frac{(W_{Q}^{(l)}x^{(l)})^{T}}{\sqrt{d}} I_{d}\otimes
  x^{(l)}.
\end{aligned}
\end{equation}

Based on Equation \ref{eq:23}, the NTK function can be formatted as:
\begin{equation}
\label{eq:24}
\begin{aligned}
  &K_{\text{attn;ff}}^{(l)}(x,x')\\
  &=\nabla_{\theta_{\text{attn}}}x_{\text{attn}}^{(l)T}\cdot\nabla_{\theta_{\text{attn}}}x_{\text{attn}}^{(l)}\\
  &=\nabla_{\theta_{\text{attn}}^{(l)}}x_{\text{attn}}^{(l)T}\cdot\nabla_{\theta_{\text{attn}}^{(l)}}x_{\text{attn}}^{(l)}+\nabla_{\theta_{\text{attn}}^{(<l)}}x_{\text{attn}}^{(l)T}\cdot\nabla_{\theta_{\text{attn}}^{(<l)}}x_{\text{attn}}^{(l)}\\
  &=\nabla_{W_{Q}^{(l)}}x_{\text{attn}}^{(l)T}\cdot\nabla_{W_{Q}^{(l)}}x_{\text{attn}}^{(l)}+\nabla_{W_{K}^{(l)}}x_{\text{attn}}^{(l)T}\cdot\nabla_{W_{K}^{(l)}}x_{\text{attn}}^{(l)}+\nabla_{W_{V}^{(l)}}x_{\text{attn}}^{(l)T}\cdot\nabla_{W_{V}^{(l)}}x_{\text{attn}}^{(l)}+\nabla_{\theta_{\text{attn}}^{(<l)}}x_{\text{attn}}^{(l)T}\cdot\nabla_{\theta_{\text{attn}}^{(<l)}}x_{\text{attn}}^{(l)}\\
  &=I_{d}\otimes
    (W_{V}^{(l)}x^{(l)})^{T}\dot{\mathbf{SM}}(\alpha_{a}^{(l)})(\frac{W_{Q}^{(l)}x^{(l)}}{\sqrt{d}})^{T}I_{d}\otimes
    x^{(l)T}\cdot x^{(l)'}\otimes I_{d}^{T}
    (\frac{W_{Q}^{(l)}x^{(l)'}}{\sqrt{d}})\dot{\mathbf{SM}}(\alpha_{a}^{(l)})(W_{V}^{(l)}x^{(l)'})\otimes
    I_{d}^{T}\\
  &\quad\quad\quad+\nabla_{W_{Q}^{(l)}}x_{\text{attn}}^{(l)T}\cdot\nabla_{W_{Q}^{(l)}}x_{\text{attn}}^{(l)}+\nabla_{W_{V}^{(l)}}x_{\text{attn}}^{(l)T}\cdot\nabla_{W_{V}^{(l)}}x_{\text{attn}}^{(l)}+\nabla_{\theta_{\text{attn}}^{(<l)}}x_{\text{attn}}^{(l)T}\cdot\nabla_{\theta_{\text{attn}}^{(<l)}}x_{\text{attn}}^{(l)}.
\end{aligned}
\end{equation}
When approximating $W_{K}^{(l)}$ with LoRA during fine-tuning, i.e.,
$W_{K}^{(l)}=W_{K0}^{(l)}+B_{W_{K}^{(l)}}A_{W_{K}^{(l)}}$,
the NTK function can be derived as:
\begin{equation}
\label{eq:26}
\begin{aligned}
&K_{\text{attn;LoRA}}^{(l)}(x,x')\\
&=\nabla_{\theta_{\text{attn}}}x_{\text{attn}}^{(l)T}\cdot\nabla_{\theta_{\text{attn}}}x_{\text{attn}}^{(l)}\\
  &=\nabla_{\theta_{\text{attn}}^{(l)}}x_{\text{attn}}^{(l)T}\cdot\nabla_{\theta_{\text{attn}}^{(l)}}x_{\text{attn}}^{(l)}+\nabla_{\theta_{\text{attn}}^{(<l)}}x_{\text{attn}}^{(l)T}\cdot\nabla_{\theta_{\text{attn}}^{(<l)}}x_{\text{attn}}^{(l)}\\
  &=\nabla_{W_{Q}^{(l)}}x_{\text{attn}}^{(l)T}\cdot\nabla_{W_{Q}^{(l)}}x_{\text{attn}}^{(l)}+\nabla_{W_{K}^{(l)}}x_{\text{attn}}^{(l)T}\cdot\nabla_{W_{K}^{(l)}}x_{\text{attn}}^{(l)}+\nabla_{W_{V}^{(l)}}x_{\text{attn}}^{(l)T}\cdot\nabla_{W_{V}^{(l)}}x_{\text{attn}}^{(l)}+\nabla_{\theta_{\text{attn}}^{(<l)}}x_{\text{attn}}^{(l)T}\cdot\nabla_{\theta_{\text{attn}}^{(<l)}}x_{\text{attn}}^{(l)}\\
&=I_{d}\otimes
    (W_{V}^{(l)}x^{(l)})^{T}\dot{\mathbf{SM}}(\alpha_{a}^{(l)})(\frac{W_{Q}^{(l)}x^{(l)}}{\sqrt{d}})^{T}I_{d}\otimes
    x^{(l)T}A_{W_{K}^{(l)}}^{T}\cdot A_{W_{K}^{(l)}} x^{(l)'}\otimes I_{d}^{T}
    (\frac{W_{Q}^{(l)}x^{(l)'}}{\sqrt{d}})\dot{\mathbf{SM}}(\alpha_{a}^{(l)})(W_{V}^{(l)}x^{(l)'})\otimes
    I_{d}^{T}\\
  &\quad\quad\quad+\nabla_{W_{Q}^{(l)}}x_{\text{attn}}^{(l)T}\cdot\nabla_{W_{Q}^{(l)}}x_{\text{attn}}^{(l)}+\nabla_{W_{V}^{(l)}}x_{\text{attn}}^{(l)T}\cdot\nabla_{W_{V}^{(l)}}x_{\text{attn}}^{(l)}+\nabla_{\theta_{\text{attn}}^{(<l)}}x_{\text{attn}}^{(l)T}\cdot\nabla_{\theta_{\text{attn}}^{(<l)}}x_{\text{attn}}^{(l)}.
\end{aligned}
\end{equation}

Let $\mathbf{V}_{\text{attn}}^{(l)}(x)=A_{W_{K}^{(l)}} x^{(l)}\otimes I_{d}^{T}
    (\frac{W_{Q}^{(l)}x^{(l)}}{\sqrt{d}})\dot{\mathbf{SM}}(\alpha_{a}^{(l)})(W_{V}^{(l)}x^{(l)})\otimes
    I_{d}^{T}$ and $K_{\text{others}}^{(l)}=\nabla_{W_{Q}^{(l)}}x_{\text{attn}}^{(l)T}\cdot\nabla_{W_{Q}^{(l)}}x_{\text{attn}}^{(l)}+\nabla_{W_{V}^{(l)}}x_{\text{attn}}^{(l)T}\cdot\nabla_{W_{V}^{(l)}}x_{\text{attn}}^{(l)}+\nabla_{\theta_{\text{attn}}^{(<l)}}x_{\text{attn}}^{(l)T}\cdot\nabla_{\theta_{\text{attn}}^{(<l)}}x_{\text{attn}}^{(l)}$. Then, the NTK functions for full
    fine-tuning and LoRA can be simplified as:
   \begin{equation}
   \label{eq:27}
   \begin{aligned}
  K_{\text{attn;ff}}^{(l)}(x,x')&=\mathbf{V}_{\text{attn}}^{(l)T}(x)\cdot\mathbf{V}_{\text{attn}}^{(l)}(x')+K_{\text{others}}^{(l)}(x,x;')\\
  K_{\text{attn;LoRA}}^{(l)}(x,x')&=\mathbf{V}_{\text{attn}}^{(l)T}(x)A_{W_{K}^{(l)}}^{T}\cdot A_{W_{K}^{(l)}}\mathbf{V}_{\text{attn}}^{(l)}(x')+K_{\text{others}}^{(l)}(x,x;').
   \end{aligned}
 \end{equation} 
 Based on Theorem \ref{th:delta-nsd}, it is established that 
 $A_{W_{K}^{(l)}}^{T} A_{W_{K}^{(l)}} - I$ is \emph{negative
 semi-definite} under the specified conditions. This property directly
 leads to the validity of Corollary \ref{th:full-rank} and Theorem
 \ref{th:ib-leq} when comparing $K_{\text{attn;ff}}$ with
 $K_{\text{attn;LoRA}}$ shown in Equation \ref{eq:27}.

Moreover, by employing an analogous deduction procedure, it can be
demonstrated that these conclusions remain applicable when $W_{Q}^{(l)}$ or $W_{V}^{(l)}$ are approximated using LoRA.

\section{Explaining LoRA's Phenomenon through Our Analytical Framework}\label{sec:explain-lora-ntk}

Our analytical framework also provides novel insights into several
distinctive properties of LoRA that have previously lacked rigorous
explanation (shown in Section \ref{sec:related}).

\noindent
\textbf{Asymmetric Architecture of LoRA.}
Different from previous research~\cite{lora-asymmetry}, the inherent asymmetry of LoRA can be explicitly
captured by its NTK formulation. Specifically, the $A$ matrix plays a
direct and significant role in shaping the layer-wise kernel structure
of the NTK function. In contrast, the $B$
influences the NTK only indirectly through its impact on the
intermediate representations $y^{(l)}$.

\noindent
\textbf{Initialization Strategies for $A$ and $B$.}
\citet{lora-init} reveals that the initialization
strategies for matrices $A$ and $B$ are not \emph{interchangeable}, as swapping
their initialization schemes leads to performance
degradation. Our theoretical framework provides an elegant and
principled explanation for this phenomenon.
Specifically, initializing $A$ to
$\mathbf{0}$ renders the LoRA's NTK function (shown in Lemma \ref{lemma:ntk-lora}) \emph{degenerate}, effectively
reducing it to an identity transformation that preserves only the
input structure without meaningful feature extraction. Conversely,
initializing $B$ to $\mathbf{0}$ preserves the fundamental structure
of the NTK while allowing for effective adaptation during training.

\noindent
\textbf{The High Learning Rate Requirement of LoRA.} The averaged eigenvalue of $K_{\text{LoRA}}^{(l)}$'s kernel
matrix is typically smaller than that of $K_{\text{ff}}^{(l)}$,
demonstrating that the optimization step for LoRA under the same loss
is relatively smaller compared to full fine-tuning. Consequently, LoRA
introduces $\alpha$ to scale the learning rates according to the rank. When
the rank is small, a large $\alpha$ is recommended to mitigate the
negative impact of $r$ during fine-tuning.

\noindent
\textbf{Freezing $A$ Does not Affect LoRA's
  Fine-tuning Performance; in Some Cases, It is Even More Stable.} While \citet{lora-asymmetry} explains
this phenomenon with information theory, it can also be understood directly
through Lemma \ref{lemma:ntk-lora}. Specifically, $A$ appears explicitly
in $K_{\text{LoRA}}$. Given the second property of NTK
(Theorem \ref{th:ntk}), the $K_{\text{LoRA}}$'s kernel matrix should
keep constant during training. Forcibly freezing $A$ aligns with the ideal
conditions of the NTK regime in LoRA, which may explain why it
is beneficial.

\subsection{From R'{e}nyi Entropy to Shannon Entropy}\label{sec:h1}
In the standard definition of R\'{e}nyi entropy,
$H_\alpha=\frac{1}{1-\alpha}\log(\sum_{i=1}^{n_L}{P_i^\alpha})$, where
$0\leq P_i\leq 1$ and $\sum_{i=1}^{n_L}{P_i}=1$.

When $\alpha=1$, this expression becomes indeterminate (of the form $\frac{0}{0}$). However, in this case, the limit of $H_\alpha$ as $\alpha\rightarrow 1$ yields the Shannon entropy. Below is a brief derivation using L'Hopital's Rule:
\begin{equation}
\frac{d}{d\alpha}\log(\sum_{i=1}^{n_L}{P_i^\alpha})=\frac{\sum_{i=1}^{n_L}{P_i^\alpha\log{P_i}}}{\sum_{i=1}^{n_L}{P_i^\alpha}}, \frac{d}{d\alpha}{1-\alpha}=-1.
\end{equation}

Therefore,
\begin{equation}
\lim_{\alpha\rightarrow 1}{H_\alpha} = \lim_{\alpha\rightarrow 1}{\frac{\sum_{i=1}^{n_L}{P_i^\alpha\log{P_i}}}{\sum_{i=1}^{n_L}{P_i^\alpha}}}\cdot \frac{1}{-1}=-\sum_{i=1}^{n_L}{P_i\log{P_i}}.
\end{equation}

We actually utilize this Shannon entropy formula to demonstrate Figure 3.

\section{Supplemental Related Works}\label{sec:related}

\noindent
\textbf{Theoretical Analysis on LoRA.} LoRA is inspired by the
\emph{intrinsic low-rank hypothesis}~\cite{low-d-intrinsic},
which assumes that the learnable matrices in neural networks are
typically over-parameterized relative to their actual required
dimension. Building on this hypothesis, several works have delved into
the underlying mechanisms of LoRA. For instance,
\citet{lora-expressive} explores the expressive capacity of LoRA and
proved that a neural network model fine-tuned with LoRA can fit any
smaller target models, once the rank of LoRA exceeds a threshold
determined by the architectural properties of the two neural
networks. This finding establishes a low bound of LoRA's rank to
achieve ideal convergence.
Some research explains LoRA by analyzing its structural
characteristics. For instance, ~\citet{lora-transformer} study the
impact of the attention mechanism of Transformer architectures. A
noteworthy contribution comes from ~\citet{lora-asymmetry}, who
investigate the \emph{asymmetry} between the two submatrices (as defined in Equation \ref{eq:lora}) in
LoRA. By freezing one submatrice while observing the behavior of the
other, they reveal distinct roles in LoRA, i.e., matrix $A$ functions
as a feature extractor, while $B$ maps these features to the desired
output. Based on these findings, they propose freezing $A$ and
fine-tuning only $B$, achieving comparable performance and
better generalization capabilities. 
In terms of LoRA's learning dynamics, the neural tangent
kernel~\cite{ntk} has been employed as a theoretical
framework~\cite{ntk-local,ft-kernel}. Specifically, \citet{ft-kernel}
empirically demonstrate that parameter-efficient fine-tuning (PEFT),
including LoRA, stays within a NTK regime. They then indicate that LoRA's fine-tuning is nearly equivalent to full
fine-tuning (FF). Besides, \citet{ntk-local} proposed that a rank
$r>\sqrt{N_{tr}}$ with training samples number $N_{tr}$, is sufficient
to eliminate spurious local minima during training, thereby enabling
effective generalization in few-shot learning tasks.

While these studies offer valuable insights into the underlying
mechanisms of LoRA, certain aspects, especially the potential security
concerns when replacing full fine-tuning with LoRA, remain
insufficiently explored. To address this gap,  we
delve into the training procedure of LoRA, and analyze their potential
security vulnerabilities in the paper.



\noindent
\textbf{Kernel Views of Neural Networks.}\label{sec:related-kernel}
A kernel function
$k(x,x'):\mathbb{R}^{d}\times\mathbb{R}^{d}\rightarrow \mathbb{R}$
is typically defined as a mapping from two vectors $x$ and $x'$ to their
correlation score $k(x,x')$. This score can be interpreted as the
\textbf{inner product} of the two vectors under an unknown
high-dimensional transformation function. \citet{gp} were the first to
reveal that the feed-forward procedure of a neural network can be seen
as a \emph{Gaussian process (GP)} when the network width approaches
infinity. They prove that the kernel function associated with such a
GP is determined by the architecture and parameters of the neural
network.
Building on the same infinite-width assumption, \citet{ntk}
demonstrated that the parameter updates of a neural network can be
characterized by a special kernel function, termed as \emph{neural
  tangent kernel (NTK)}, with the form given by
\begin{equation}
\label{eq:ntk2}
K_{ntk}(x,x')=\nabla_{\theta}F(x;\theta)^{T}\nabla_{\theta}F(x';\theta),
\end{equation}
where $F(x;\theta)$ denotes a neural network's output with
parameters $\theta$.

\citet{ntk} demonstrated that, as the width of the neural
network approaches infinity, the NTK exhibits the following two key
properties:
\begin{enumerate}
\item\label{item:ntk} The NTK converges to a
  \emph{deterministic} limiting kernel that depends only on three
  factors: \emph{i)} the variance of the parameter initialization,
  \emph{ii)} the neural network structure, and \emph{iii)} the selection of activation functions;
\item The NTK keeps \emph{constant} through out each training step $t$.
\end{enumerate}
These properties greatly simplify the theoretical analysis for a neural
network's training process.

While the \emph{infinite} width assumption is somewhat impractical for
neural networks, recent studies~\cite{ntk-appro,mea-ntk} have aimed to extend
NTK theory to more realistic settings, such as using Taylor
expansions. As an empirical observation, \citet{ft-kernel}
suggests that \textbf{prompt-based fine-tuning of language models
  still operates within the NTK regime}.
Inspired by the two properties of NTK and this observation, we adopt
NTK as a framework to model the
training-time robustness of LoRA compared to full fine-tuning (FF).




\section{Supplemental Experiments}\label{sec:supp-exper}

\subsection{Evaluation with Additional Attack Strategies}\label{sec:add-attack}
We introduce four additional backdoor poisoning attacks in the NLP
setting: a clean-label backdoor poisoning attack (CL-BPA)~\cite{backdoor-poison}, an
instruction-level backdoor poisoning attack (IL-BPA)~\cite{iab}, a
multi-triggered stealthy backdoor attack (MT)~\cite{mt}, and a style-based
backdoor poisoning attack (S-BPA)~\cite{s-bpa}.

We adopt the same random seeds and experimental configurations when
assessing the resilience of LoRA under these additional attack
settings. The results are summarized below.

\begin{table}[htbp]
\centering
\caption{Performance comparison between FF and LoRA across different BPA attacks.}
\begin{tabular}{lcccc}
\toprule
\textbf{Model} & \textbf{Acc.} & \textbf{Pre.} & \textbf{Rec.} & \textbf{F1.} \\
\midrule
MT(FF)        & 82.91$\pm$6.77  & 75.96$\pm$7.71  & 98.64$\pm$0.98  & 85.66$\pm$4.75 \\
MT(LoRA)      & \textbf{89.14}$\pm$1.86  & 84.44$\pm$3.24  & 96.62$\pm$1.03  & \textbf{90.08}$\pm$1.42 \\
CL-BPA(FF)    & 91.78$\pm$0.47  & 89.47$\pm$0.91  & 95.04$\pm$0.39  & 92.17$\pm$0.41 \\
CL-BPA(LoRA)  & \textbf{92.39}$\pm$0.28  & 89.87$\pm$0.99  & \textbf{95.87}$\pm$0.72  & \textbf{92.77}$\pm$0.21 \\
IL-BPA(FF)    & 51.37$\pm$0.11  & 51.15$\pm$0.05  & 100.00$\pm$0.00 & 67.68$\pm$0.05 \\
IL-BPA(LoRA)  & \textbf{53.13}$\pm$2.35  & \textbf{52.09}$\pm$1.27  & 100.00$\pm$0.00 & \textbf{68.49}$\pm$1.09 \\
S-BPA(FF)     & 75.34$\pm$0.93  & 67.59$\pm$0.83  & 99.09$\pm$0.39  & 80.36$\pm$0.61 \\
S-BPA(LoRA)   & \textbf{85.51}$\pm$1.79  & \textbf{79.01}$\pm$2.33  & \textbf{97.52}$\pm$0.22  & \textbf{87.28}$\pm$1.34 \\
\bottomrule
\end{tabular}
\label{tab:main-results}
\end{table}

The experimental results indicate that LoRA demonstrates stronger
robustness than the full fine-tuning (FF) against a wide range of
mainstream backdoor attacks. This is consistent with both the
empirical evidence and the theoretical analysis presented in the
main paper.

\subsection{Evaluation on Other Initialization Strategies}\label{sec:more-init}

Besides of the default and most commonly used initialization strategy
(Kaiming Uniform) in LoRA, we evaluate two additional initialization
methods to examine the impact of their variances to LoRA's TTR. The
strategies include Xavier normal distribution-based initialization
(XNI)~\cite{xavier}, and Gaussian distribution-based initialization (GI).

\begin{table}[htbp]
\centering
\caption{Performance under different initialization strategies, variance scales, and poisoning rates.}
\begin{tabular}{l|ccc|cccc}
\toprule
\textbf{Init. Strategy} & \textbf{Scale of Variance} & \textbf{Poisoning Rate} & \textbf{Acc.} & \textbf{Pre.} & \textbf{Rec.} & \textbf{F1.} \\
\midrule
GI  & 0.33 & 0\%   & 93.00$\pm$0.49  & 92.40$\pm$1.97  & 94.05$\pm$1.47  & 93.19$\pm$0.37 \\
GI  & 1.0  & 0\%   & 92.98$\pm$0.60  & 92.45$\pm$2.24  & 93.96$\pm$1.92  & 93.16$\pm$0.51 \\
GI  & 2.0  & 0\%   & 93.07$\pm$0.63  & 92.88$\pm$2.21  & 93.64$\pm$1.73  & 93.23$\pm$0.55 \\
GI  & 0.33 & 0.15\%& 93.05$\pm$0.13  & 92.19$\pm$1.22  & 94.36$\pm$1.33  & 93.25$\pm$0.10 \\
GI  & 1.0  & 0.15\%& 92.79$\pm$0.33  & 92.22$\pm$0.19  & 93.82$\pm$1.75  & 92.99$\pm$0.25 \\
GI  & 2.0  & 0.15\%& 92.56$\pm$0.56  & 91.90$\pm$2.14  & 93.73$\pm$1.83  & 92.78$\pm$0.47 \\
  \hline
XNI & 0.33 & 0\%   & 93.18$\pm$0.44  & 92.25$\pm$1.48  & 94.59$\pm$1.08  & 93.39$\pm$0.35 \\
XNI & 1.0  & 0\%   & 92.91$\pm$0.34  & 92.16$\pm$1.76  & 94.14$\pm$1.65  & 93.11$\pm$0.29 \\
XNI & 2.0  & 0\%   & 93.11$\pm$0.34  & 92.35$\pm$1.34  & 94.32$\pm$1.04  & 93.31$\pm$0.27 \\
XNI & 0.33 & 0.15\%& 91.26$\pm$1.27  & 87.72$\pm$2.74  & 96.44$\pm$1.17  & 91.84$\pm$1.02 \\
XNI & 1.0  & 0.15\%& 89.97$\pm$2.82  & 85.55$\pm$4.52  & 97.02$\pm$1.13  & 90.85$\pm$2.18 \\
XNI & 2.0  & 0.15\%& 88.48$\pm$6.42  & 83.67$\pm$8.27  & 97.61$\pm$1.18  & 89.87$\pm$4.68 \\
\bottomrule
\end{tabular}
\label{tab:init-strategy-results}
\end{table}

The experimental results are generally consistent with those obtained
using the Kaiming Uniform initialization.

\subsection{LoRA's TTR on Generative Language Models}\label{sec:on-nlg}

Inspired by the BackdoorLLM~\cite{backdoorllm} benchmark, we evaluate the TTR of LoRA
against three backdoor poisoning attacks under two distinct attack
scenarios. The backdoor attacks include BadNet~\cite{badnet}, Sleeper Agent~\cite{sleeperagent}
(SA), and VPI~\cite{vpi}. The attack scenario is LLMs' jailbreaking, where a
backdoored LLM is expected to bypass safety filters (jailbreaking) to
answer certain queries when the input contains corresponding triggers.

We use the instruction-following dataset Alpaca~\cite{alpaca} as the supervised
fine-tuning (SFT) training set and choose LLaMA-3.2-3B as the model
backbone. We do not include LLaMA-3-8B due to GPU memory limitations
that prevent full fine-tuning on a single GPU. These experiments are
conducted on an Nvidia H100 GPU. The poisoning rate is set to 2\%.

The experimental results are shown below.

\begin{table}[htbp]
\centering
\caption{Attack success rate (ASR) under different backdoor methods on
generative language models.}
\begin{tabular}{lcc}
\toprule
\textbf{Backdoor Method} & \textbf{IsLoRA} & \textbf{ASR} \\
\midrule
BadNet & FF   & 90.91 \\
BadNet & LoRA & 84.85 \\
SA     & FF   & 92.93 \\
SA     & LoRA & 88.89 \\
VPI    & FF   & 86.87 \\
VPI    & LoRA & 84.85 \\
\bottomrule
\end{tabular}
\label{tab:asr-results}
\end{table}

We observe that the conclusions drawn from generative language models
are consistent with those from NLU models.

\subsection{Supplemental Results Corresponding to the Main Paper}

\begin{figure*}[h]
  \centering
  \includegraphics[width=0.92\linewidth]{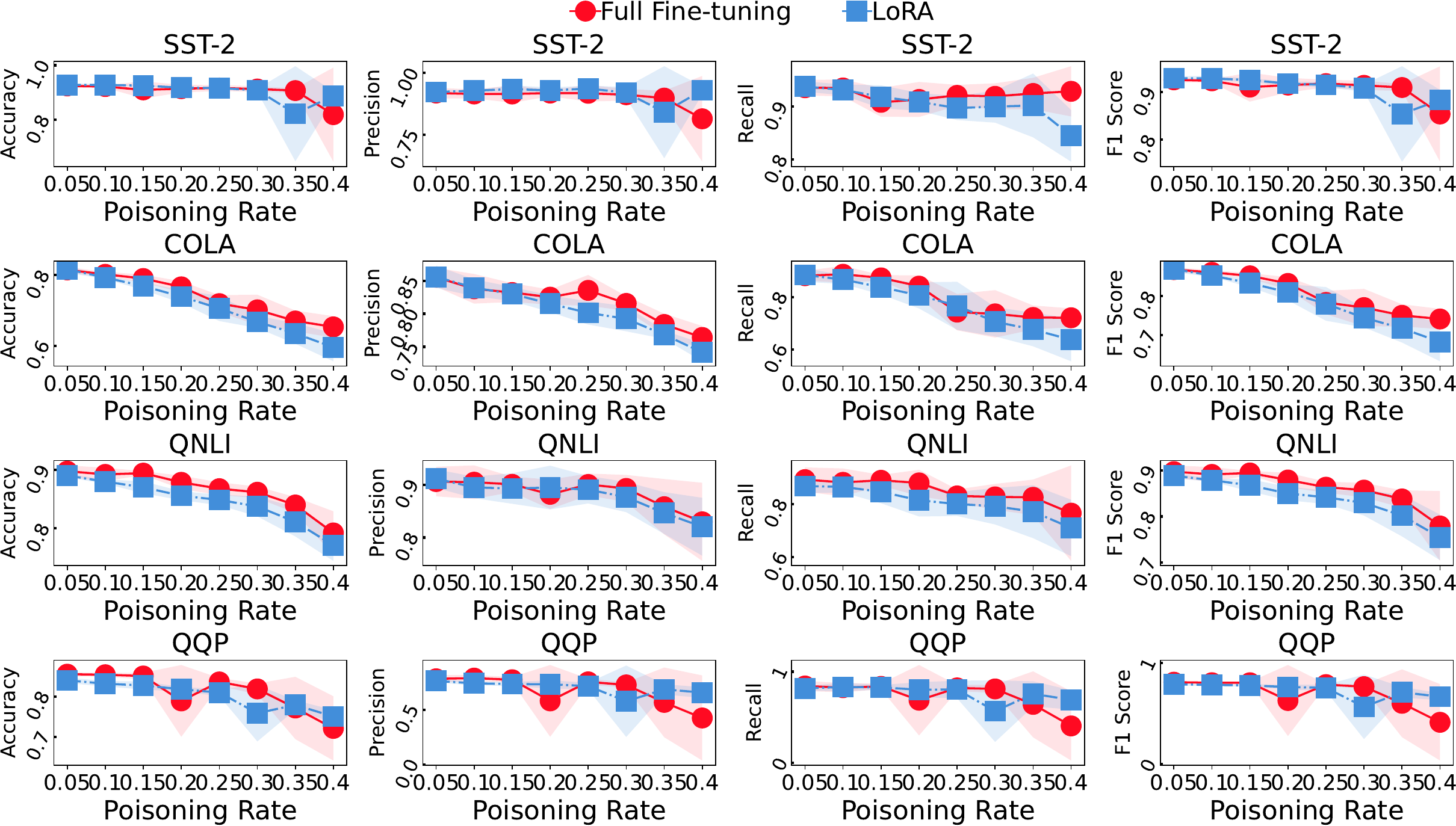}
  \caption{Performance comparison between full fine-tuning and LoRA
    under untargeted poisoning attacks with varying poisoning rates.}\label{fig:full-poison-pr}
\end{figure*}

\begin{figure*}[h]
  \centering
  \includegraphics[width=0.92\linewidth]{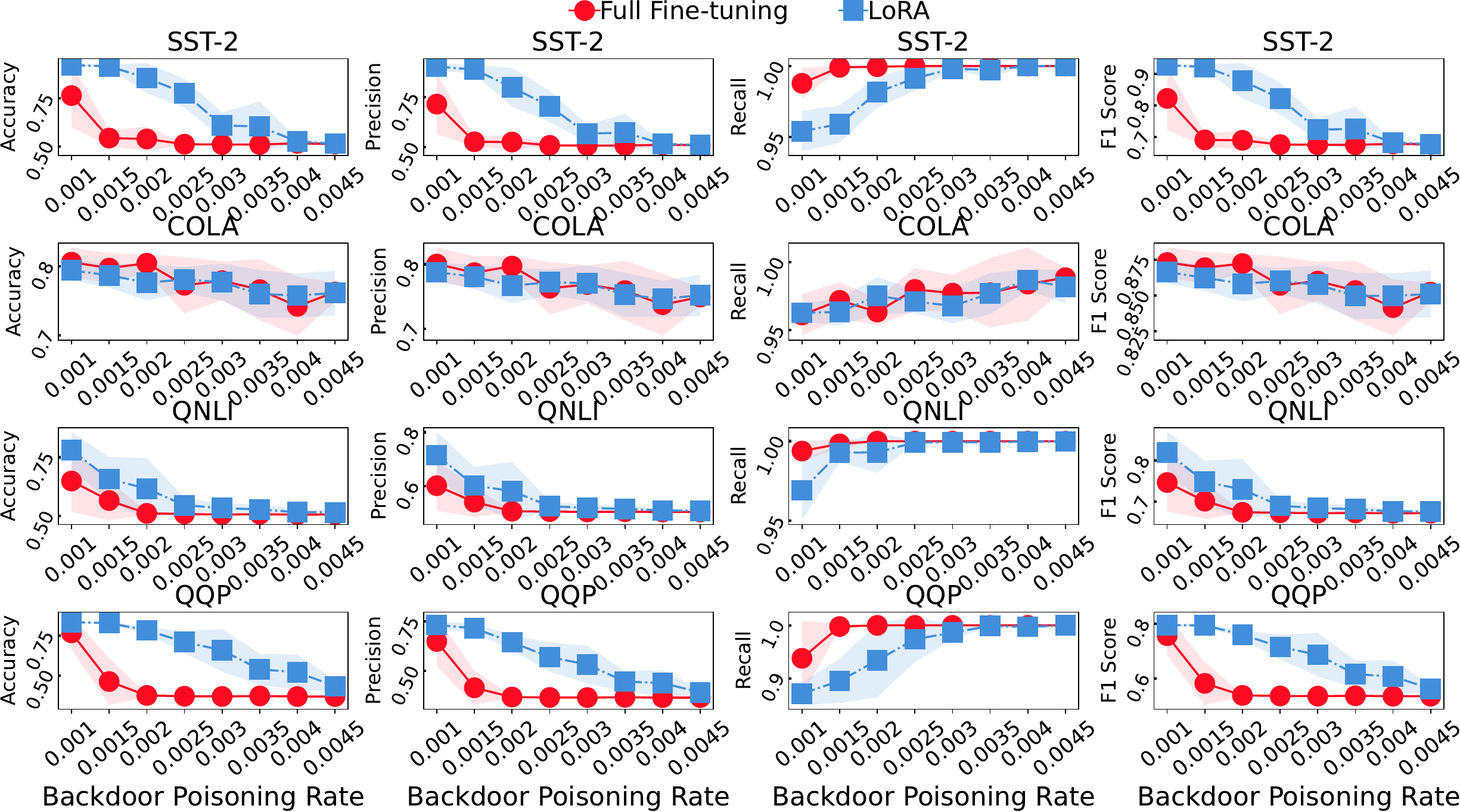}
  \caption{Performance comparison between full fine-tuning and LoRA
    under backdoor poisoning attacks with varying poisoning rates.}\label{fig:full-backdoor-pr}
\end{figure*}

\begin{figure*}[h]
  \centering
  \includegraphics[width=0.92\linewidth]{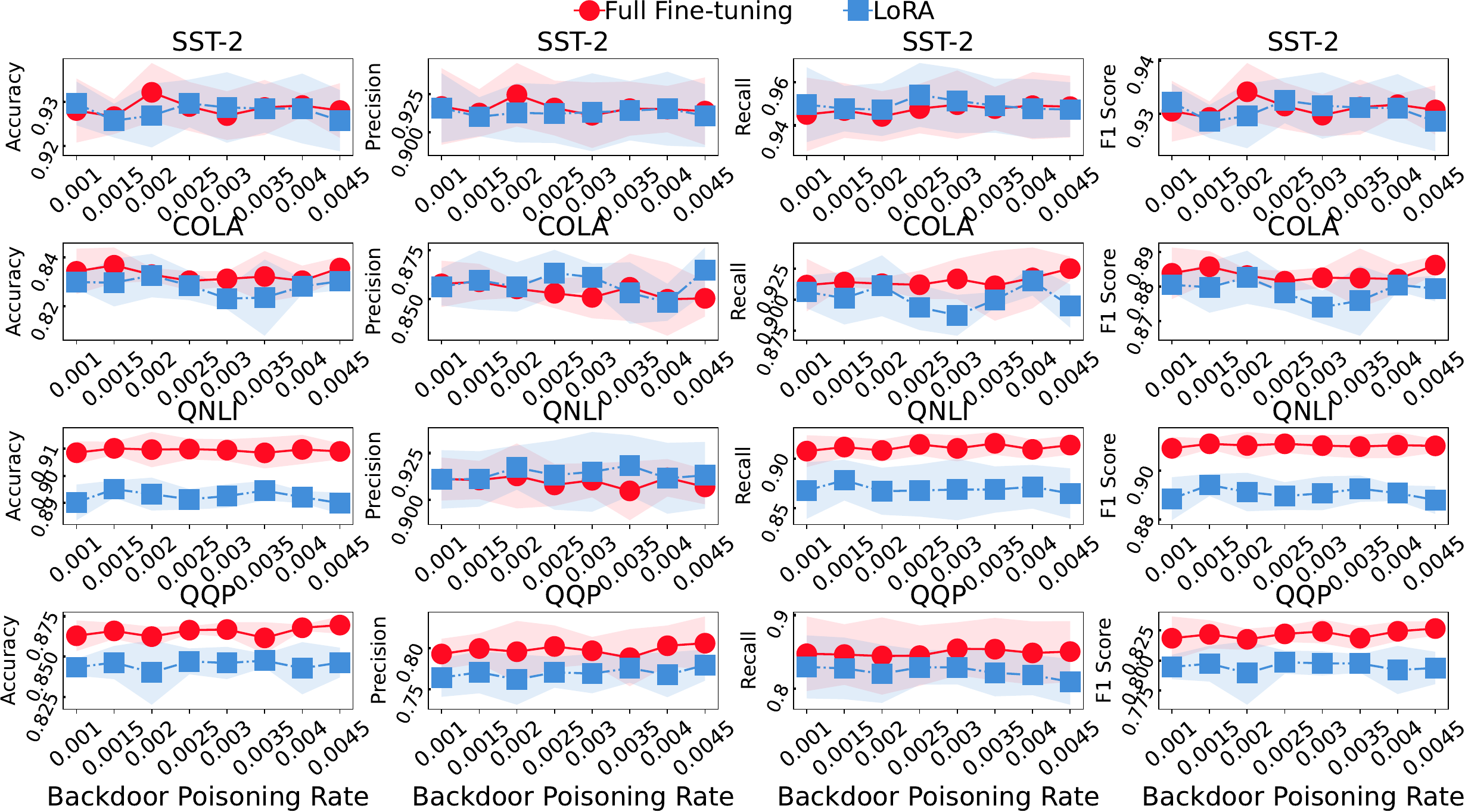}
  \caption{Performance comparison between full fine-tuning and LoRA
    under backdoor poisoning attacks with varying poisoning
    rates. Different from Figure \ref{fig:full-backdoor-pr}, we \textbf{do not
  employ triggers in the test samples}.}\label{fig:full-backdoor-pr-notrigger}
\end{figure*}

\begin{figure*}[h]
  \centering
  \includegraphics[width=0.92\linewidth]{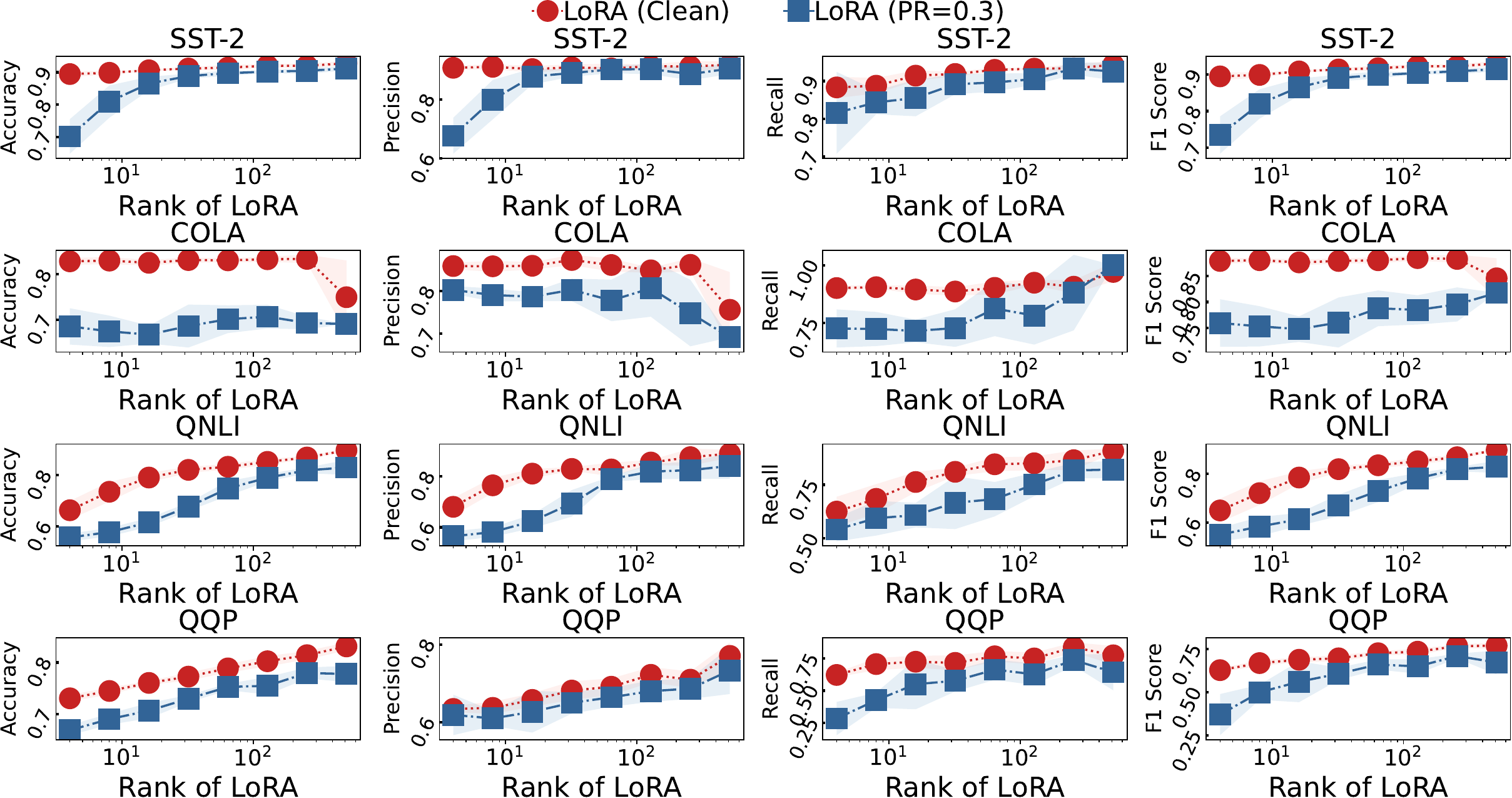}
  \caption{The effect of rank on LoRA's robustness under untargeted
    poisoning attacks.}\label{fig:full-poison-rank}
\end{figure*}

\begin{figure*}[h]
  \centering
  \includegraphics[width=0.92\linewidth]{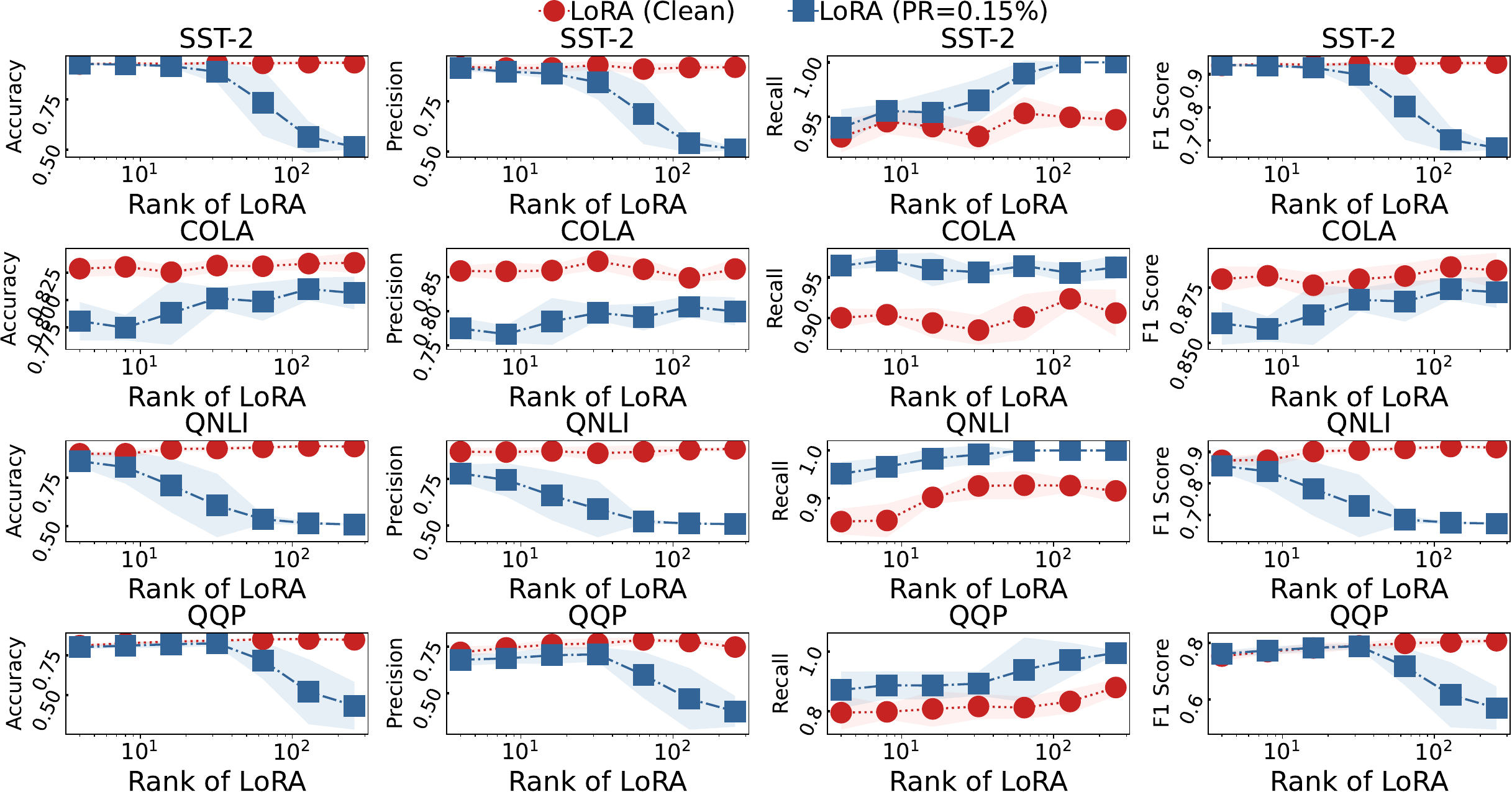}
  \caption{The effect of rank on LoRA's resistance under backdoor
    poisoning attacks.}\label{fig:full-backdoor-rank}
\end{figure*}

\begin{figure*}[h]
  \centering
  \includegraphics[width=0.97\linewidth]{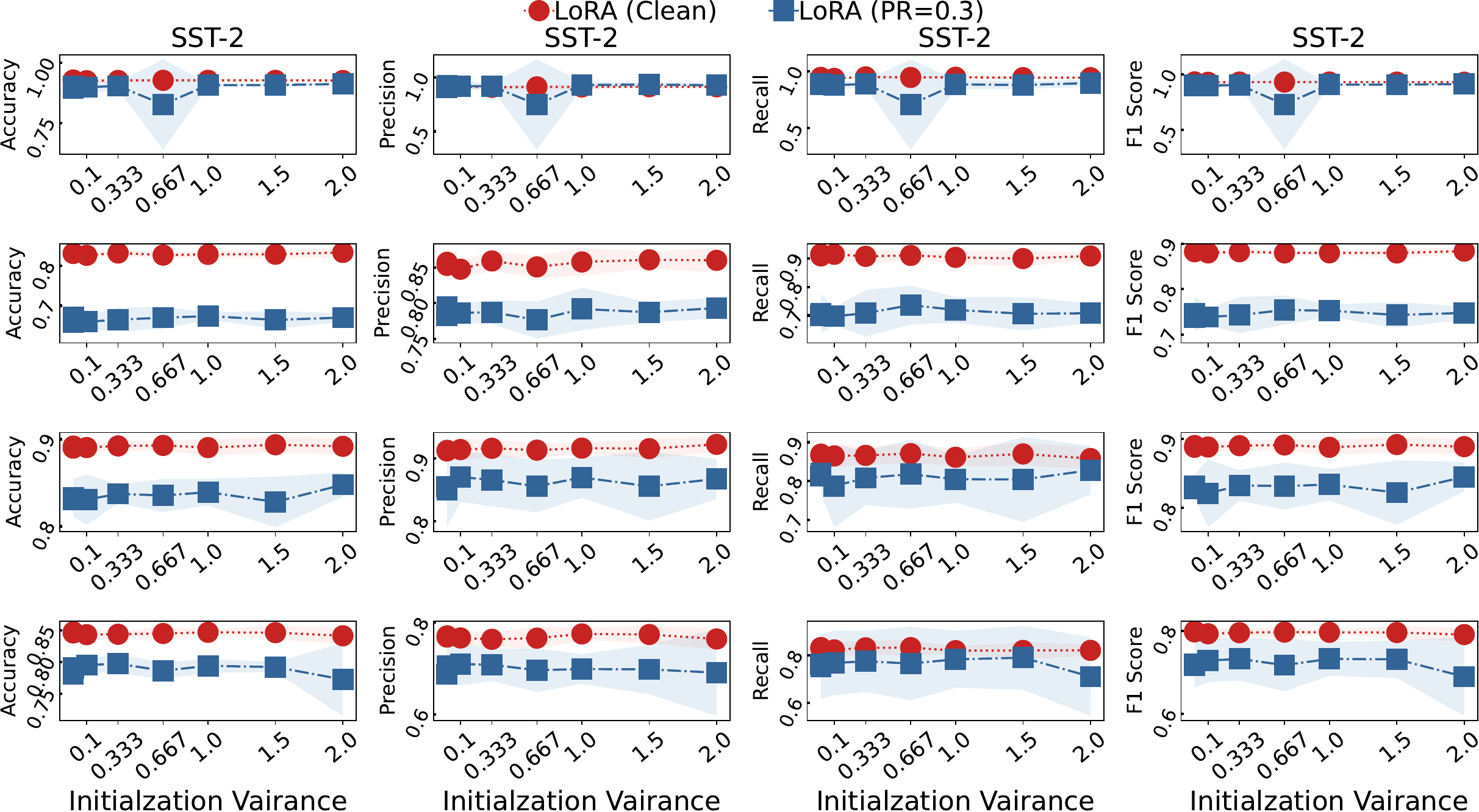}
  \caption{The effect of initialization variance on LoRA’s robustness under untargeted
    poisoning attacks.}\label{fig:full-poison-var}
\end{figure*}

\begin{figure*}[h]
  \centering
  \includegraphics[width=0.97\linewidth]{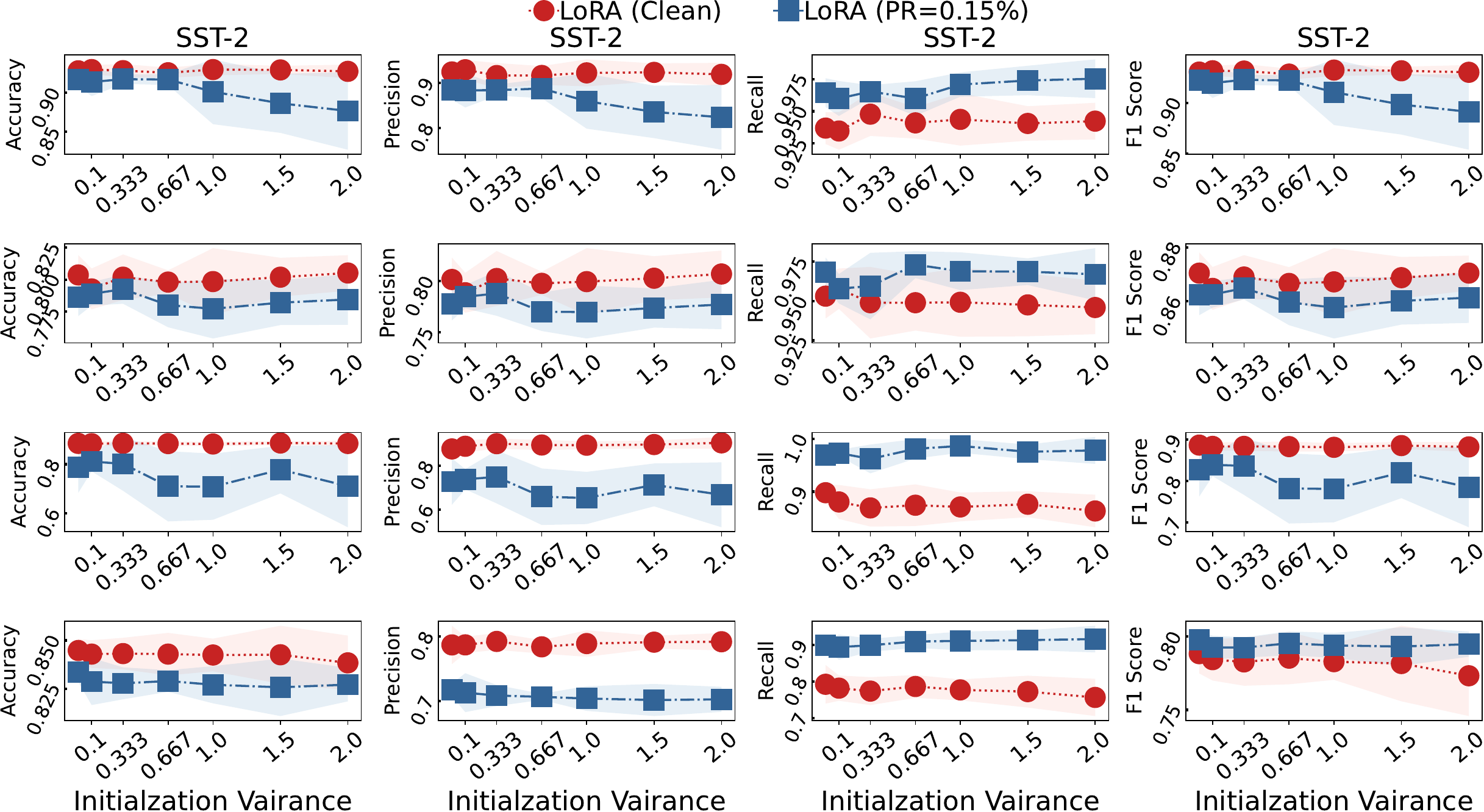}
  \caption{The effect of initialization variance on LoRA's resistance
    under backdoor poisoning attacks.}\label{fig:full-backdoor-var}
\end{figure*}


\end{document}